\newcounter{rowcounter}
\newtheorem{theorem}{Theorem}[section]
\newtheorem{definition}{Definition}
\newenvironment{proof}{\noindent\textit{Proof.}}{\hfill$\square$}
\title{Simplification of Risk Averse POMDPs with Performance Guarantees}
\author{%
  Yaacov Pariente\\
  Department of Applied Mathematics\\
  Technion - Israel Institute of Technology\\
  \texttt{yaacovp@campus.technion.ac.il} \\
  \And
  Vadim Indelman\\
  Department of Aerospace Engineering\\
  Technion - Israel Institute of Technology\\
  \texttt{vadim.indelman@technion.ac.il} \\
}
\begin{document}

\maketitle

\begin{abstract}
	Risk averse decision making under uncertainty in partially observable domains is a fundamental problem in AI and essential for reliable autonomous agents. In our case, the problem is modeled using partially observable Markov decision processes (POMDPs), when the value function is the conditional value at risk (CVaR) of the return. Calculating an optimal solution for POMDPs is computationally intractable in general.
	In this work we develop a simplification framework to speedup the evaluation of the value function, while providing performance guarantees. We consider as simplification  a computationally cheaper belief-MDP transition model, that can correspond, e.g., to cheaper observation or transition models. Our contributions include general bounds for CVaR that allow bounding the CVaR of a random variable X, using a random variable Y, by assuming bounds between their cumulative distributions. We then derive bounds for the CVaR value function in a POMDP setting, and show how to bound the value function using the computationally cheaper belief-MDP transition model and	without accessing the computationally expensive model in real-time. Then, we provide theoretical performance guarantees for the estimated bounds. Our results apply for a general simplification of a belief-MDP transition model and support simplification of both the observation and state transition models simultaneously.
\end{abstract}

\section{Introduction}
Autonomous agents gained significant attention across various domains, including healthcare, assistive care, education, industrial scenarios, and decentralized ledger technologies. These agents, ranging from firefighter UAVs to multi-agent trading systems, are designed to operate independently, making decisions and taking actions without direct human intervention. Ensuring that autonomous agents adhere to safe operation in their environments is crucial for real-world deployment. Partial observability characterizes numerous problems, wherein agents lack direct access to the state. In such contexts, ensuring safe autonomous decision-making mandates the adoption of risk-averse POMDP methodologies, incorporating an appropriate risk metric.

Conditional value at risk (CVaR) \citep{rockafellar2000optimization} is a prominent risk measure extensively studied in various fields. A valuable characteristic of CVaR lies in its dual representation \citep{artzner1999coherent}, enabling the interpretation of CVaR as the worst-case expectation of the cost \citep{chow2015risk}. Hence, unlike traditional measures like Value-at-Risk (VaR) that focus solely on the probability of extreme losses, CVaR provides a more comprehensive view by considering the severity of these potential losses. This characteristic makes CVaR particularly useful for decision-makers who are concerned not only with the likelihood of adverse outcomes but also with their potential impact. Furthermore, CVaR is a coherent risk measure, meaning it satisfies certain desirable properties such as sub-additivity and homogeneity, ensuring consistency in risk assessment across different portfolios and time periods \citep{artzner1999coherent}. Available CVaR estimators also have probabilistic guarantees for their deviation from the theoretical CVaR \citep{brown2007large,pmlr-v97-thomas19a}, which enhances its reliability in practice.

Risk can be integrated into planning under uncertainty through various methodologies, such as chance constraints \citep{ono2015chance}, exponential utility functions \citep{koenig1994risk}, distributional robustness \citep{xu2010distributionally} and quantile regression \citep{dabney2018distributional}. Risk measures are functions from a cost random variable to a real number, and should satisfy certain axioms in order to be used in practice \citep{majumdar2020should}. Coherent risk measures satisfy these desirable axioms, thus establishing CVaR as a significant risk metric. General coherent risk measures were used as objectives in POMDPs, constrained MDPs and shortest path problems \citep{ahmadi2021constrained, ahmadi2020risk, ahmadi2021risk, dixit2023risk}, where these results include CVaR as a special case. CVaR was incorporated to MDPs by defining the value function as the CVaR of the return \citep{chow2014algorithms, chow2015risk}, where in \citep{chow2015risk} CVaR MDP was solved using value iteration while providing error guarantees.

Simplification in POMDPs is employed to mitigate computational burden during real-time deployment of POMDP policies, as POMDPs are hard to solve \citep{Papadimitriou1987TheCO}. The term simplification refers to a replacement of any component of a POMDP with a computationally cheaper alternative while providing formal performance guarantees on planning performance. Simplification of the observation model in POMDPs was studied in \citep{levyehudi2023simplifying}, wherein the observation model was replaced with a computationally less expensive alternative, while deriving finite-sample convergence guarantees. \citep{barenboim2024online} considered  simplification of the state and observation spaces, provided deterministic guarantees, and demonstrated how to integrate these guarantees into state-of-the-art solvers. \citep{Zhitnikov22ai} considered simplification of belief-depended rewards and provided deterministic guarantees for Value at Risk.

In this paper we develop a simplification framework with performance guarantees to speed up the evaluation of the value function in risk averse POMDP, when the value function is the CVaR of the return. As far as we know, we are the first to study simplification in this setting. The main contributions of this paper are
\begin{itemize}
	\item Bounds for the CVaR of a random variable X, using a random variable Y, by assuming bounds between the CDFs of the random variables.
	
	\item Lower and upper bounds for the theoretical value function, that utilize the value function derived from a simplified belief-MDP transition model.

	\item Probabilistic guarantees for the deviation between the theoretical value function and its estimated bounds that are computed using the simplified belief-MDP transition model.
\end{itemize}


\section{Preliminaries}
\subsection{Partially Observable Markov Decision Process}\label{sec:POMDP_preliminaries}
A finite horizon POMDP is defined as a tuple $(X,A,Z,T,O,c, b_0)$, where $X,A,Z$ are the state, action and observation spaces respectively. $T(x_{t+1}|x_t,a_t) \triangleq P(x_{t+1}|x_t,a_t)$ is the probability to transition from state $x_t$ to $x_{t+1}$ by taking the action $a_t$. The observation density function $O(z_t|x_t) \triangleq P(z_t|x_t)$ is the probability of observing $z_t$, given the true state is $x_t$. Let $B$ be the set of all beliefs, and define the cost function by $c:B\times A \rightarrow \mathbb{R}$.

The agent views history $H_t \triangleq \{z_{1:t},a_{0:t-1},b_0\}$ and maintains a distribution over the states given the history which is called a belief. The belief $b(x_t) \triangleq P(x_t|H_t)$ for $x_t\in X$, and can be expressed recursively by the following equation $b(x_t) =\eta_tP(z_t|x_t)\int_{x_{t-1}\in X}P(x_t|x_{t-1},a_{t-1})b(x_{t-1})dx_{t-1}$, where $\eta_t$ is a normalization constant. For simplicity, this recursive relation between beliefs would be denoted by $b_t=\psi(b_{t-1}, a_{t-1}, z_t)$, where $b_t\triangleq b(x_t)$.

A policy $a_t=\pi_t(b_t)$ is a mapping from a belief to an action at time t. The cost of action $a_t$ after seeing belief $b_t$ is $c(b_t,a_t) \triangleq E_{x \sim b_t}(c_x(x,a_t))$ such that $|c_x(x,a_t)|\leq R_{max}$. The cost for the finite horizon  $T\in\mathbb{N}$, also known as the return, is $R_{t:T}:=\sum_{\tau=t}^T c(b_\tau,a_\tau)$, which is a measure of the agent’s success at time t.

The value function that is defined with respect to the cost until time horizon T
\begin{equation}\label{Eq:regular_V_function}
	V^\pi (b_k)\triangleq\mathbb{E}[R_{k:T}|b_k,\pi]=\sum_{t=k}^T \mathbb{E}[c(b_t,a_t)|b_k,\pi],
\end{equation}
and the Q function is 
\begin{equation}\label{Eq:regular_Q_function}
	Q^\pi (b_k,a_k) \triangleq \mathbb{E}_{z_{k+1}}[c(b_k,a_k)+V^\pi(b_{k+1})|b_k,a_k].
\end{equation}

A POMDP is a belief-MDP, which is an MDP with belief states. Its transition function can be computed by iterating over observations and states as follows 
\begin{equation}\label{Eq:belief_transition_distribution_}
	\begin{aligned}
		P(b_t|b_{t-1},a_{t-1})=
		\int_{z_{t}\in Z} \!\!\!\!\!\!\!\! P(b_{t}|b_{t-1},a_{t-1},z_{t})
		\int_{x_t\in X} \!\!\!\!\!\!\!\! P(z_t|x_t)
		\int_{x_{t-1}\in X} \!\!\!\!\!\!\!\! \!\!\!\!\!\!P(x_t|a_{t-1},x_{t-1})b_{t-1} dz_t dx_{t-1:t}.
	\end{aligned}
\end{equation}	
For completeness, the proof is available in the Appendix \ref{sec:POMDP_preliminaries}.

\subsection{Conditional Value-at-Risk}\label{sec:cvar_preliminaries}
Let $X$ be a random variable defined on a probability space $(\Omega,F,P)$, such that $E|X|<\infty$ and $F(x) \triangleq P(X\leq x)$. The value at risk at confidence level $\alpha \in (0,1)$ is the $1-\alpha$ quantile of X, i.e $VaR_\alpha (X) \triangleq \sup\{x\in \mathbb{R}:F(x) \leq 1-\alpha\}$. The conditional value at risk (CVaR) at confidence level $\alpha$ is defined as \citep{rockafellar2000optimization}
\begin{equation}
	CVaR_\alpha(X):=\inf_{w\in \mathbb{R}}\{w+\frac{1}{\alpha}\mathbb{E}[(X-w)^+]|w\in \mathbb{R}\},
\end{equation}
where $(x)^+=max(x,0)$. For a smooth $F$, it holds that \citep{pflug2000some}
\begin{equation}\label{def:cvar_as_conditional_expectation}
	CVaR_\alpha(X)=\mathbb{E}[X|X>VaR_\alpha(X)]=\frac{1}{\alpha}\int_{1-\alpha}^1 F^{-1}(v)dv.
\end{equation}
Let $X_i \overset{\text{iid}}{\sim} F$ for $i\in\{1,\dots,n\}$. Denote by
\begin{equation}\label{eq:brown_cvar_estimator}
	\hat{C}_\alpha(X) \triangleq \hat{C}_\alpha(\{X_i\}^{n}_{i=1}) \triangleq \inf_{x\in \mathbb{R}} \Bigl{\{}x+\frac{1}{n\alpha} \sum_{i=1}^n(X_i-x)^+\Bigr{\}}
\end{equation}
the estimate of $CVaR_\alpha(X)$ \citep{brown2007large}. The following two inequalities that
bound the deviation of the estimated CVaR and the true CVaR with high probability, were proved in \citep{brown2007large}. 
\begin{theorem}\label{thm:brown_bounds}
	If $\text{supp}(X) \subseteq [a, b]$ and $X$ has a continuous distribution function, then for any $\delta \in (0, 1]$, 
	\begin{equation}
		P\Bigl{(}CVaR(X)-\hat{C}(X)>(b-a)\sqrt{\frac{5ln(3/\delta)}{\alpha n}}\Bigr{)}\leq \delta,
	\end{equation}
	\begin{equation}
		P\Bigl{(}CVaR(X)-\hat{C}(X)<\frac{(b-a)}{\alpha}\sqrt{\frac{ln(1/\delta)}{2n}}\Bigr{)}\leq \delta.
	\end{equation}
\end{theorem}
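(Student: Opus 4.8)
The plan is to reduce both tail bounds to the concentration of bounded empirical averages by exploiting the infimum (variational) representations of the true and estimated CVaR. Write $g(x):=x+\frac{1}{\alpha}\mathbb{E}[(X-x)^+]$ and $\hat g(x):=x+\frac{1}{n\alpha}\sum_{i=1}^n (X_i-x)^+$, so that $CVaR_\alpha(X)=\min_x g(x)$ is attained at $x^{\star}=VaR_\alpha(X)$, while $\hat C_\alpha(X)=\min_x \hat g(x)$ is attained at some data-dependent point $\hat x$. Both functions are convex in $x$, and $\hat g$ is pointwise unbiased, $\mathbb{E}[\hat g(x)]=g(x)$. The entire argument rests on comparing these two convex functions and, crucially, their deterministic versus random minimizers.

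For the easier inequality I would simply evaluate the empirical objective at the true optimizer. Since $\hat C_\alpha(X)=\min_x \hat g(x)\le \hat g(x^{\star})$, we obtain
\[
\hat C_\alpha(X)-CVaR_\alpha(X)\le \hat g(x^{\star})-g(x^{\star})=\frac{1}{\alpha}\Bigl(\frac{1}{n}\sum_{i=1}^n W_i-\mathbb{E}[W]\Bigr),
\]
where $W_i:=(X_i-x^{\star})^+$ are i.i.d. and take values in $[0,b-a]$ (using $x^{\star}\ge a$ and $X_i\le b$). A single application of Hoeffding's inequality to this centered, bounded average, together with the $1/\alpha$ scaling, yields a tail of order $\exp(-2n\alpha^2 t^2/(b-a)^2)$; inverting $\delta=\exp(\cdot)$ gives the threshold $\frac{b-a}{\alpha}\sqrt{\ln(1/\delta)/(2n)}$, matching the rate of the second inequality.

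The upper-deviation bound --- controlling how far below the truth the estimate can fall --- is the main obstacle, because $\hat C_\alpha(X)=\hat g(\hat x)$ is evaluated at the random minimizer $\hat x$, so a pointwise Hoeffding bound at a fixed threshold no longer applies. The key step is to use that $x^{\star}$ minimizes $g$, giving
\[
CVaR_\alpha(X)-\hat C_\alpha(X)=g(x^{\star})-\hat g(\hat x)\le g(\hat x)-\hat g(\hat x)\le \sup_{x}\,[g(x)-\hat g(x)],
\]
which turns the problem into a \emph{uniform} deviation over all thresholds $x$. Via the layer-cake identity $\mathbb{E}[(X-x)^+]=\int_x^b (1-F(u))\,du$ and its empirical counterpart with the empirical CDF $F_n$, this difference rewrites as $g(x)-\hat g(x)=\frac{1}{\alpha}\int_x^b (F_n(u)-F(u))\,du$. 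A crude bound by $\frac{b-a}{\alpha}\|F_n-F\|_\infty$ combined with the Dvoretzky--Kiefer--Wolfowitz inequality already yields a uniform rate, but only with the suboptimal dependence $1/\alpha$; obtaining the sharper $1/\sqrt{\alpha}$ appearing in the first inequality requires localizing the analysis to the upper $\alpha$-tail, where the integrated mass $\int_{x^{\star}}^b(1-F)$ is itself of order $\alpha$. Concretely, I would discretize the relevant threshold range into a finite grid, apply Hoeffding at each grid point, take a union bound, and then optimize the grid spacing against the union-bound cardinality. This tail-localized trade-off is precisely what produces the explicit constant $5$ and the factor $3$ inside the logarithm, and it is the only genuinely delicate part of the argument; the remaining steps are routine.
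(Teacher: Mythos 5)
The paper itself contains no proof of this statement: Theorem \ref{thm:brown_bounds} is imported verbatim from \citep{brown2007large}, so your attempt can only be judged on its own merits and against Brown's original argument. Your treatment of the second inequality is correct, and it is in fact exactly Brown's proof: evaluate the empirical objective at the deterministic minimizer $x^{\star}=VaR_\alpha(X)$, note $(X_i-x^{\star})^+\in[0,b-a]$, and apply one-sided Hoeffding with the $1/\alpha$ scaling to get the threshold $\frac{b-a}{\alpha}\sqrt{\ln(1/\delta)/(2n)}$. (Your derivation also exposes a sign typo in the paper's statement of the second inequality: the event should be $CVaR(X)-\hat{C}(X)<-\frac{b-a}{\alpha}\sqrt{\ln(1/\delta)/(2n)}$, i.e., a bound on the overestimation $\hat{C}-CVaR$, which is what you actually prove.)

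The first inequality is where your plan has a genuine gap. The reduction $CVaR_\alpha(X)-\hat{C}_\alpha(X)\le \sup_x[g(x)-\hat g(x)]$ and the layer-cake rewriting are fine, and you correctly diagnose that DKW only gives the rate $\frac{b-a}{\alpha}\sqrt{\ln(1/\delta)/n}$. But your proposed fix --- a grid of $N$ thresholds, \emph{Hoeffding} at each grid point, a union bound, and optimization of $N$ --- cannot produce the $1/\sqrt{\alpha}$ dependence. Hoeffding is variance-insensitive: at a tail threshold $x\ge x^{\star}$ the summands $(X_i-x)^+$ still have range up to $b-x^{\star}$, which in general is comparable to $b-a$ (the top $\alpha$ mass of $X$ can span nearly the whole interval), so each grid point contributes a deviation of order $(b-a)\sqrt{\ln(N/\delta)/n}$, and after dividing by $\alpha$ you are back at the DKW rate; tuning $N$ only trades logarithmic factors, never the $\alpha$-dependence. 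What tail localization actually buys is a small \emph{second moment}, not a small range: for $x\ge x^{\star}$ one has $\mathbb{E}[((X-x)^+)^2]\le (b-a)^2\,P(X>x)\le (b-a)^2\alpha$, and it takes a variance-sensitive inequality --- Bernstein, or equivalently multiplicative Chernoff for the binomial tail counts $nF_n(u)$, which is what Brown uses --- to convert this into a deviation of order $(b-a)\sqrt{\alpha\ln(1/\delta)/n}$, hence $(b-a)\sqrt{\ln(1/\delta)/(\alpha n)}$ after the $1/\alpha$ scaling. You additionally need an explicit localization event ensuring the random minimizer $\hat x$ (the empirical VaR) does not fall far below $x^{\star}$; otherwise $P(X>x)$ is no longer $O(\alpha)$ and the variance gain evaporates. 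The factor $3$ inside the logarithm comes from a union bound over a fixed small number of such events (quantile localization plus tail deviations), and the constant $5$ from the resulting Bernstein-type trade-off --- not from optimizing grid cardinality. With Bernstein (or binomial Chernoff) in place of Hoeffding and the localization of $\hat x$ made explicit, your outline can be completed; as written, the step you claim produces the constant $5$ and the factor $3$ would fail.
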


\eqref{eq:brown_cvar_estimator} can be expressed as
\begin{equation}\label{eq:cvar_estimator_sorted_sample}
	\hat{C}_\alpha(X)=X^{(n)}-\frac{1}{\alpha}\sum_{i=1}^n (X^{(i)}-X^{(i-1)})\Bigl{(} \frac{i}{n}-(1-\alpha) \Bigr{)}^+,
\end{equation}
where $X^{(i)}$ is the $i$th order statistic of $X_1,\dots,X_n$ in ascending order \citep{pmlr-v97-thomas19a}. More remarks on CVaR can be found in Appendix \ref{sec:cvar_remarks}.

\section{Problem Formulation}\label{sec:problem_formulation}
We consider cases in which the belief-MDP transition model is simplified for computational reasons in a risk-averse setting. Let $\Omega_B$ be the sample space on which the belief random variable is defined, $\Omega=\Omega_B^{T-k+1}, F=2^\Omega$, and define the original and simplified belief probability measures by $P:F\rightarrow [0,1]$ and $P_s:F\rightarrow [0,1]$ respectively. The CDF of $R_{k:T}$ is defined as follows 
\begin{align}
	P(R_{k:T}\leq l|b_k,\pi) = \int_{b_{k+1:T} \in B^{T-k}} P(R_{k:T}\leq l|b_{k:T},\pi)\prod_{i=k+1}^T P(b_i|b_{i-1},\pi)db_{k+1:T}.
\end{align}	 
The simplified CDF of $R_{k:T}$ can be defined using only a simplified belief transition model, as shown in the next theorem.
\begin{theorem}
	\begin{align}
		P_s(R_{k:T}\leq l|b_k,\pi) = \int_{b_{k+1:T} \in B^{T-k}} P(R_{k:T}\leq l|b_{k:T},\pi)\prod_{i=k+1}^T P_s(b_i|b_{i-1},\pi)db_{k+1:T}.
		\label{eq:SimplifiedCDFReturn}
	\end{align}
\end{theorem}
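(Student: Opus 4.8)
The plan is to derive \eqref{eq:SimplifiedCDFReturn} directly from the definition of the simplified measure $P_s$ as the law over belief trajectories $b_{k:T}$ induced by the cheaper transition kernel $P_s(b_i|b_{i-1},\pi)$, mirroring the derivation of the exact CDF stated just before the theorem. The crucial observation is that, conditioned on an entire belief trajectory $b_{k:T}$ together with the policy $\pi$, the return $R_{k:T}=\sum_{\tau=k}^T c(b_\tau,\pi_\tau(b_\tau))$ is a deterministic functional of the trajectory: the actions are fixed by $a_\tau=\pi_\tau(b_\tau)$ and the costs $c(b_\tau,a_\tau)$ depend only on $(b_\tau,a_\tau)$. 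Hence the inner conditional factor collapses to the indicator $\mathds{1}\{\sum_{\tau=k}^T c(b_\tau,\pi_\tau(b_\tau))\leq l\}$, which involves only the cost function and the policy and is therefore identical whether computed under $P$ or under $P_s$.

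First I would invoke the law of total probability under $P_s$, marginalizing over the intermediate beliefs $b_{k+1:T}$ while holding the initial belief $b_k$ fixed, to write $P_s(R_{k:T}\leq l|b_k,\pi)$ as the integral over $B^{T-k}$ of the product of the conditional $P_s(R_{k:T}\leq l|b_{k:T},\pi)$ and the joint trajectory density $P_s(b_{k+1:T}|b_k,\pi)$. Next I would use the Markov property of the belief-MDP, namely that the next belief depends on the past only through the current belief and action, which the simplified kernel inherits by construction, to factorize $P_s(b_{k+1:T}|b_k,\pi)=\prod_{i=k+1}^T P_s(b_i|b_{i-1},\pi)$. Finally I would substitute the model-independent inner conditional established above, replacing $P_s(R_{k:T}\leq l|b_{k:T},\pi)$ by $P(R_{k:T}\leq l|b_{k:T},\pi)$, which reproduces exactly \eqref{eq:SimplifiedCDFReturn}.

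The main obstacle is the rigorous justification of this last substitution, i.e.\ that the conditional law of the return given the full trajectory does not depend on which transition model generated that trajectory. This is where the structural assumption of the framework is used: the simplification perturbs only the belief-transition kernel, leaving the cost function $c$ and the policy $\pi$ untouched, so the map from a realized trajectory to its return is the same object in both probability spaces $(\Omega,F,P)$ and $(\Omega,F,P_s)$. Once this is in place, the remaining steps are the standard marginalization-and-factorization argument already used for the exact CDF, now carried out with $P_s$ substituted for $P$ only in the transition factors and not in the inner conditional.
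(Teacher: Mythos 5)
Your proposal is correct and follows essentially the same route as the paper's proof: marginalize over the belief trajectory under $P_s$, factorize the joint trajectory density via the Markov property of the belief-MDP, and observe that conditioned on $b_{k:T}$ and $\pi$ the return is deterministic, so the inner conditional is the indicator $1_{R_{k:T}\leq l}$ and hence identical under $P$ and $P_s$. Your added emphasis on rigorously justifying the substitution (the cost and policy being untouched by the simplification) is a slightly more explicit articulation of what the paper compresses into the remark that $R_{k:T}$ is constant given the beliefs, but it is the same argument.
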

The proof can be found in Appendix \ref{sec:problem_formulation}. In \eqref{Eq:belief_transition_distribution_} we see that the belief-MDP transition model is defined using the state transition and observation models. Hence, simplification of the belief-MDP transition model is general enough to include simplifications of state and observation models. 

In order to optimize a risk averse setting, instead of optimizing the expectation of the return like in \eqref{Eq:regular_V_function} and \eqref{Eq:regular_Q_function}, we define the value function to be the CVaR of the return as follows.
Let $\alpha \in (0,1)$, and define the value and Q functions by
\begin{equation}\label{Eq:cvar_value_function_def}
	\begin{aligned}
		&V^{\pi}_P(b_k,\alpha) \! \triangleq \! CVaR_\alpha^P[\sum_{t=k}^T \! c(b_t, \pi(b_t))|b_k, \pi] \!
		\triangleq \!\! \frac{1}{\alpha} \!\! \int\limits_{1-\alpha}^1 \!\! \sup\{z\in Img(R_{k:T}) \! : \! P(R_{k:T}
		\leq z|b_k,\pi)\leq \tau\}d\tau,
	\end{aligned}
\end{equation}
\begin{equation}\label{Eq:cvar_Q_function_def}
	\begin{aligned}
		Q^{\pi}_P(b_k, a_k, \alpha) &\triangleq CVaR_\alpha^P[c(b_k, a_k) 
		+ \sum_{t=k+1}^T c(b_t, \pi(b_t))|b_k, \pi] \\ 
		&\triangleq \frac{1}{\alpha} \int\limits_{1-\alpha}^1 \sup\{z\in Img(R_{k:T}):
		P(c(b_k,a_k)+R_{k+1:T}\leq z|b_k,a_k, \pi)\leq \tau\}d\tau.
	\end{aligned}
\end{equation}
Our goal is to find lower bound $L_s$ and upper bound $U_s$ that depend only on the simplified CDF of the return \eqref{eq:SimplifiedCDFReturn}. That is, $L_s\leq Q^{\pi}_P(b_k, a_k, \alpha)\leq U_s$. The V and Q functions that are computed with respect to the simplified belief transition model are denoted by $V_{P_s}^\pi (b_k,\alpha)$ and $Q_{P_s}^\pi (b_k,a_k,\alpha)$ respectively.

\section{Bounds for the V and Q functions}
In this section we bound the V and Q functions in \eqref{Eq:cvar_value_function_def} and \eqref{Eq:cvar_Q_function_def}. In section \ref{sec:cvar_bounds} we derive theoretical CVaR bounds that lay the mathematical basis for the derivation of bounds in section \ref{sec:bounds_for_simplified_belief_model}.

\subsection{Theoretical CVaR bounds}\label{sec:cvar_bounds}
Our approach is to derive bounds for $CVaR_\alpha(X)$ using some random variable Y, where we assume that the difference between the CDFs of X and Y is bounded.

\begin{figure}
	\subfloat[]{\includegraphics[width=0.5\textwidth]{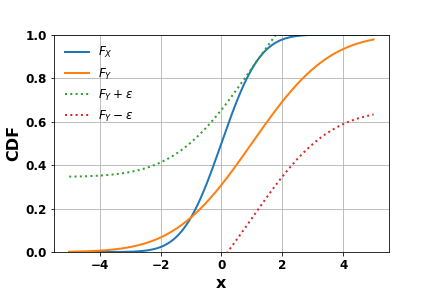}\label{fig:eps_bound}}
	\subfloat[]{\includegraphics[width=0.5\textwidth]{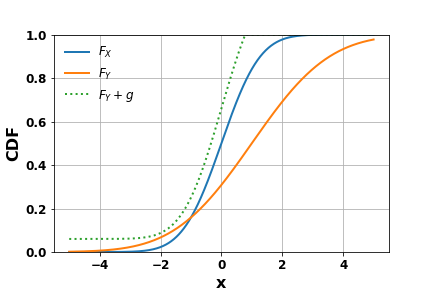}\label{fig:g_bound}}
	\caption{(a) and (b) are illustrations of the bounds on $F_X(x)$ from Theorems \ref{thm:cvar_bound} and \ref{thm:tight_cvar_lower_bound}. Since $g$ from Theorem \ref{thm:tight_cvar_lower_bound} depends on $x$, the bound $F_Y(x)+g(x)$ in (b) is tighter than $F_Y(x)+\epsilon$ in (a).}
\end{figure}
\begin{theorem}\label{thm:cvar_bound}
	Let $X$ and $Y$ be random variables. If there exists $\epsilon \geq 0$ such that $||F_{X} - F_{Y}||_{\infty}\leq \epsilon$, then \begin{enumerate}
		\item If $\epsilon < \alpha$ then $CVaR_\alpha(X)\leq \frac{\alpha-\epsilon}{\alpha}CVaR_{\alpha-\epsilon}(Y) + \frac{\epsilon}{\alpha} \times sup \quad Img(Y)$
		
		\item If $\epsilon \geq \alpha$ then $CVaR_\alpha(X)\leq \sup Img(Y)$
		
		\item If $\epsilon + \alpha < 1$ then $CVaR_\alpha(X)\geq \frac{\alpha+\epsilon}{\alpha}CVaR_{\alpha+\epsilon}(Y)-\frac{\epsilon}{\alpha}CVaR_{\epsilon}(Y)$
		
		\item If $\epsilon + \alpha \geq 1$ then $CVaR_\alpha(X) \geq \frac{1}{\alpha}[(\alpha+\epsilon-1)\inf Img(Y)+ E[Y]-\epsilon CVaR_{\epsilon}(Y)]$
	\end{enumerate}
\end{theorem}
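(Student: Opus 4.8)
The plan is to work entirely with the quantile (inverse-CDF) representation of CVaR from \eqref{def:cvar_as_conditional_expectation}, namely $CVaR_\alpha(X)=\frac1\alpha\int_{1-\alpha}^1 F_X^{-1}(v)\,dv$ with the generalized inverse $F_X^{-1}(v)=\sup\{x:F_X(x)\le v\}$, extended by the conventions $F_Y^{-1}(u)=\sup Img(Y)$ for $u\ge1$ and $F_Y^{-1}(u)=\inf Img(Y)$ for $u\le0$. The conceptual heart of the argument is a single lemma translating the horizontal ($\infty$-norm) closeness of the CDFs into a horizontal shift of the quantile functions: from $\|F_X-F_Y\|_\infty\le\epsilon$ I would prove $F_Y^{-1}(v-\epsilon)\le F_X^{-1}(v)\le F_Y^{-1}(v+\epsilon)$ for every $v$. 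Both inequalities follow from comparing sublevel sets: $F_X\le F_Y+\epsilon$ gives $\{x:F_Y(x)\le v-\epsilon\}\subseteq\{x:F_X(x)\le v\}$ (hence the left inequality after taking suprema), and $F_Y\le F_X+\epsilon$ gives the right inequality symmetrically.

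For the two upper bounds I would insert $F_X^{-1}(v)\le F_Y^{-1}(v+\epsilon)$ into the representation and substitute $u=v+\epsilon$, obtaining $CVaR_\alpha(X)\le\frac1\alpha\int_{1-\alpha+\epsilon}^{1+\epsilon}F_Y^{-1}(u)\,du$. When $\epsilon<\alpha$ I split this at $u=1$: the integral over $[1-\alpha+\epsilon,1]$ equals $(\alpha-\epsilon)CVaR_{\alpha-\epsilon}(Y)$ by the very definition of CVaR at level $\alpha-\epsilon$, while over $[1,1+\epsilon]$ the integrand is constantly $\sup Img(Y)$, contributing $\epsilon\sup Img(Y)$; dividing by $\alpha$ gives case 1. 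When $\epsilon\ge\alpha$ the lower limit $1-\alpha+\epsilon$ already exceeds $1$, so the integrand is $\sup Img(Y)$ throughout an interval of length $\alpha$, which yields exactly $\sup Img(Y)$ and hence case 2.

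The two lower bounds are dual: I would insert $F_X^{-1}(v)\ge F_Y^{-1}(v-\epsilon)$ and substitute $u=v-\epsilon$ to get $CVaR_\alpha(X)\ge\frac1\alpha\int_{1-\alpha-\epsilon}^{1-\epsilon}F_Y^{-1}(u)\,du$. If $\alpha+\epsilon<1$ I would write this integral as $\int_{1-\alpha-\epsilon}^{1}-\int_{1-\epsilon}^{1}=(\alpha+\epsilon)CVaR_{\alpha+\epsilon}(Y)-\epsilon CVaR_\epsilon(Y)$, giving case 3 after division by $\alpha$. If $\alpha+\epsilon\ge1$ the lower limit drops below $0$, so I split at $u=0$: the part over $[1-\alpha-\epsilon,0]$ has constant integrand $\inf Img(Y)$ and contributes $(\alpha+\epsilon-1)\inf Img(Y)$, while $\int_0^{1-\epsilon}F_Y^{-1}=\int_0^1 F_Y^{-1}-\int_{1-\epsilon}^1 F_Y^{-1}=E[Y]-\epsilon CVaR_\epsilon(Y)$, using that the quantile function integrates to the mean; assembling these gives case 4.

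The routine parts are the substitutions and the CVaR-at-a-shifted-level identities, which rely on the smoothness of $F$ assumed in \eqref{def:cvar_as_conditional_expectation}. I expect the main obstacle to be the careful treatment of the quantile function outside $[0,1]$: the terms $\sup Img(Y)$ and $\inf Img(Y)$ arise precisely from the portions of the shifted integral that leave the unit interval, and turning the conventional values there into valid inequalities for $F_X^{-1}$ requires controlling the extreme values of $X$ by those of $Y$ (in the intended application $X$ and $Y$ are the same return under the two belief measures, so they share the same image and this step is immediate). Making the boundary bookkeeping precise across the four regimes, rather than any single inequality, is where the real care is needed.
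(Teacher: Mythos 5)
Your proposal is correct and takes essentially the same route as the paper's proof (Appendix \ref{sec:cvar_bounds_proofs}): start from the quantile representation $CVaR_\alpha(X)=\frac{1}{\alpha}\int_{1-\alpha}^1\sup\{z:F_X(z)\le\tau\}\,d\tau$, use the sublevel-set containment to get the shifted-quantile inequality $F_Y^{-1}(v-\epsilon)\le F_X^{-1}(v)\le F_Y^{-1}(v+\epsilon)$, change variables, and split the integral at $1$ (resp.\ at $0$) to produce the $\sup Img(Y)$, $\inf Img(Y)$, and $E[Y]=\int_0^1 F_Y^{-1}(u)\,du$ terms. Your closing caveat is in fact sharper than the paper's treatment: the paper silently writes $\sup\{z:F_Y(z)\le 1\}=\sup Img(Y)$ (strictly this supremum is $+\infty$, and replacing it by $\sup Img(Y)$ needs the extremes of $X$ controlled by those of $Y$, e.g.\ a common support bound as in the POMDP application where both returns lie in $[-R_{max}(T-k+1),R_{max}(T-k+1)]$), so your explicit flagging and resolution of this boundary bookkeeping is exactly the right addition.
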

The proof can be found in Appendix \ref{sec:cvar_bounds_proofs}. Theorem \ref{thm:cvar_bound} assumes that the bound is greater than the maximum difference between $F_X$ and $F_Y$ (see Figure \ref{fig:eps_bound}). In Figure \ref{fig:eps_bound} we see that the bound over the CDF (denoted by $\epsilon$ in Theorem \ref{thm:cvar_bound}) is highly conservative around the point $x=-1$. Ideally, the tightest form of bound for $F_X(x)$ is a bound that changes with respect to $x$. That is, $F_X(x)\leq F_Y(x)+g(x)$  for some function $g$ (see Figure \ref{fig:g_bound}).

\begin{theorem}\label{thm:tight_cvar_lower_bound}
	(Tighter CVaR Lower Bound) Let $\alpha \in (0,1)$, $X$ and $Y$ be random variables. Define the a random variable $Y^L$ such that $F_{Y^L}(y) \triangleq min(1, F_Y(y) + g(y))$ for $g:\mathbb{R}\rightarrow [0, \infty)$. Assume 
	$lim_{x \rightarrow -\infty}g(x)=0$, $g$ is continuous from the right and monotonic increasing.
	If $\forall x\in \mathbb{R}, F_X(x)\leq F_Y(x)+g(x)$, then $F_{Y^L}$ is a CDF and $CVaR_\alpha(Y^L)\leq CVaR_\alpha(X).$
\end{theorem}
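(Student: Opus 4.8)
The plan is to reduce the claim to the elementary fact that first-order stochastic dominance is preserved by $CVaR_\alpha$, once we observe that the hypothesis forces the pointwise bound $F_X \leq F_{Y^L}$. Indeed, since $F_X(x) \leq 1$ always and $F_X(x) \leq F_Y(x) + g(x)$ by assumption, we get $F_X(x) \leq \min(1, F_Y(x)+g(x)) = F_{Y^L}(x)$ for every $x \in \mathbb{R}$. Thus $Y^L$ is stochastically dominated by $X$, and the CVaR inequality will follow from monotonicity of the quantile function, leaving us only to confirm that $F_{Y^L}$ is a legitimate CDF.

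First I would verify the four CDF axioms for $F_{Y^L}$, which is where the hypotheses on $g$ are consumed. For monotonicity, $F_Y$ is non-decreasing and $g$ is monotonic increasing, so $F_Y + g$ is non-decreasing, and taking the pointwise minimum with the constant $1$ preserves this. For right-continuity, $F_Y$ is right-continuous (being a CDF) and $g$ is right-continuous by assumption, so $F_Y + g$ is right-continuous, and $\min(1, \cdot)$ of a right-continuous function is right-continuous. For the limit at $-\infty$, we use $F_Y(y) \to 0$ together with the assumption $g(y) \to 0$, whence $F_Y(y)+g(y)\to 0$ and $F_{Y^L}(y) \to 0$. For the limit at $+\infty$, we use $F_Y(y)\to 1$ and $g \geq 0$, so $F_Y(y) \leq F_{Y^L}(y) \leq 1$, and squeezing gives $F_{Y^L}(y)\to 1$.

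Next I would pass from the CDF inequality to a quantile inequality. Writing the upper quantile in the convention of \eqref{Eq:cvar_value_function_def} as $q_W(v) \triangleq \sup\{w : F_W(w) \leq v\}$, the pointwise bound $F_X \leq F_{Y^L}$ gives the set inclusion $\{w : F_{Y^L}(w) \leq v\} \subseteq \{w : F_X(w) \leq v\}$, since $F_{Y^L}(w) \leq v$ forces $F_X(w) \leq F_{Y^L}(w) \leq v$. Taking suprema yields $q_{Y^L}(v) \leq q_X(v)$ for every $v \in (0,1)$. Plugging this into the quantile representation of CVaR from \eqref{def:cvar_as_conditional_expectation}, namely $CVaR_\alpha(W) = \frac{1}{\alpha}\int_{1-\alpha}^{1} q_W(v)\,dv$, and integrating the inequality over $v \in [1-\alpha, 1]$ delivers $CVaR_\alpha(Y^L) \leq CVaR_\alpha(X)$, as required.

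The only slightly delicate point is the verification at $+\infty$, where one must lean on $g \geq 0$ and $F_Y \to 1$ rather than on any limit of $g$ itself. The conceptual heart of the argument, however, is simply that clipping $F_Y + g$ at $1$ converts the hypothesized bound into the clean stochastic-dominance relation $F_X \leq F_{Y^L}$, after which the CVaR inequality is immediate from quantile monotonicity; I therefore anticipate no genuine obstacle, provided the quantile convention is kept consistent with \eqref{Eq:cvar_value_function_def}.
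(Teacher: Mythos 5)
Your proposal is correct and follows essentially the same route as the paper's proof: verify the CDF axioms for $F_{Y^L}$ using the hypotheses on $g$ (monotonicity, right-continuity, the limit at $-\infty$ from $g\to 0$, and a squeeze at $+\infty$), establish the pointwise dominance $F_X \leq F_{Y^L}$ by splitting on whether the clip at $1$ is active, and conclude by monotonicity of CVaR under first-order stochastic dominance. The only cosmetic difference is that the paper cites this last step as the monotonicity axiom of a coherent risk measure, whereas you prove it inline via the set inclusion $\{w : F_{Y^L}(w)\leq v\}\subseteq\{w : F_X(w)\leq v\}$ and integration of the resulting quantile inequality over $[1-\alpha,1]$ --- a valid, self-contained substitute for the citation.
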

The proof can be found in Appendix \ref{sec:cvar_bounds}. In Theorem \ref{thm:tight_cvar_lower_bound} we construct a new CDF $F_{Y^L}$, using $F_Y$ and the $g$ function, that stochastically dominates $F_X$ (see Figure \ref{fig:g_bound}). Then, we get the bound because CVaR is a coherent risk measure \citep{artzner1999coherent}. The definition of a coherent risk measure can be found in Appendix \ref{sec:cvar_remarks}. Theorem \ref{thm:tight_cvar_lower_bound} is formulated considering tighter bounds over the CDF than Theorem \ref{thm:cvar_bound}. Therefore the CVaR bounds in Theorem \ref{thm:tight_cvar_lower_bound} are expected to be tighter than those from  Theorem \ref{thm:cvar_bound} for a wise choice of the function $g$. Both bounds for the CVaR in Theorems \ref{thm:cvar_bound} and \ref{thm:tight_cvar_lower_bound} are CVaR of random variables that we can estimate (see \eqref{eq:cvar_estimator_sorted_sample}) and guarantee their performance (see Theorem \ref{thm:brown_bounds}).

Note that $g$ in Theorem \ref{thm:tight_cvar_lower_bound} is assumed to be monotonic increasing and continuous from the right. Assuming that $|F_X(x)-F_Y(x)|\leq g(x)$ for $g:\mathbb{R}\rightarrow [0, \infty)$ that is not necessarily monotonic or continuous, the most raw form of  CVaR bounds is
\begin{equation}\label{eq:general_g_cvar_upper_bound}
	\begin{aligned}
		CVaR_\alpha(X)=\frac{1}{\alpha}\int\limits_{1-\alpha}^1 \inf\{z\in \mathbb{R}:F_X(z)\geq \tau\}d\tau 
		\geq \frac{1}{\alpha}\int\limits_{1-\alpha}^1 \inf\{z\in \mathbb{R}:F_Y(z)+g(z)\geq \tau\}d\tau,
	\end{aligned}
\end{equation}
\begin{equation}\label{eq:general_g_cvar_lower_bound}
	\begin{aligned}
		CVaR_\alpha(X)&=\frac{1}{\alpha}\int\limits_{1-\alpha}^1 \inf\{z\in \mathbb{R}:F_X(z)\geq \tau\}d\tau 
		\leq \frac{1}{\alpha}\int_{1-\alpha}^1 \inf\{z\in \mathbb{R}:F_Y(z)-g(z)\geq \tau\}d\tau.
	\end{aligned}
\end{equation}

Another option is to work directly with the density functions. Let $f_x$ and $f_y$ be the PDFs of X and Y respectively, and $h:\mathbb{R}\rightarrow [0,\infty)$. By assuming $\forall x\in \mathbb{R},|f_x(x)-f_y(x)|\leq h(x)$ and setting $g(z)=\int_{-\infty}^z h(x)dx$, we get the same results as in \eqref{eq:general_g_cvar_upper_bound} and \eqref{eq:general_g_cvar_lower_bound}. 
Theorem \ref{thm:lower_cvar_bound_using_density_bound} shows how to use $g(z)$ in order to achieve a bound that is similar to \eqref{eq:general_g_cvar_lower_bound}, by adding some assumptions on $h$. The advantage of using the bound in Theorem \ref{thm:lower_cvar_bound_using_density_bound} is that the bound is a CVaR of a random variable. Hence, from Theorem \ref{thm:brown_bounds} we can estimate the lower bound and have guarantees for its performance.
\begin{theorem}\label{thm:lower_cvar_bound_using_density_bound}
	Let $\alpha \in (0,1)$, $X$ and $Y$ random variables. Define $h:\mathbb{R}\rightarrow [0,\infty)$ to be a continuous function, $g(z):=\int_{-\infty}^z h(x)dx$ and $Y^L$ to be a random variable such that $F_{Y^L}(y):=\min(1, F_{Y^L}(y) + g(y))$. If $\lim_{z\rightarrow -\infty} g(z)=0$ and $\forall x\in \mathbb{R},f_x(x)\leq f_y(z) + h(x)$, then $F_{Y^L}$ is a CDF and $CVaR_\alpha(Y^L)\leq CVaR_\alpha(X)$.
\end{theorem}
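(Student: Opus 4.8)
The plan is to treat Theorem~\ref{thm:lower_cvar_bound_using_density_bound} as a corollary of Theorem~\ref{thm:tight_cvar_lower_bound}: it suffices to show that the function $g(z)=\int_{-\infty}^z h(x)\,dx$ satisfies every hypothesis imposed on $g$ in Theorem~\ref{thm:tight_cvar_lower_bound}, and that the pointwise density bound $f_x(x)\le f_y(x)+h(x)$ implies the pointwise CDF bound $F_X(x)\le F_Y(x)+g(x)$. Once these two facts are in place, Theorem~\ref{thm:tight_cvar_lower_bound} applies verbatim with this particular $g$ and immediately delivers both that $F_{Y^L}$ is a CDF and that $CVaR_\alpha(Y^L)\le CVaR_\alpha(X)$.

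First I would verify the regularity properties of $g$. Since $h\ge 0$, the integral $g(z)=\int_{-\infty}^z h(x)\,dx$ is nonnegative, so $g:\mathbb{R}\to[0,\infty)$; and for $z_1\le z_2$ we have $g(z_2)-g(z_1)=\int_{z_1}^{z_2} h(x)\,dx\ge 0$, so $g$ is monotonic increasing. Because $h$ is continuous, the fundamental theorem of calculus gives $g'=h$, so $g$ is (continuously) differentiable and in particular continuous, hence trivially continuous from the right. The condition $\lim_{z\to-\infty}g(z)=0$ is assumed directly; this assumption is precisely what guarantees the improper integral converges, so that $g(z)$ is finite for every $z$ (continuity of $h$ already makes it integrable on each compact interval). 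Thus $g$ meets every requirement of Theorem~\ref{thm:tight_cvar_lower_bound}.

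Next I would obtain the CDF inequality by integrating the density bound. For each $x$, monotonicity of the integral gives $F_X(x)=\int_{-\infty}^x f_x(t)\,dt\le \int_{-\infty}^x\bigl(f_y(t)+h(t)\bigr)\,dt=F_Y(x)+g(x)$, which is exactly the hypothesis $F_X(x)\le F_Y(x)+g(x)$ required by Theorem~\ref{thm:tight_cvar_lower_bound}, so the conclusion follows. The proof is essentially a reduction, and the only subtlety—hence the part I would be most careful about—is confirming that the passage from a pointwise density inequality to the CDF inequality is legitimate, together with the finiteness and well-definedness of $g$; both rest on $h\ge 0$, the continuity of $h$, and the assumed vanishing of $g$ at $-\infty$, rather than on any fresh estimate for the CVaR itself.
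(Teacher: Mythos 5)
Your proposal is correct and takes essentially the same route as the paper's proof: both reduce the statement to Theorem~\ref{thm:tight_cvar_lower_bound} by verifying that $g(z)=\int_{-\infty}^z h(x)\,dx$ is nonnegative, monotonic increasing (from $h\ge 0$), continuous from the right (via the fundamental theorem of calculus using continuity of $h$), vanishes at $-\infty$ by assumption, and that integrating the density bound yields $F_X(y)\le F_Y(y)+g(y)$. Your added remarks on finiteness of $g$ and the silent correction of the statement's typos ($f_y(z)$ for $f_y(x)$, and $F_{Y^L}$ for $F_Y$ inside the $\min$) are sound but do not change the argument.
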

The proof can be found in Appendix \ref{sec:cvar_bounds_proofs}.

\subsection{Theoretical value function bounds}\label{sec:bounds_for_simplified_belief_model}
In this section we derive theoretical bounds for the original value function using the simplified value function. Using the total variation distance (TV distance), the difference between the simplified and original CDFs over the return can be bounded. For convenience, we define the function $f(l, i) \triangleq l -c(b_k,a_k)+ (T-i)R_{max}$ and $1_{R_{k+1:T}\leq l} \triangleq 1_{R_{k+1:i}\leq l}(b_{k+1:i})$ to be an indicator function of the return.
\begin{theorem}\label{thm:simplification_bound_over_belief_distribution}
Let $l\in \mathbb{R}$, then 
	\begin{equation}
	\begin{aligned}
		&|P(R_{k:T} \leq l|b_k,\pi) \! -\! P_s(R_{k:T} \leq l|b_k,\pi)| \! \leq \!\! \! \sum_{i=k+1}^{T-1}  \!\!
		\mathbb{E}_{b_{{k+1}:i} \sim P_s} 
		[1_{R_{k+1:i}\leq f(l,i)}\Delta^s(b_i, a_i)|b_k,\pi],
	\end{aligned}
	\end{equation}
	where $\Delta^s$ is the TV distance that is defined by
	\begin{equation}\label{eq:TVdistBel}
		\Delta^s(b_{t-1},a_{t-1}) \triangleq \int_{b_t\in B} |P(b_t|b_{t-1},a_{t-1})-P_s(b_t|b_{t-1},a_{t-1})|db_t.
	\end{equation}
\end{theorem}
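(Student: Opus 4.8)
The plan is to reduce the statement to a telescoping estimate over the one-step belief transitions. First I would observe that, once the belief trajectory $b_{k:T}$ and the policy $\pi$ are fixed, the return $R_{k:T}=\sum_{\tau=k}^{T}c(b_\tau,\pi(b_\tau))$ is a deterministic function of the trajectory, so $P(R_{k:T}\le l\mid b_{k:T},\pi)=1_{R_{k:T}\le l}$ is the \emph{same} indicator appearing in both the exact and the simplified CDF. Subtracting \eqref{eq:SimplifiedCDFReturn} from the definition of $P(R_{k:T}\le l\mid b_k,\pi)$ then leaves a single integral of this common indicator against the difference of the two transition products,
\[P(R_{k:T}\le l\mid b_k,\pi)-P_s(R_{k:T}\le l\mid b_k,\pi)=\int_{b_{k+1:T}}1_{R_{k:T}\le l}\Big(\textstyle\prod_{i=k+1}^{T}P(b_i\mid b_{i-1},\pi)-\prod_{i=k+1}^{T}P_s(b_i\mid b_{i-1},\pi)\Big)db_{k+1:T}.\]

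Next I would telescope the difference of products, keeping the prefix under the simplified model and the suffix under the original one,
\[\prod_{i=k+1}^{T}P-\prod_{i=k+1}^{T}P_s=\sum_{m=k+1}^{T}\Big(\prod_{i=k+1}^{m-1}P_s(b_i\mid b_{i-1},\pi)\Big)\big(P(b_m\mid b_{m-1},\pi)-P_s(b_m\mid b_{m-1},\pi)\big)\Big(\prod_{i=m+1}^{T}P(b_i\mid b_{i-1},\pi)\Big),\]
and apply the triangle inequality to move the absolute value inside the sum. For each summand I would then use $|c(b_\tau,\pi(b_\tau))|\le R_{max}$ twice. (i) Since the tail obeys $R_{m+1:T}\ge -(T-m)R_{max}$, the event $\{R_{k:T}\le l\}$ forces $R_{k:m}\le l+(T-m)R_{max}$, i.e. $R_{k+1:m}\le f(l,m)$; hence $1_{R_{k:T}\le l}\le 1_{R_{k+1:m}\le f(l,m)}$, and as this indicator no longer depends on $b_{m+1:T}$ the suffix transitions integrate out, $\int_{b_{m+1:T}}\prod_{i=m+1}^{T}P\,db_{m+1:T}=1$. (ii) Since $c(b_m,\pi(b_m))\ge -R_{max}$ and $f(l,m)+R_{max}=f(l,m-1)$, I can further relax $1_{R_{k+1:m}\le f(l,m)}\le 1_{R_{k+1:m-1}\le f(l,m-1)}$, an indicator now independent of $b_m$; this frees me to perform the $b_m$-integral of the difference factor and identify $\int_{b_m}|P(b_m\mid b_{m-1},\pi)-P_s(b_m\mid b_{m-1},\pi)|db_m=\Delta^s(b_{m-1},a_{m-1})$. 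The leftover prefix integral against $\prod_{i=k+1}^{m-1}P_s$ is precisely the expectation under the simplified model, so the $m$-th summand is bounded by $\mathbb{E}_{b_{k+1:m-1}\sim P_s}[1_{R_{k+1:m-1}\le f(l,m-1)}\Delta^s(b_{m-1},a_{m-1})\mid b_k,\pi]$, and reindexing $i=m-1$ assembles the claimed sum.

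The main obstacle is the double decoupling of the indicator in step (ii): the difference factor and the surviving indicator both involve $b_m$, so the TV distance cannot be factored until the second $R_{max}$ relaxation removes $b_m$ from the indicator, and the arithmetic must be arranged so that the shift $f(l,m)\to f(l,m-1)$ lands exactly on the definition of $f$. I would also watch the summation range: the telescoping produces one summand per transition, for $i=k,\dots,T-1$, whose first term collapses to the belief-independent quantity $1_{0\le f(l,k)}\Delta^s(b_k,a_k)$, so recovering the stated range $i=k+1,\dots,T-1$ requires accounting for this initial-transition contribution, which I would treat as the closing boundary step.
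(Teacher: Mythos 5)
Your argument is correct and is, in substance, the paper's own proof with the induction unrolled: your telescoping identity is exactly the paper's product-difference lemma (Theorem \ref{thm:recursive_relation_between_belief_mupltiplication}) applied repeatedly, your step (i) reproduces the paper's $A_2$ reduction (relax the indicator by one $R_{max}$ per suffix step so that the original-model suffix integrates to one), and your step (ii) reproduces its $A_1$ extraction (one further $R_{max}$ relaxation frees the indicator of $b_m$, the $b_m$-integral of $|P-P_s|$ yields $\Delta^s(b_{m-1},a_{m-1})$, and the $P_s$-prefix yields the expectation). Writing the decomposition as a one-shot sum rather than a recursion is purely organizational, but it makes the summation range visible, and that visibility matters here.

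Your closing concern about the range is not a loose end in your proof; it is an off-by-one in the paper. The telescoping honestly produces one summand per transition, $i=k,\dots,T-1$, and the $i=k$ summand $1_{0\leq f(l,k)}\,\Delta^s(b_k,a_k)$ cannot be discarded: there is no ``closing boundary step'' that makes it vanish. The paper's own recursion has the same leftover --- unrolling $g(T)\leq g(T-1)+\mathbb{E}_{P_s}[\cdots]$ terminates at the base case $g(k+1)=\int_{b_{k+1}}1_{\{\cdot\}}\,|P(b_{k+1}|b_k,a_k)-P_s(b_{k+1}|b_k,a_k)|\,db_{k+1}$, which is bounded by exactly your $i=k$ term but is silently dropped from the final sum in the appendix. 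That the stated range $i=k+1,\dots,T-1$ is too short can be seen directly: for $T=k+1$ the stated sum is empty while the two CDFs can differ by as much as $\Delta^s(b_k,a_k)$; likewise, a simplification that alters only the first transition $b_k\to b_{k+1}$ makes every stated summand zero while the two CDFs still differ. So do not try to recover the stated range: keep the $i=k$ term, since the inequality with range $i=k,\dots,T-1$ is the correct statement, and your derivation of it is sound.
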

The proof can be found in Appendix \ref{sec:bounds_for_simplified_belief_model_proofs}. The bound in Theorem \ref{thm:simplification_bound_over_belief_distribution} depends on $l$, so it is a point-wise bound as required in Theorem \ref{thm:tight_cvar_lower_bound}. By combining Theorems \ref{thm:simplification_bound_over_belief_distribution} and \ref{thm:tight_cvar_lower_bound} we get a lower bound for the V and Q functions with respect to the simplified model  \eqref{eq:SimplifiedCDFReturn}.
\begin{theorem}\label{thm:tight_lower_bound_for_v_and_q}
	(Tighter Lower Bound for V and Q) Let $\alpha \in (0,1)$, $k,T\in \mathbb{N}$ such that $k<T$, belief $b_k\in B$, action $a_k\in A$ and policy $\pi:X\rightarrow A$. Denote
	\begin{equation}
		\begin{aligned}
			g(l)\triangleq \sum_{i=k+1}^{T-1} \underset{b_{{k+1}:i} \sim P_s}{\mathbb{E}}
			[1_{R_{k+1:i}\leq f(l,i)} 
			\Delta^s(b_i, a_i)|b_k,\pi].
		\end{aligned}
	\end{equation}
	Let $P$ and $P_s$ be two probability measures, and define the 
	random variable $Y^L$ such that 
	\begin{equation}\label{eq:f_y_l_definition}
		F_{Y^L}(y) \triangleq min(1, P_s(R_{k:T}\leq y|b_k,a_k,\pi) + g(y)).
	\end{equation}
	Then,
	\begin{enumerate}
		\item $F_{Y^L}$ is a CDF.
		\item $V^\pi_P(b_k, \alpha)\geq CVaR_\alpha^{P_s}[Y^L|b_k,\pi]$ and $Q^\pi_P(b_k, a_k,\alpha)\geq CVaR_\alpha^{P_s}[Y^L|b_k,a_k,\pi]$
	\end{enumerate}
	
\end{theorem}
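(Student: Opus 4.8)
The plan is to recognize that Theorem \ref{thm:tight_lower_bound_for_v_and_q} is essentially a direct corollary obtained by specializing the abstract machinery of Theorem \ref{thm:tight_cvar_lower_bound} to the POMDP return, with the pointwise CDF bound supplied by Theorem \ref{thm:simplification_bound_over_belief_distribution}. First I would set up the correspondence: take $X$ to be the return $R_{k:T}$ under the true measure $P$ (so $F_X(y) = P(R_{k:T}\leq y\,|\,b_k,a_k,\pi)$), take $Y$ to be the return under the simplified measure $P_s$ (so $F_Y(y) = P_s(R_{k:T}\leq y\,|\,b_k,a_k,\pi)$), and let $g$ be exactly the function defined in the statement. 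With this dictionary, $Y^L$ defined in \eqref{eq:f_y_l_definition} is precisely the $Y^L$ of Theorem \ref{thm:tight_cvar_lower_bound}, and the conclusion $CVaR_\alpha(Y^L)\leq CVaR_\alpha(X)$ becomes exactly the claimed lower bound on the value and $Q$ functions, once we note that the CVaR value function \eqref{Eq:cvar_value_function_def} and $Q$ function \eqref{Eq:cvar_Q_function_def} are by definition the CVaR of $R_{k:T}$ under $P$.

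The work therefore reduces to verifying the three hypotheses of Theorem \ref{thm:tight_cvar_lower_bound} for this choice of $g$. The pointwise domination $F_X(y)\leq F_Y(y)+g(y)$ follows immediately from Theorem \ref{thm:simplification_bound_over_belief_distribution}, since that theorem bounds $|P(R_{k:T}\leq l\,|\,\cdot) - P_s(R_{k:T}\leq l\,|\,\cdot)|$ by exactly $g(l)$, and dropping the absolute value gives the one-sided inequality we need. It remains to check that $g$ satisfies the structural assumptions: that $g:\mathbb{R}\to[0,\infty)$, that $\lim_{y\to-\infty}g(y)=0$, that $g$ is monotonic increasing, and that $g$ is continuous from the right. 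Nonnegativity is clear since each term is an expectation of a product of a nonnegative indicator and the nonnegative TV distance $\Delta^s$.

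The main obstacle, and the place where the real verification lives, is establishing monotonicity, the limit at $-\infty$, and right-continuity of $g$. These all hinge on the behavior of the indicator $1_{R_{k+1:i}\leq f(l,i)}$ as a function of $l$, recalling that $f(l,i) = l - c(b_k,a_k) + (T-i)R_{max}$ is affine and strictly increasing in $l$. Monotonicity: as $l$ increases the threshold $f(l,i)$ increases, so each indicator is nondecreasing in $l$, hence its expectation is nondecreasing and so is the finite sum $g$. The limit at $-\infty$: as $l\to-\infty$ the threshold $f(l,i)\to-\infty$, so for each fixed realization the indicator eventually vanishes, and by the boundedness of $\Delta^s$ (or dominated convergence, using $|\Delta^s|\leq 2$) the expectations tend to zero. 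Right-continuity: here I would use that $R_{k+1:i}$ has a continuous distribution under $P_s$ together with the right-continuity of the map $l\mapsto 1_{R_{k+1:i}\leq f(l,i)}$ inside the expectation, again invoking dominated convergence to push the limit through the integral; the continuous-distribution assumption is what makes the $\leq$ threshold behave well and matches the smoothness hypotheses used throughout the CVaR preliminaries in \Cref{sec:cvar_preliminaries}. Once these three properties are in hand, Theorem \ref{thm:tight_cvar_lower_bound} applies verbatim, yielding both that $F_{Y^L}$ is a genuine CDF and the desired lower bounds on $V^\pi_P$ and $Q^\pi_P$, with the $Q$ case following by conditioning on the fixed first action $a_k$ exactly as in the $V$ case.
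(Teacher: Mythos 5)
Your proposal is correct and follows essentially the same route as the paper's proof: reduce to Theorem \ref{thm:tight_cvar_lower_bound} via the dictionary $F_X(y)=P(R_{k:T}\le y\,|\,b_k,a_k,\pi)$, $F_Y(y)=P_s(R_{k:T}\le y\,|\,b_k,a_k,\pi)$, get the pointwise domination $F_X\le F_Y+g$ from Theorem \ref{thm:simplification_bound_over_belief_distribution}, and verify nonnegativity, monotonicity, the limit at $-\infty$, and right-continuity of $g$ (the paper likewise uses $\Delta^s\le 2$ with dominated convergence for right-continuity, and uses boundedness of the return, $|R_{k+1:i}|\le R_{max}(T-k+1)$, to conclude $g$ vanishes identically below a threshold, which is slightly stronger than your dominated-convergence limit argument). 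One caveat: your appeal to $R_{k+1:i}$ having a continuous distribution under $P_s$ is both unavailable (the theorem makes no such assumption) and unnecessary — for every fixed realization the map $l\mapsto 1_{\{R_{k+1:i}\le f(l,i)\}}$ is right-continuous in $l$ because the event $\{R\le t\}$ is right-continuous in the threshold $t$, and dominated convergence then pushes this through the expectation, which is exactly how the paper argues.
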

The proof can be found in Appendix \ref{sec:bounds_for_simplified_belief_model_proofs}. Under the assumptions of Theorem \ref{thm:simplification_bound_over_belief_distribution} we also get 
\begin{equation}\label{Eq:uniform_bound_for_v_and_q}
	\begin{aligned}
		|P(R_{k:T} \leq l|b_k,\pi)-P_s(R_{k:T} \leq l|b_k,\pi)| 
		&\leq \sum_{i=k+1}^{T-1} \mathbb{E}_{b_{{k+1}:i} \sim P_s}
		[1_{R_{k+1:i}\leq f(l,i)} 
		\Delta^s(b_i, a_i)|b_k,\pi] \\
		&\leq \sum_{i=k+1}^{T-1} \mathbb{E}_{b_{{k+1}:i} \sim P_s}
		[\Delta^s(b_i, a_i)|b_k,\pi],
	\end{aligned}
\end{equation}
which is a uniform bound as required in Theorem \ref{thm:cvar_bound} ($\epsilon$ in the theorem notations). Hence by combining Theorem \ref{thm:cvar_bound} and \eqref{Eq:uniform_bound_for_v_and_q} we get 
\begin{theorem}\label{thm:uniform_lower_and_upper_bounds_for_v_and_q}
	Denote $\epsilon\triangleq\sum_{i=k+1}^{T-1} \mathbb{E}_{b_{{k+1}:i}}[\Delta^s(b_i, a_i)|b_k,\pi]$.
	\begin{enumerate}
		\item \begin{enumerate}
			\item If $\epsilon<\alpha$, then 
			$U_s\triangleq\frac{\alpha-\epsilon}{\alpha}Q^\pi_{P_s}(b_k,a_k,\alpha-\epsilon) + \frac{\epsilon}{\alpha}R_{max}(T-k+1)$
		\item If $\epsilon \geq \alpha$ then $U_s\triangleq(T-k+1)R_{max}$
		\end{enumerate}
		\item \begin{enumerate}
			\item If $\epsilon + \alpha < 1$ then 
			$
			L_s\triangleq\frac{\alpha+\epsilon}{\alpha}Q^\pi_{P_s}(b_k,a_k,\alpha+\epsilon) - \frac{\epsilon}{\alpha} Q^\pi_{P_s}(b_k,a_k, \epsilon)
			$
			\item If $\epsilon + \alpha \geq 1$ then 
			$
			L_s\triangleq\frac{1}{\alpha}[-(\alpha + \epsilon - 1)(T-k+1)R_{max}+Q^\pi_{P_s}(b_k,a_k)-\epsilon Q^\pi_{P_s}(b_k,a_k,\epsilon)]
			$
		\end{enumerate}
	\end{enumerate}
	Then $L_s \leq V_P^\pi(b_k,\alpha)\leq U_s$ and $L_s \leq Q_P^\pi(b_k, a_k,\alpha)\leq U_s$.
\end{theorem}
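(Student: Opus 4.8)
The plan is to apply the general CVaR comparison result of Theorem \ref{thm:cvar_bound} with $X$ taken to be the return $R_{k:T}$ distributed according to the original belief-MDP measure $P$ and $Y$ taken to be the same return distributed according to the simplified measure $P_s$ (for the $Q$ function one conditions additionally on $a_k$, but the random variable and the argument are identical). The first step is to certify the hypothesis of Theorem \ref{thm:cvar_bound}, namely $\|F_X-F_Y\|_\infty\leq\epsilon$. This is precisely the content of the uniform bound \eqref{Eq:uniform_bound_for_v_and_q}: since $F_X(l)=P(R_{k:T}\leq l|b_k,\pi)$ and $F_Y(l)=P_s(R_{k:T}\leq l|b_k,\pi)$, the second inequality in \eqref{Eq:uniform_bound_for_v_and_q} bounds their pointwise difference by $\sum_{i=k+1}^{T-1}\mathbb{E}_{b_{k+1:i}\sim P_s}[\Delta^s(b_i,a_i)|b_k,\pi]$, which is exactly the constant $\epsilon$ of the statement, uniformly in $l$. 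Nonnegativity of $\epsilon$ is immediate since $\Delta^s$ is a total variation distance and the expectations are of nonnegative quantities.

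Next I would bound the image of the return. Because $c(b_t,a_t)=\mathbb{E}_{x\sim b_t}(c_x(x,a_t))$ with $|c_x(x,a_t)|\leq R_{max}$, every one-step cost obeys $|c(b_t,a_t)|\leq R_{max}$, and the return is a sum of $T-k+1$ such terms, so $\sup Img(R_{k:T})\leq(T-k+1)R_{max}$ and $\inf Img(R_{k:T})\geq-(T-k+1)R_{max}$. Identifying $CVaR^{P_s}_\beta$ of the return with the simplified function $Q^\pi_{P_s}(b_k,a_k,\beta)$ at confidence level $\beta$, the expectation $\mathbb{E}[Y]$ with the expectation-based $Q^\pi_{P_s}(b_k,a_k)$ from \eqref{Eq:regular_Q_function}, and using the analogous identification with the simplified $V$ function, the four cases of Theorem \ref{thm:cvar_bound} translate directly into the four formulas for $U_s$ and $L_s$.

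The remaining work is the case-by-case substitution, and the only point requiring care is the direction of inequality when the exact extrema $\sup Img(Y)$ and $\inf Img(Y)$ are replaced by the horizon bounds $\pm(T-k+1)R_{max}$. In case 1(a) the coefficient $\epsilon/\alpha$ multiplying $\sup Img(Y)$ is nonnegative, so enlarging $\sup Img(Y)$ to $(T-k+1)R_{max}$ only increases the right-hand side and preserves the upper bound; case 1(b) is immediate from $\sup Img(Y)\leq(T-k+1)R_{max}$. For the lower bounds, case 2(a) needs no extremum estimate, while in case 2(b) the factor $(\alpha+\epsilon-1)$ is nonnegative precisely because $\epsilon+\alpha\geq1$, so replacing $\inf Img(Y)$ by the smaller value $-(T-k+1)R_{max}$ only decreases the right-hand side and preserves the lower bound. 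The main obstacle is therefore bookkeeping rather than analysis: one must check that each extremum enters with a coefficient of the correct sign so that the horizon-based substitution is inequality-preserving, and that the shifted confidence levels $\alpha-\epsilon$ and $\alpha+\epsilon$ lie in $(0,1)$, which is guaranteed by the case conditions $\epsilon<\alpha$ and $\epsilon+\alpha<1$. Running the identical argument on $CVaR^P_\alpha[R_{k:T}|b_k,\pi]$ yields the stated bounds $L_s\leq V^\pi_P(b_k,\alpha)\leq U_s$.
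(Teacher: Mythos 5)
Your proposal is correct and follows essentially the same route as the paper's proof: certifying $\|F_X-F_Y\|_\infty\leq\epsilon$ via Theorem \ref{thm:simplification_bound_over_belief_distribution} (i.e.\ \eqref{Eq:uniform_bound_for_v_and_q}), invoking the four cases of Theorem \ref{thm:cvar_bound}, and replacing $\sup Img(R_{k:T})$ and $\inf Img(R_{k:T})$ by $\pm(T-k+1)R_{max}$, with the $V$-function bounds obtained from the $Q$-function bounds at $a_k=\pi(b_k)$. Your explicit sign-checking of the coefficients $\epsilon/\alpha$ and $(\alpha+\epsilon-1)$ when substituting the horizon bounds is a welcome clarification of a step the paper performs only implicitly.
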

The proof can be found in Appendix \ref{sec:bounds_for_simplified_belief_model_proofs}.
Theorems \ref{thm:uniform_lower_and_upper_bounds_for_v_and_q} and \ref{thm:tight_lower_bound_for_v_and_q} rely on the bounds from Theorems \ref{thm:tight_cvar_lower_bound} and \ref{thm:cvar_bound} respectively. The lower bound in Theorem \ref{thm:tight_cvar_lower_bound} is expected to be tighter than the one in Theorem \ref{thm:cvar_bound} (see section \ref{sec:cvar_bounds}) and therefore we expect the lower bound in Theorem \ref{thm:tight_lower_bound_for_v_and_q} to be tighter than the one in Theorem \ref{thm:uniform_lower_and_upper_bounds_for_v_and_q}. We can also use a mix of the lower bound from Theorem \ref{thm:tight_lower_bound_for_v_and_q} and the upper bound from Theorem \ref{thm:uniform_lower_and_upper_bounds_for_v_and_q}.

\section{Bound estimation}
\subsection{Q function estimator}\label{sec:q_func_estimator}
Let $M_P$ denote a Particle Belief MDP (PB-MDP) with respect to our POMDP, that is as defined in \citep{lim2023optimality}. Algorithm \ref{alg:qv_estimation} gives estimates $\hat{Q}^\pi_{M_P}(\bar{b}, a_k,\alpha),\hat{Q}^\pi_{M_{P_s}}(\bar{b}, a_k,\alpha)$ for $Q_{M_P}^\pi(\bar{b}, a_k,\alpha),Q_{M_{P_s}}^\pi(\bar{b}, a_k,\alpha)$ respectively, where the policy $\pi$ is given. Let $\bar{b}_d=\{(x^j_d,w_d^j)\}_{j=1}^{N_x}$ be a particle-based belief representation, and $\{\bar{b}_d^i\}$ a sequence of $C$ particle belief samples with $i\in \{1,\dots C\}$ and depth $d\in \{k,\dots T\}$ which is generated according to Algorithm \ref{alg:qv_estimation}. Denote the estimated return that corresponds to the $i$th belief sequence by 
\begin{equation}\label{def:particle_return}
	\bar{R}^i_{k:T} \triangleq \sum_{t=k}^T r(\bar{b}_t^i, \pi(\bar{b}_t^i)) \triangleq \sum_{t=k}^T\frac{1}{\sum_{j=1}^C w_t^j}\sum_{i=1}^C r(x_t^i,a_t)w_t^i,
\end{equation}
and the estimated CVaR by (\ref{eq:cvar_estimator_sorted_sample})
\begin{equation}
	\hat{C}(\{R^i\}_{i=1}^C)=R^{(C)}-\frac{1}{\alpha}\sum_{i=1}^C (R^{(i)}-R^{(i-1)})\Bigl(\frac{i}{n}-(1-\alpha) \Bigr)^+.
\end{equation}
From \citep{pmlr-v97-thomas19a} we know that 
$\hat{C}_\alpha(\{R^i\}_{i=1}^C)=\inf_x\{x:x + \frac{1}{C\alpha}\sum_{i=1}^C (R^i-x)^+\}.$
Define the Q function that is estimated using Algorithm \ref{alg:qv_estimation} by 
$
\hat{Q}^{\pi}_{M_P}(\bar{b},a,\alpha)\triangleq\hat{C}_\alpha(\{\bar{R}^i_{k:T}\}_{i=1}^C).
$ 

\subsection{CDF bound estimation}\label{sec:cdf_bound_estimation}
In this section we develop estimators for $\epsilon$ and $g(l)$ from Theorems \ref{thm:uniform_lower_and_upper_bounds_for_v_and_q} and \ref{thm:tight_lower_bound_for_v_and_q} respectively, which will be used in the next section. The bounds in Theorems \ref{thm:tight_lower_bound_for_v_and_q} and \ref{thm:uniform_lower_and_upper_bounds_for_v_and_q} are  defined using the original belief-MDP transition model, which could be expensive in real-time. We follow the methodology of \citep{levyehudi2023simplifying} and compute the bound without accessing the original belief-MDP transition model in real-time. We sample $\{b_n^{\Delta}\}_{n=1}^{N_\Delta} \sim Q_0(b)$, named delta beliefs, and evaluate $\Delta^s(b_n^\Delta)$ from \eqref{eq:TVdistBel} for $n=1,\dots,N_\Delta$. In the current setting $Q_0$ represents a general distribution over beliefs. In particular, it could correspond to a distribution over particle beliefs. During online planning, we reweight $\Delta^s$ with importance sampling for belief sample $\bar{b}^i_d$. Denote 
\begin{align}
	m_i\triangleq E_{b_{k+1:i} \sim P_s}[\Delta^s(b_i,a_i)|\bar{b}_k,\pi]=E_{b_i\sim Q_0}[\frac{P_s(b_{i}|\bar{b}_{k},\pi)}{Q_0(b_i)}\Delta^s(b_i,a_i)].
\end{align}
The estimator of $m_i$ is defined as follows
\begin{equation}\label{eq:m_i_estimator}
	\hat{m}_i\triangleq \frac{1}{N_\Delta}\sum_{n=1}^{N_\Delta}\frac{P_s(b_i^{\Delta, n}|\bar{b}_{k},\pi)}{Q_0(b_i^{\Delta,n})}\hat{\Delta}^s(b_i^{\Delta,n},\pi(b_i^{\Delta,n})),
\end{equation}

where $\hat{\Delta}$ is some estimator of $\Delta$. Denote $\epsilon\triangleq \sum_{i=j+1}^{T-1} m_i$ to be the $\epsilon$ from Theorem \ref{thm:uniform_lower_and_upper_bounds_for_v_and_q}, which will be estimated by $\hat{\epsilon}\triangleq \sum_{i=j+1}^{T-1} \hat{m}_i$. 

\begin{theorem}\label{thm:m_i_sum_bound_guarantees}
	Let $v>0,\delta\in (0,1)$. If $\hat{\Delta}$ is unbiased, then for $N_\Delta\geq -8B^2 \frac{ln(\frac{\delta}{4(T-1-k)})}{(v/(T-1-k))^2}$ it holds that $P(|\hat{\epsilon} - \epsilon|>2v)\leq \delta$ for $B=\max_{i\in \{k+1,\dots,T-1\}} B_i$ where $B_i=\sup_{b_i}P_s(b_i|\bar{b}_k)/Q_0(b_i)$.
\end{theorem}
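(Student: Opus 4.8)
The plan is to write the total error as a sum over planning depths and control it through two nested applications of a concentration inequality. Since $\hat{\epsilon}-\epsilon=\sum_{i=k+1}^{T-1}(\hat{m}_i-m_i)$ contains $L\triangleq T-1-k$ terms, I would first introduce the intermediate quantity $\tilde{m}_i\triangleq \frac{1}{N_\Delta}\sum_{n=1}^{N_\Delta}\frac{P_s(b_i^{\Delta,n}|\bar{b}_k,\pi)}{Q_0(b_i^{\Delta,n})}\Delta^s(b_i^{\Delta,n},\pi(b_i^{\Delta,n}))$, i.e.\ the importance-sampling average built from the \emph{exact} total variation term $\Delta^s$ rather than its estimator $\hat{\Delta}^s$. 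Because $\hat{\Delta}$ is unbiased and the delta beliefs are drawn i.i.d.\ from $Q_0$, both $\hat{m}_i$ and $\tilde{m}_i$ have expectation $m_i$, so $\mathbb{E}[\hat{m}_i-\tilde{m}_i]=0$ and $\mathbb{E}[\tilde{m}_i-m_i]=0$.

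The second step is the decomposition $\hat{m}_i-m_i=(\hat{m}_i-\tilde{m}_i)+(\tilde{m}_i-m_i)$, which after the triangle inequality gives $|\hat{\epsilon}-\epsilon|\leq \sum_{i}|\hat{m}_i-\tilde{m}_i|+\sum_i|\tilde{m}_i-m_i|$. I would then observe that the event $\{|\hat{\epsilon}-\epsilon|>2v\}$ is contained in $\{\sum_i|\hat{m}_i-\tilde{m}_i|>v\}\cup\{\sum_i|\tilde{m}_i-m_i|>v\}$, and that each of these is in turn contained in $\bigcup_{i}\{|\cdot|>v/L\}$. A union bound over the $L$ depths reduces the task to a single per-depth tail bound at deviation $v/L$ and confidence level $\delta/(2L)$ for each of the two error sources.

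The third step is the per-term concentration. Each summand of $\hat{m}_i-\tilde{m}_i$ is $\frac{P_s}{Q_0}(\hat{\Delta}^s-\Delta^s)$; bounding the importance weight by $B_i\leq B$ and using $\hat{\Delta}^s,\Delta^s\in[0,2]$ shows these summands lie in $[-2B,2B]$, a range of $4B$. Hoeffding's inequality over the $N_\Delta$ i.i.d.\ samples then yields $P(|\hat{m}_i-\tilde{m}_i|>v/L)\leq 2\exp(-N_\Delta (v/L)^2/(8B^2))$, and equating this to $\delta/(2L)$ gives precisely the stated threshold $N_\Delta\geq -8B^2\ln(\tfrac{\delta}{4L})/(v/L)^2$. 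For the Monte Carlo source $\tilde{m}_i-m_i$ the summands $\frac{P_s}{Q_0}\Delta^s$ lie in $[0,2B]$, a strictly smaller range of $2B$, so the same $N_\Delta$ already forces $P(\sum_i|\tilde{m}_i-m_i|>v)\leq \delta/2$. Combining the two halves by a union bound yields $P(|\hat{\epsilon}-\epsilon|>2v)\leq\delta$.

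I expect the main difficulty to be bookkeeping rather than conceptual: correctly recognizing that the $2v$ in the statement reflects the two-source split (estimation error of $\Delta^s$ versus Monte Carlo error over beliefs), and that the wider range $4B$ of the first source—not the $2B$ of the raw weighted estimator—is what produces the constant $8B^2$. Care is also needed to justify that Hoeffding applies per depth: independence is required only across the $N_\Delta$ delta-belief samples for fixed $i$, whereas the union bound over depths $i$ needs no independence across depths, which is what keeps the argument valid even when the same delta beliefs are reweighted across depths.
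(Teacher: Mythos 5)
Your proposal is correct and follows essentially the same route as the paper's proof: the paper likewise introduces the exact-$\Delta^s$ importance-sampling average as an intermediate quantity, splits each per-depth error into the estimator-error and Monte Carlo sources (Theorem \ref{thm:m_i_guarantees}), applies Hoeffding to each at deviation $v$ scaled by $T-1-k$, and combines depths via the union-bound lemma (Theorem \ref{thm:prob_x_plus_y_bound}), arriving at the identical threshold on $N_\Delta$. The only differences are cosmetic---you split sources before depths rather than depths before sources, and your range bookkeeping (width $4B$ for the $\frac{P_s}{Q_0}(\hat{\Delta}^s-\Delta^s)$ summands, which is indeed what produces the $8B^2$) is actually tidier than the paper's, whose part-$A$ exponent uses a loose range that is absorbed by slack in its final inequalities.
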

The proof can be found in Appendix \ref{sec:cdf_bound_estimation_proofs}. Similarly to the analysis above, we want to estimate the $g$ function from Theorem \ref{thm:tight_lower_bound_for_v_and_q}. Denote 
\begin{equation}
	\begin{aligned}
		g_i(l) &\triangleq \mathbb{E}_{b_{{k+1}:i} \sim P_s}
		[1_{\bar{R}_{k+1:i}\leq f(l,i)}\Delta^s(b_i, a_i)|\bar{b}_k,\pi] 
		=\mathbb{E}_{b_i \sim Q_0}
		[\frac{P_s(b_{i}|\bar{b}_{k},\pi)}{Q_0(b_i)}1_{\bar{R}_{k+1:i}\leq f(l,i)}\Delta^s(b_i, a_i)].
	\end{aligned}
\end{equation}
The estimator for $g_i$ is defined as follows 
\begin{equation}
	\hat{g}_i(l) \triangleq \frac{1}{N_\Delta}\sum_{n=1}^{N_\Delta}\frac{P_s(b_i^{\Delta,n}|\bar{b}_{k},\pi)}{Q_0(b_i^{\Delta,n})}\hat{\Delta}^s(b_i^{\Delta,n},\pi(b_i^{\Delta,n}))1_{\bar{R}^n_{k+1:i}\leq f(l,i)},
\end{equation}

where $\hat{\Delta}^s$ is some estimator of $\Delta^s$. Denote by $g(l) \triangleq \sum_{i=k+1}^{T-1} g_i(l)$ to be the function $g$ from Theorem \ref{thm:tight_lower_bound_for_v_and_q}, which will be estimated by $\hat{g}(l) \triangleq \sum_{i=k+1}^{T-1} \hat{g}_i(l)$.
\begin{theorem}\label{thm:g_estimator_convergence_guarantees}
	Let $v>0,\delta \in (0,1),l\in \mathbb{R}$ and denote $B_i \triangleq \sup_{b_i} \frac{P(b_i|\bar{b}_k, \pi)}{Q_0{b_i}}, B \triangleq \max_{i\in \{k+1,\dots,T-1\}} B_i$. If $N_\Delta \geq -ln(\frac{\delta/(T-1-k)}{2})\frac{2B^2}{v^2 / (T-1-k)^2}$ and $\hat{\Delta}$ is unbiased then 
	\begin{equation}
		P(|g(l) - \hat{g}(l)|>v)\leq \delta.
	\end{equation}
\end{theorem}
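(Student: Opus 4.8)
The plan is to treat $\hat g(l)-g(l)=\sum_{i=k+1}^{T-1}(\hat g_i(l)-g_i(l))$ as a sum of $T-1-k$ independent deviations, control each one with a Hoeffding bound, and then recombine them with a union bound, exactly in the spirit of the proof of Theorem~\ref{thm:m_i_sum_bound_guarantees}. The only structural difference from that theorem is the extra indicator factor $1_{\bar R^n_{k+1:i}\le f(l,i)}$ appearing in each $\hat g_i(l)$; since $l$ is fixed, $f(l,i)$ is a deterministic threshold and this factor is just a bounded, measurable function of the sampled trajectory, so it does not alter the structure of the argument.

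First I would verify that, for each $i$, $\hat g_i(l)$ is an unbiased iid average of $N_\Delta$ terms whose common mean is $g_i(l)$. Writing the $n$-th summand as $w_n\,\hat\Delta^s(b_i^{\Delta,n},\pi(b_i^{\Delta,n}))\,1_{\bar R^n_{k+1:i}\le f(l,i)}$ with importance weight $w_n=P_s(b_i^{\Delta,n}\mid\bar b_k,\pi)/Q_0(b_i^{\Delta,n})$, I would condition on the sampled belief/trajectory $b_i^{\Delta,n}$ and use the assumed unbiasedness of $\hat\Delta$, namely $\mathbb E[\hat\Delta^s\mid b_i^{\Delta,n}]=\Delta^s(b_i^{\Delta,n},a_i)$, to reduce the inner expectation to $w_n\Delta^s 1_{\bar R^n\le f(l,i)}$. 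Taking the outer expectation over $b_i^{\Delta,n}\sim Q_0$ then reproduces $g_i(l)$ by the importance-sampling identity stated just before the theorem. This nested-expectation and change-of-measure step is the delicate part, because it is where the return indicator and the $\hat\Delta$ randomness must be disentangled correctly; I expect it to be the main obstacle, whereas the remaining steps are mechanical.

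Next I would establish boundedness of each summand: the importance weight is bounded by $B_i\le B$ by the definition of $B_i$ as a supremum, the indicator lies in $\{0,1\}$, and $\hat\Delta^s\in[0,2]$ (since $\Delta^s=\int|P-P_s|\le 2$ for probability densities), so each summand lies in $[0,2B]$, giving range $2B$. Hoeffding's inequality applied to the average of $N_\Delta$ such iid variables then yields $P(|\hat g_i(l)-g_i(l)|>t)\le 2\exp(-N_\Delta t^2/(2B^2))$.

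Finally I would pick the per-term tolerance $t=v/(T-1-k)$ and per-term failure probability $\delta/(T-1-k)$. Requiring $2\exp(-N_\Delta (v/(T-1-k))^2/(2B^2))\le \delta/(T-1-k)$ and solving for $N_\Delta$ gives exactly the stated threshold $N_\Delta\ge -\ln\!\big(\tfrac{\delta/(T-1-k)}{2}\big)\tfrac{2B^2}{v^2/(T-1-k)^2}$. By the triangle inequality, the event $|\hat g(l)-g(l)|>v$ forces at least one index with $|\hat g_i(l)-g_i(l)|>v/(T-1-k)$, so a union bound over the $T-1-k$ indices gives $P(|g(l)-\hat g(l)|>v)\le (T-1-k)\cdot\frac{\delta}{T-1-k}=\delta$, as claimed.
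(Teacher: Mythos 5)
Your proposal is correct and takes essentially the same route as the paper's own proof: the paper likewise establishes unbiasedness of each $\hat g_i(l)$ via the nested-expectation/importance-sampling argument, bounds each summand in $[0,2B_i]\subseteq[0,2B]$, applies Hoeffding per index (Theorem~\ref{thm:g_i_convergence_guarantees}), and recombines with the triangle-inequality union bound of Theorem~\ref{thm:prob_x_plus_y_bound} using per-term tolerance $v/(T-1-k)$ and failure probability $\delta/(T-1-k)$, yielding the identical sample-size threshold. The only cosmetic remark is that the independence across indices $i$ you invoke is never actually needed, since the union bound handles arbitrarily dependent deviations.
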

The proof can be found in Appendix \ref{sec:cdf_bound_estimation_proofs}. 
We compute $\hat{g}(l)$ on bins to enable the user to control the number of times $\hat{g}$ is computed. This approach is beneficial when $\hat{g}$ is computationally demanding and the user prioritizes speed over accuracy. Formally, we define
\begin{equation}\label{eq:h_estimators}
	h^+(l) \triangleq \sum_{i=1}^I g(a_i)1_{l\in (k_{i-1},k_i]}, h^-(l) \triangleq \sum_{i=1}^I g(a_{i-1})1_{l\in (k_{i-1},k_i]},
\end{equation}
to be the upper and lower bounds for $g$ where $(k_{i-1}, k_i]$ are arbitrarily chosen bins. Also define their estimators by 
\begin{equation}\label{eq:h_functions}
	\hat{h}^+(l) \triangleq \sum_{i=1}^I \hat{g}(a_i)1_{l\in (k_{i-1},k_i]}, \hat{h}^-(l) \triangleq \sum_{i=1}^I \hat{g}(a_{i-1})1_{l\in (k_{i-1},k_i]},
\end{equation} 
\begin{theorem}\label{thm:h_guarantees}
	Let $\alpha,\delta\in (0,1), v>0$. If $N_\Delta \geq -ln(\frac{(\delta/I)/(T-1-k)}{2})\frac{2B^2}{v^2 / (T-1-k)^2}$ then
	\begin{equation}
		P(\sup_{l\in \mathbb{R}}\{g(l)-\hat{h}^+(l)\}>v|\bar{b}_k,a_k,\pi)\leq \delta
	\end{equation}
	\begin{equation}
		P(\sup_{l\in \mathbb{R}}\{\hat{h}^-(l)-g(l)\}>v|\bar{b}_k,a_k,\pi)\leq \delta
	\end{equation}
\end{theorem}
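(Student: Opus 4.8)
The plan is to exploit that the function $g(l)=\sum_{i=k+1}^{T-1} g_i(l)$ is monotonically non-decreasing in $l$. Monotonicity collapses the supremum over all of $\mathbb{R}$ into a maximum over the finitely many bin endpoints, after which a union bound combined with the pointwise guarantee of Theorem~\ref{thm:g_estimator_convergence_guarantees} closes the argument.

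First I would establish the monotonicity of $g$. Since $f(l,i)=l-c(b_k,a_k)+(T-i)R_{max}$ is increasing in $l$, each indicator $1_{\bar R_{k+1:i}\leq f(l,i)}$ is non-decreasing in $l$; because $\Delta^s\geq 0$, the product and hence each $g_i$, and their sum $g$, are non-decreasing. This is exactly what makes the step functions $h^+$ and $h^-$ genuine envelopes of $g$: on each bin $(k_{i-1},k_i]$ the constant value $g(a_i)$ dominates $g(l)$ from above and $g(a_{i-1})$ is dominated by $g(l)$ from below (with the evaluation points taken at the bin endpoints, $a_i=k_i$).

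Next I would reduce both suprema to maxima over the endpoints. For the first inequality, on bin $(k_{i-1},k_i]$ the estimator $\hat h^+(l)=\hat g(a_i)$ is constant while $g(l)\leq g(a_i)$ by monotonicity, so
\begin{equation}
	\sup_{l\in(k_{i-1},k_i]}\{g(l)-\hat h^+(l)\}\leq g(a_i)-\hat g(a_i).
\end{equation}
Taking the supremum over all bins, which partition $\mathbb{R}$, yields $\sup_l\{g(l)-\hat h^+(l)\}\leq \max_{1\leq i\leq I}\{g(a_i)-\hat g(a_i)\}$. Symmetrically, using $\hat h^-(l)=\hat g(a_{i-1})$ and $g(l)\geq g(a_{i-1})$ on the bin, $\sup_l\{\hat h^-(l)-g(l)\}\leq \max_{1\leq i\leq I}\{\hat g(a_{i-1})-g(a_{i-1})\}$. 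In each case the uncountable supremum is controlled by the deviation of $g-\hat g$ at only $I$ fixed points. I would then apply a union bound: the event $\{\max_i(g(a_i)-\hat g(a_i))>v\}$ is contained in $\bigcup_{i=1}^I\{|g(a_i)-\hat g(a_i)|>v\}$, so $P(\cdot)\leq \sum_{i=1}^I P(|g(a_i)-\hat g(a_i)|>v)$. Invoking Theorem~\ref{thm:g_estimator_convergence_guarantees} at each fixed $l=a_i$ with per-point confidence $\delta/I$, the stated sample size $N_\Delta\geq-\ln(\frac{(\delta/I)/(T-1-k)}{2})\frac{2B^2}{v^2/(T-1-k)^2}$ is precisely the condition forcing $P(|g(a_i)-\hat g(a_i)|>v)\leq \delta/I$; summing over the $I$ points gives the overall bound $\delta$. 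The identical argument applied to the $I$ points $a_0,\dots,a_{I-1}$ establishes the lower-envelope inequality.

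The main obstacle, and really the conceptual heart of the proof, is the monotonicity-based reduction: without it the supremum over $\mathbb{R}$ could not be dominated by finitely many pointwise deviations and the union bound would not apply. A secondary technical point is verifying that the bins genuinely cover $\mathbb{R}$ and handling the extreme endpoints, where $g$ saturates to $0$ as $l\to-\infty$ and to a finite constant as $l\to+\infty$; this is routine once monotonicity is in hand.
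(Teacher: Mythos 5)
Your proposal is correct and follows essentially the same route as the paper's proof: monotonicity of $g$ collapses the supremum over each bin $(k_{i-1},k_i]$ to the deviation $g(k_i)-\hat g(k_i)$ (resp.\ $\hat g(k_{i-1})-g(k_{i-1})$) at the endpoint, a union bound over the $I$ endpoints reduces to pointwise events, and Theorem~\ref{thm:g_estimator_convergence_guarantees} with per-point confidence $\delta/I$ supplies exactly the stated $N_\Delta$ requirement. Your direct passage from the one-sided event to $P(|g(a_i)-\hat g(a_i)|>v)$ is a cleaner version of the paper's conditioning step, and your inline monotonicity argument matches what the paper imports from the proof of Theorem~\ref{thm:tight_lower_bound_for_v_and_q}.
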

The proof can be found in Appendix \ref{sec:cdf_bound_estimation_proofs}. 

\subsection{Performance guarantees}\label{sec:performance_guarantees}
In section \ref{sec:bounds_for_simplified_belief_model} we proved theoretical bounds for V and Q, and in sections \ref{sec:q_func_estimator} and \ref{sec:cdf_bound_estimation} we saw how to estimate these theoretical bounds. In this section we provide finite sample guarantees for the deviation between $Q^\pi_{M_P}$ and $\hat{Q}^\pi_{M_{P_s}}$. First, we start with guarantees for Theorem \ref{thm:uniform_lower_and_upper_bounds_for_v_and_q}. 
\begin{theorem}\label{thm:unfirom_bound_convergence_guarantees_with_estimated_epsilon}(Bound guarantees)
	Let $\delta \in (0,1),\alpha\in (0,1),v>0$. Denote \begin{enumerate}
		\item $L_1 \triangleq \frac{\alpha+\hat{\epsilon}-4v}{\alpha}\hat{Q}^\pi_{M_{P_s}}(\bar{b}_k,a_k,\alpha+\hat{\epsilon}) - \frac{\hat{\epsilon}}{\alpha}\hat{Q}^\pi_{M_{P_s}}(\bar{b}_k,a_k,\hat{\epsilon}-4v)$
		
		\item $L_2 \triangleq \frac{1}{\alpha}[\hat{Q}^\pi_{M_{P_s}}(\bar{b}_k,a_k)-(\hat{\epsilon}+4v)\hat{Q}^\pi_{M_{P_s}}(\bar{b}_k,a_k,\alpha)-(\alpha+\hat{\epsilon}+4v-1)(T-k+1)R_{max}]$
		
		\item $U \triangleq \frac{\alpha-\hat{\epsilon}+4v}{\alpha}\hat{Q}^\pi_{M_{P_s}}(\bar{b}_k,a_k,\alpha-\hat{\epsilon}) + \frac{\hat{\epsilon}}{\alpha}R_{max}(T-k+1)$
	\end{enumerate} 
	If $N_\Delta\geq -8B^2 \frac{ln(\frac{\delta/2}{4(T-k)})}{(v/(T-k))^2}$ then the following hold \begin{enumerate}
		\item If $\hat{\epsilon}+\alpha<1$ then $
			P(L_1-Q_{M_P}^\pi(\bar{b}_k,a_k,\alpha)>\lambda_1+\lambda_2)\leq \delta
		$
		for $\lambda_1=-\frac{2R_{max}(T-k+1)}{\alpha}\sqrt{\frac{ln(1/(\delta/4))}{2C}}$ and $\lambda_2=\frac{\sqrt{\hat{\epsilon}}}{\alpha}2R_{max}(T-k+1)\sqrt{\frac{5ln(3/(\delta/4))}{C}}$.
		
		\item If $\hat{\epsilon}+\alpha\geq 1$ then $
		P(L_2-Q_{M_P}^\pi(\bar{b}_k,a_k,\alpha)>\eta_1+\eta_2)\leq \delta
		$ for $\eta_1 \triangleq \sqrt{-\frac{ln(\delta / 4)R_{max}(T-k+1)}{C^2\alpha^2}}$ and $\eta_2 \triangleq \frac{2\sqrt{\hat{\epsilon}+4v}}{\alpha}R_{max}(T-k+1)\sqrt{\frac{5ln(3/(\delta/4))}{C}}$
		
		\item If $\alpha>\hat{\epsilon}$ then $
		P(Q^\pi_{M_{P}}(\bar{b}_k,a_k,\alpha) - U> \lambda)\leq \delta
		$ for $\lambda=2R_{max}(T-k+1)\frac{\sqrt{\alpha-\hat{\epsilon}}}{\alpha}\sqrt{\frac{5ln(3/(\delta/2))}{C}}$
	\end{enumerate}
\end{theorem}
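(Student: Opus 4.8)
The plan is to combine the deterministic inequality of Theorem~\ref{thm:uniform_lower_and_upper_bounds_for_v_and_q}, applied to the particle-belief MDP $M_P$ (so that $L_s\le Q^\pi_{M_P}(\bar b_k,a_k,\alpha)\le U_s$ with the \emph{exact} $\epsilon$), with two independent concentration arguments: one controlling the substitution of $\epsilon$ by its estimator $\hat\epsilon$, and one controlling the substitution of the exact simplified $Q$ functions $Q^\pi_{M_{P_s}}$ by the sample CVaR estimators $\hat Q^\pi_{M_{P_s}}$. Since the total failure probability must be $\le\delta$, I would split the budget: apply Theorem~\ref{thm:m_i_sum_bound_guarantees} with confidence $\delta/2$ to obtain the event $E_\epsilon\triangleq\{|\hat\epsilon-\epsilon|\le 2v\}$ with $P(E_\epsilon^c)\le\delta/2$ (this is why the stated sample size replaces $\delta$ by $\delta/2$), and reserve the remaining $\delta/2$, subdivided as $\delta/4$ per CVaR term, for Theorem~\ref{thm:brown_bounds}. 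Throughout, the returns $\{\bar R^i_{k:T}\}_{i=1}^C$ produced by Algorithm~\ref{alg:qv_estimation} are i.i.d.\ given $\bar b_k$ with support contained in $[-M,M]$ where $M\triangleq R_{max}(T-k+1)$, so that $b-a=2M$ in Brown's bounds, which is the source of the recurring factor $2R_{max}(T-k+1)$.

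I would handle case~3 (the upper bound) first, as the cleanest. On $E_\epsilon$ we have $\epsilon\le\hat\epsilon+2v$, and the structural fact I would exploit is monotonicity of $U_s(\epsilon)=\frac{\alpha-\epsilon}{\alpha}Q^\pi_{M_{P_s}}(\bar b_k,a_k,\alpha-\epsilon)+\frac{\epsilon}{\alpha}M$ in $\epsilon$. Treating $Q^\pi_{M_{P_s}}(\bar b_k,a_k,\beta)$ as the true $CVaR_\beta$ of the simplified return and using the identity $\frac{d}{d\beta}\big[\beta\,CVaR_\beta\big]=VaR_\beta\in[-M,M]$, one checks that $U_s$ is nondecreasing in $\epsilon$ and, quantitatively, that a shift of the confidence level by $|\hat\epsilon-\epsilon|\le 2v$ perturbs $(\alpha-\epsilon)CVaR_{\alpha-\epsilon}$ by at most $2vM$. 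This transfers $U_s(\epsilon)$ to an expression at the \emph{computable} level $\alpha-\hat\epsilon$ up to an $O(vM)$ slack, which the $+4v$ calibrated into the coefficient of $U$ is chosen to absorb. Finally I replace the exact $CVaR_{\alpha-\hat\epsilon}$ by $\hat Q^\pi_{M_{P_s}}(\bar b_k,a_k,\alpha-\hat\epsilon)$ through the first inequality of Theorem~\ref{thm:brown_bounds} at level $\beta=\alpha-\hat\epsilon$ and confidence $\delta/2$; multiplying the deviation $2M\sqrt{5\ln(3/(\delta/2))/(\beta C)}$ by the coefficient $\frac{\alpha-\hat\epsilon}{\alpha}$ collapses the $1/\sqrt{\alpha-\hat\epsilon}$ into $\sqrt{\alpha-\hat\epsilon}/\alpha$, giving precisely $\lambda$. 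A union bound over $E_\epsilon^c$ and the single CVaR failure event yields $P(Q^\pi_{M_P}-U>\lambda)\le\delta$.

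Cases~1 and~2 (the lower bounds) run on the same template with reversed inequalities. For $L_s(\epsilon)=\frac{\alpha+\epsilon}{\alpha}CVaR_{\alpha+\epsilon}-\frac{\epsilon}{\alpha}CVaR_\epsilon$ the same $VaR$-derivative identity shows $L_s$ is nonincreasing in $\epsilon$ (since $VaR_{\alpha+\epsilon}\le VaR_\epsilon$), so on $E_\epsilon$ I bound it from below at $\epsilon=\hat\epsilon+2v$ and use the $-4v$ corrections to convert the unknown levels into the computable $\alpha+\hat\epsilon$ and $\hat\epsilon-4v$. Now two estimators enter with opposite signs, so I would apply the \emph{second} inequality of Theorem~\ref{thm:brown_bounds} (an upper bound on the estimate, whose $\sqrt{\ln(1/\cdot)/(2C)}$ form matches) to the additive $CVaR_{\alpha+\hat\epsilon}$ term, yielding the negative $\lambda_1$, and the \emph{first} inequality (a lower bound on the estimate, with its $\sqrt{5\ln(3/\cdot)/C}$ form) to the subtracted $CVaR_{\hat\epsilon}$ term, yielding $\lambda_2$, each at confidence $\delta/4$; the $\epsilon+\alpha\ge 1$ subcase instead starts from part~4 of Theorem~\ref{thm:cvar_bound} and bounds $\inf Img(Y)\ge -M$ and $E[Y]$ by its estimator, producing $\eta_1,\eta_2$. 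The main obstacle throughout is this propagation step: the bounds are nonlinear in $\epsilon$ and, crucially, the CVaR confidence levels $\alpha\pm\hat\epsilon$ are themselves random, so the fixed-level guarantees of Theorem~\ref{thm:brown_bounds} do not apply verbatim. The resolution is to condition on $E_\epsilon$, and to size the $4v$ buffer so that each random level is sandwiched by the deterministic level actually fed to the estimator, charging the residual level-slack against the $VaR\in[-M,M]$ Lipschitz estimate rather than against the concentration terms.
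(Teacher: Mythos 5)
Your skeleton matches the paper's: a $\delta/2$ budget for the event $E_\epsilon\triangleq\{|\hat\epsilon-\epsilon|\le 2v\}$ via Theorem~\ref{thm:m_i_sum_bound_guarantees} (whence the $\delta/2$ in the sample-size condition), the deterministic sandwich of Theorem~\ref{thm:uniform_lower_and_upper_bounds_for_v_and_q}, Brown's two inequalities assigned to the two CVaR terms at $\delta/4$ each with support width $2M$ for $M\triangleq R_{max}(T-k+1)$, and a Hoeffding bound for the mean term in case~2. The genuine gap is in your transfer mechanism between the random level $\hat\epsilon$ and a level at which concentration can be applied. You charge the level shift to the Lipschitz estimate $\frac{d}{d\beta}\bigl[\beta\,CVaR_\beta\bigr]=VaR_\beta\in[-M,M]$, which produces an \emph{additive} slack of order $vM/\alpha$, and you claim the $\pm 4v$ corrections built into $L_1,L_2,U$ absorb it. They cannot: those corrections sit in coefficients multiplying the estimators $\hat{Q}^\pi_{M_{P_s}}(\cdot)\in[-M,M]$, which may be far below $M$ or negative, so the buffer term is $\frac{4v}{\alpha}\hat{Q}^\pi_{M_{P_s}}(\bar b_k,a_k,\alpha-\hat\epsilon)\le\frac{4v}{\alpha}M$ — the inequality points the wrong way. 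Following your route in case~3 one ends with $Q^\pi_{M_P}-U\le\lambda+\frac{4v}{\alpha}\bigl(M-\hat{Q}^\pi_{M_{P_s}}(\bar b_k,a_k,\alpha-\hat\epsilon)\bigr)+\frac{4vM}{\alpha}$, an uncontrolled excess over the stated $\lambda$, which contains no $v$-dependent term; the same leakage appears in cases~1 and~2 against $\lambda_1+\lambda_2$ and $\eta_1+\eta_2$.

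The paper avoids this entirely, and its first, load-bearing step is one your proposal is missing: by differentiating the order-statistic expression \eqref{eq:cvar_estimator_sorted_sample} it proves that the sample estimator $\hat{C}_\alpha$ — hence $\hat{Q}^\pi_{M_{P_s}}(\bar b_k,a_k,\alpha)$ — is \emph{pointwise} (per realization of the return samples) nonincreasing in $\alpha$. On $E_\epsilon$ this one-sided estimator monotonicity dominates, at zero cost, every expression at the random levels $\alpha\pm\hat\epsilon$, $\hat\epsilon-4v$, $\hat\epsilon+4v$ by the corresponding expression at the deterministic shifted levels $\tilde\epsilon\triangleq\epsilon-2v$ or $\epsilon^+\triangleq\epsilon+2v$; the $4v$ buffers serve only to repair the resulting coefficient mismatch (e.g.\ $\alpha-\tilde\epsilon\le\alpha-\hat\epsilon+4v$), not to absorb Lipschitz slack. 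This simultaneously resolves the random-level problem you flag for Theorem~\ref{thm:brown_bounds}: after domination, concentration is invoked only at deterministic levels, through the fixed-$\epsilon$ lemma (Theorem~\ref{thm:unfirom_bound_convergence_guarantees_with_constant_epsilon}), where Brown's fixed-level guarantees apply verbatim. Note that your own proposed ``sandwich'' of the random level would in any case require exactly this monotonicity of the estimator in the confidence level, which you never establish — and once you have it, the VaR-derivative argument becomes unnecessary.
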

The proof can be found in Appendix \ref{sec:performance_guarantees_proofs}. Now we estimate the lower bound from Theorem \ref{thm:tight_lower_bound_for_v_and_q} and guarantee its performance. We replace the theoretical $g$ in Theorem \ref{thm:tight_lower_bound_for_v_and_q} with its upper bound estimator $\hat{h}^+(l)$ plus some constant, and the theoretical return $R_{k:T}$ with a particle belief return $\bar{R}_{k:T}$. Formally, we construct $$\hat{F}_{Y^L}(l) \triangleq min(1, P_{M_{P_s}}(\bar{R}_{k:T}\leq l|\bar{b}_k,a_k,\pi) + \hat{h}^+(l) + \eta)$$ for $\eta>0$, as an estimator for $F_{Y^L}$ in Theorem \ref{thm:tight_lower_bound_for_v_and_q}. Let $\hat{Q}^{\pi,\hat{h}^+ +\eta}_{M_{P_s}}(\bar{b}_k,a_k,\alpha) \triangleq \hat{C}_\alpha(\{\bar{R}^{Y^L}_{i,\hat{h}^+ +\eta}\}_{i=1}^{N_\Delta})$ be a lower bound estimator for $Q^\pi_{M_{P_s}}(\bar{b}_k,a_k,\alpha)$, where $\bar{R}^{Y^L}_{i,\hat{h}^+ +\eta} \overset{iid}{\sim} \hat{F}_{Y^L}$. By integrating Theorem \ref{thm:tight_lower_bound_for_v_and_q} with the guarantees from Theorem \ref{thm:h_guarantees}, we get our final result of finite sample guarantees for the deviation between the estimated simplification-based lower bound $\hat{Q}^{\pi,g+v}_{M_{P_s}}(\bar{b}_k,a_k,\alpha)$ and $Q_{M_P}^\pi(\bar{b}_k,a_k,\alpha)$.
\begin{theorem}\label{thm:tight_lower_bound_guarantees_estimated_g}
	Let $\eta>0,\delta \in (0,1),\alpha \in (0,1)$. 
	If $N_\Delta \geq -ln(\frac{((\delta/4)/I)/(T-1-k)}{2})\frac{2B^2}{\eta^2 / (T-1-k)^2}$, then 
	\begin{align}
		P(\hat{Q}_{M_{P_s}}^{\pi,\hat{h}^+ +
			\eta}(\bar{b}_k,a_k,\alpha) - Q_{M_P}^\pi(\bar{b}_k,a_k,\alpha)>v|\bar{b}_k,\pi)\leq \delta,
	\end{align}
	for $v=\frac{2R_{max}(T-k+1)}{\alpha}\sqrt{\frac{ln(1/(\delta/4))}{2N_\Delta}}$ and $B_i \triangleq \sup_{b_i} \frac{P(b_i|\bar{b}_k, \pi)}{Q_0{b_i}}, B \triangleq \max_{i\in \{k+1,\dots,T-1\}} B_i$.
\end{theorem}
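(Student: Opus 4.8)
The plan is to split the deviation into two independent error sources and control each on a high-probability event before combining by a union bound: the pointwise CDF-gap estimation carried by $\hat{h}^+$, and the Monte--Carlo estimation of $CVaR_\alpha(\hat{F}_{Y^L})$ through the sorted-sample estimator $\hat{C}_\alpha$ of \eqref{eq:cvar_estimator_sorted_sample}. Throughout I would condition on the delta-belief sample $\{b^{\Delta}_n\}$ used to form $\hat{h}^+$, so that $\hat{F}_{Y^L}$ is a fixed CDF at the moment the returns $\bar{R}^{Y^L}_{i,\hat{h}^+ + \eta}\overset{iid}{\sim}\hat{F}_{Y^L}$ are drawn; Brown's bounds are then applied conditionally and integrated afterward.

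First I would establish the deterministic backbone. Define the event
\[
E_1 \triangleq \bigl\{\, \sup\nolimits_{l\in\mathbb{R}}\,(g(l)-\hat{h}^+(l)) \le \eta \,\bigr\}.
\]
On $E_1$ we have $g(l)\le \hat{h}^+(l)+\eta$ for every $l$, hence
\[
\hat{F}_{Y^L}(l)=\min\bigl(1,\,P_{M_{P_s}}(\bar{R}_{k:T}\le l\,|\,\bar{b}_k,a_k,\pi)+\hat{h}^+(l)+\eta\bigr)\ \ge\ \min\bigl(1,\,P_{M_{P_s}}(\bar{R}_{k:T}\le l\,|\,\bar{b}_k,a_k,\pi)+g(l)\bigr)=F_{Y^L}(l),
\]
so $\hat{F}_{Y^L}\ge F_{Y^L}$ pointwise. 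I would first check that $\hat{F}_{Y^L}$ is a genuine CDF via the hypotheses of Theorem \ref{thm:tight_cvar_lower_bound}: $\hat{h}^+ + \eta$ is nonnegative, right-continuous, nondecreasing (inherited from the monotonicity of $g$ in $l$ through $f(l,i)$ and the bin construction \eqref{eq:h_functions}) and vanishes at $-\infty$. Since $\hat{F}_{Y^L}\ge F_{Y^L}$, the variable with CDF $\hat{F}_{Y^L}$ is first-order stochastically dominated by the one with CDF $F_{Y^L}$, so monotonicity of the coherent measure CVaR together with the (particle-belief form of the) lower-bound conclusion of Theorem \ref{thm:tight_lower_bound_for_v_and_q} gives $CVaR_\alpha(\hat{F}_{Y^L})\le CVaR_\alpha(F_{Y^L})=CVaR^{P_s}_\alpha[Y^L]\le Q^\pi_{M_P}(\bar{b}_k,a_k,\alpha)$.

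Next I would bound the Monte--Carlo error and assemble. Conditioned on $E_1$ (so $\hat{F}_{Y^L}$ is fixed), $\hat{Q}^{\pi,\hat{h}^+ + \eta}_{M_{P_s}}=\hat{C}_\alpha(\{\bar{R}^{Y^L}_{i,\hat{h}^+ + \eta}\}_{i=1}^{N_\Delta})$ is the sorted-sample CVaR estimator built from $N_\Delta$ i.i.d.\ draws of $\hat{F}_{Y^L}$, whose support lies in $[-R_{max}(T-k+1),R_{max}(T-k+1)]$, i.e.\ $b-a=2R_{max}(T-k+1)$. The second inequality of Theorem \ref{thm:brown_bounds} with $n=N_\Delta$ and confidence $\delta/4$ yields an event $E_2$ with $P(E_2^c)\le\delta/4$ on which $\hat{C}_\alpha(\{\bar{R}^{Y^L}_{i,\hat{h}^+ + \eta}\})\le CVaR_\alpha(\hat{F}_{Y^L})-v$, with $v$ exactly the quantity in the statement. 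Moreover the stated lower bound on $N_\Delta$ is precisely the hypothesis of Theorem \ref{thm:h_guarantees} with tolerance $\eta$ and confidence $\delta/4$, so $P(E_1^c)\le\delta/4$. On $E_1\cap E_2$ the two inequalities chain to $\hat{Q}^{\pi,\hat{h}^+ + \eta}_{M_{P_s}}\le CVaR_\alpha(\hat{F}_{Y^L})-v\le Q^\pi_{M_P}-v$, so $\{\hat{Q}^{\pi,\hat{h}^+ + \eta}_{M_{P_s}}-Q^\pi_{M_P}>v\}$ cannot occur, and a union bound gives $P(\hat{Q}^{\pi,\hat{h}^+ + \eta}_{M_{P_s}}-Q^\pi_{M_P}>v)\le P(E_1^c)+P(E_2^c)\le \delta/2\le\delta$.

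I expect the main obstacle to be the clean separation of the two randomness sources: Theorem \ref{thm:brown_bounds} must be invoked conditionally on the delta-belief sample, so that $\hat{F}_{Y^L}$ is a fixed CDF with the correct support, and only then integrated against the randomness governing $\hat{h}^+$. Two secondary points require care: verifying that $\hat{F}_{Y^L}$ meets the continuity assumption of Brown's theorem (the particle return CDF $P_{M_{P_s}}(\bar{R}_{k:T}\le l\,|\,\cdot)$ is a step function, so a smoothing convention or a continuity argument is needed), and confirming that the guarantee of Theorem \ref{thm:h_guarantees} controls $\sup_l(g(l)-\hat{h}^+(l))$ \emph{uniformly} in $l$, which is what lets the single event $E_1$ dominate the whole CDF simultaneously rather than merely pointwise.
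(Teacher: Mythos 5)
Your proof is correct, and while it uses the same ingredients as the paper's (Theorem \ref{thm:h_guarantees}, Brown's bounds from Theorem \ref{thm:brown_bounds}, stochastic dominance via coherence of CVaR, and the deterministic lower bound of Theorem \ref{thm:tight_lower_bound_for_v_and_q} with $\bar{R}_{k:T}$ in place of $R_{k:T}$), it is organized quite differently. The paper telescopes the deviation into \emph{four} terms via Theorem \ref{thm:prob_x_plus_y_bound}, each charged $\delta/4$: it introduces an auxiliary CDF $\min(1,P_{M_{P_s}}(\bar{R}_{k:T}\leq l\,|\,\cdot)+g(l))$ built from the \emph{theoretical} $g$ together with a hypothetical i.i.d.\ sample $\{\bar{R}^{Y^L}_{i,g}\}$ from it (never actually drawn by any algorithm), applies Brown's bounds twice --- once to the actual $\hat{h}^+ + \eta$ sample and once to the phantom $g$-sample, with $+\lambda$ and $-\lambda$ terms arranged to cancel --- and delegates the final comparison to the auxiliary Theorem \ref{thm:tight_lower_bound_guarantees_theoretical_g}. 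You collapse this to a two-event union bound: on $E_1$ you get the purely deterministic chain $CVaR_\alpha(\hat{F}_{Y^L})\leq CVaR_\alpha(F_{Y^L})\leq Q^\pi_{M_P}(\bar{b}_k,a_k,\alpha)$, so Brown's bound is invoked only once, on $E_2$. This buys a cleaner argument, eliminates the phantom sample and the $\pm\lambda$ bookkeeping, yields the stronger failure probability $\delta/2$ in place of the paper's $\delta$, and your explicit conditioning on the delta-belief sample before applying Brown's theorem (then integrating) is more careful than the paper, which mixes the two randomness sources inside its four terms and handles the $\hat{h}^+$-randomness only inside its $A_2$ term by a conditioning identical in spirit to your $E_1$.

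Two caveats, neither fatal. First, your description of $E_2$ as the event $\hat{C}_\alpha(\{\bar{R}^{Y^L}_{i,\hat{h}^+ +\eta}\})\leq CVaR_\alpha(\hat{F}_{Y^L})-v$ reproduces a sign typo in the paper's statement of Theorem \ref{thm:brown_bounds}; the second Brown inequality should bound $P\bigl(CVaR(X)-\hat{C}(X)<-\frac{b-a}{\alpha}\sqrt{\ln(1/\delta)/(2n)}\bigr)\leq\delta$, giving on $E_2$ the weaker $\hat{C}_\alpha\leq CVaR_\alpha(\hat{F}_{Y^L})+v$ --- which still rules out $\{\hat{Q}-Q^\pi_{M_P}>v\}$ on $E_1\cap E_2$, so your conclusion stands unchanged. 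Second, the two issues you flag at the end are real and are also unresolved in the paper: Brown's theorem formally requires a continuous distribution function while $\hat{F}_{Y^L}$ is a step function (the paper applies Theorem \ref{thm:brown_bounds} to it without comment), and $\hat{h}^+$ as written in \eqref{eq:h_functions} is left- rather than right-continuous at the bin boundaries $k_{i-1}$, so verifying that $\hat{F}_{Y^L}$ is a genuine CDF needs a right-continuous modification (which does not change the CVaR); flagging these puts your write-up ahead of, not behind, the paper's own proof.
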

The proof can be found in Appendix \ref{sec:performance_guarantees_proofs}.

\section{Conclusions}
We presented a framework for simplification of risk-averse POMDPs with performance guarantees, that allows reducing policy evaluation time in online deployment considering CVaR as the value function. Specifically, we studied the effects of replacing a computationally expensive belief-MDP transition model with a computationally cheaper one. To that end, we first established bounds for the CVaR of a random variable X using another random variable Y, by assuming bounds over their CDFs and PDFs difference. These bounds, which are of independent interest, were then used to bound with high probability the difference  between the value functions  that utilize the original and simplified belief-MDP transition models. A limitation of this study lies in the computation of \eqref{eq:m_i_estimator}, given that the computation of belief-MDP transition models remains an unresolved challenge within the field \citep{lim2023optimality}. 

\section{Acknowledgments}

This work was partially supported by the Israel Science Foundation (ISF).

\bibliographystyle{plainnat}
\bibliography{references.bib}

\appendix

\section{Appendix}

\subsection{Conditional value at risk (CVaR) remarks}\label{sec:cvar_remarks}

\begin{definition} (First order stochastic dominance)
	Let $X_1,X_2$ be random variables. We say that $X_1$ stochastically dominates in order 1 $X_2$ and denote $X_1\geq_{SD(1)} X_2$, iff
	$\mathbb{E}[\psi(X_1)]\geq \mathbb{E}[\psi(X_2)]$ for all integrable monotonic functions $\psi$.
\end{definition}

An equivalent definition for \ref{def:cvar_as_conditional_expectation} is that $X\geq Y$ iff $F_X(z)\leq F_Y(z)$.
In this paper, the sign $\leq$ between random variables denotes first order stochastic dominance.

\eqref{def:cvar_as_conditional_expectation} leads to other representations of CVaR \citep{pflug2000some}
\begin{equation}\label{eq:pflug_cvar_form}
	\begin{aligned}
		CVaR_\alpha(X)&=\mathbb{E}[X|X>F^{-1}(1-\alpha)]=\frac{1}{\alpha}\int_{1-\alpha}^1 F^{-1}(v)dv
		=\frac{1}{\alpha}\int_{(F^{-1}(1-\alpha), \infty)}u dF(u).
	\end{aligned}
\end{equation}

CVaR is a coherent risk measure \citep{pflug2000some} that satisfies
\begin{enumerate}
	\item Translation equivariant: $$CVaR_\alpha(X+c)=CVaR_\alpha(X)+c.$$
	\item Positively homogeneous: $CVaR_\alpha(cX)=cCVaR(X)$ for $c>0$.
	\item Convexity: For random variables $X_1,X_2$ and $\lambda\in (0,1)$, 
	$$CVaR_\alpha(\lambda X_1+(1-\lambda)X_2)\leq \lambda CVaR_\alpha(X_1) + (1-\lambda)CVaR_\alpha(X_2)$$
	\item Monotonicity: if $X_1\leq_{SD(1)} X_2$, then $CVaR_\alpha(X_1)\leq CVaR_\alpha(X_2)$.
\end{enumerate}

\subsection{Proofs of section \ref{sec:POMDP_preliminaries}}
\begin{theorem}\label{thm:belief_transition_probability}
	$$\begin{aligned}
		P(b_t&|b_{t-1},a_{t-1})= \\
		&\int_{z_{t}\in Z}P(b_{t}|b_{t-1},a_{t-1},z_{t})
		\int_{x_t\in X}P(z_t|x_t)
		\int_{x_{t-1}\in X}P(x_t|a_{t-1},x_{t-1})b(x_{t-1}) dz_t dx_{t-1}dx_t
	\end{aligned}$$	
\end{theorem}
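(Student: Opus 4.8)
The plan is to obtain the belief transition by marginalizing over the single observation $z_t$ that drives the deterministic belief update $b_t = \psi(b_{t-1}, a_{t-1}, z_t)$. First I would invoke the law of total probability to condition on $z_t$:
\[
P(b_t|b_{t-1},a_{t-1}) = \int_{z_t\in Z} P(b_t|b_{t-1},a_{t-1},z_t)\, P(z_t|b_{t-1},a_{t-1})\, dz_t.
\]
This already isolates the factor $P(b_t|b_{t-1},a_{t-1},z_t)$ that appears in the claimed formula, so the remaining work is to show that the observation-predictive term $P(z_t|b_{t-1},a_{t-1})$ equals the two inner integrals $\int_{x_t\in X} P(z_t|x_t)\int_{x_{t-1}\in X} P(x_t|a_{t-1},x_{t-1})\, b(x_{t-1})\, dx_{t-1}\, dx_t$.

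Next I would expand $P(z_t|b_{t-1},a_{t-1})$ by marginalizing first over the current state $x_t$ and then over the previous state $x_{t-1}$. Marginalizing over $x_t$ and using the POMDP conditional-independence assumption that the observation depends only on the current state, namely $P(z_t|x_t,b_{t-1},a_{t-1}) = P(z_t|x_t) = O(z_t|x_t)$, gives $P(z_t|b_{t-1},a_{t-1}) = \int_{x_t\in X} O(z_t|x_t)\, P(x_t|b_{t-1},a_{t-1})\, dx_t$. Then I would marginalize the state-predictive term over $x_{t-1}$ and apply the Markov transition property $P(x_t|x_{t-1},a_{t-1},b_{t-1}) = T(x_t|x_{t-1},a_{t-1})$ together with the definition of the belief $P(x_{t-1}|b_{t-1}) = b(x_{t-1})$, yielding $P(x_t|b_{t-1},a_{t-1}) = \int_{x_{t-1}\in X} T(x_t|x_{t-1},a_{t-1})\, b(x_{t-1})\, dx_{t-1}$.

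Substituting these two expansions back into the total-probability expression and nesting the integrals, which is justified by Tonelli's theorem since every integrand is nonnegative, produces exactly the claimed triple integral in the asserted order $dz_t\, dx_{t-1}\, dx_t$. The main obstacle is not the algebra but carefully justifying the two conditional-independence reductions: that conditioning on $x_t$ renders $z_t$ independent of $(b_{t-1},a_{t-1})$, and that conditioning on $(x_{t-1},a_{t-1})$ renders $x_t$ independent of $b_{t-1}$. Both follow directly from the Markov structure encoded in the POMDP tuple, since the observation model $O$ depends only on the current state and the transition model $T$ only on the previous state and action, so once these are invoked the remaining steps are routine.
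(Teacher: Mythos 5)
Your proof is correct and follows exactly the same route as the paper's: condition on $z_t$ via the law of total probability, expand $P(z_t|b_{t-1},a_{t-1})$ over $x_t$ using the observation model, expand $P(x_t|b_{t-1},a_{t-1})$ over $x_{t-1}$ using the transition model, and identify $P(x_{t-1}|b_{t-1},a_{t-1})$ with $b(x_{t-1})$. Your additional care in naming the two conditional-independence reductions and invoking Tonelli's theorem makes explicit what the paper leaves implicit, but the argument is the same.
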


\begin{proof}
	\begin{equation}
		\begin{aligned}
			&P(b_t|b_{t-1},a_{t-1}) = \int_{z_t\in Z}P(b_{t}|b_{t-1},a_{t-1},z_{t})P(z_{t}|b_{t-1},a_{t-1})dz_t\\
			& = \int_{z_{t}\in Z}P(b_{t}|b_{t-1},a_{t-1},z_{t})
			\int_{x_t\in X}P(z_t|x_t)P(x_t|b_{t-1},a_{t-1})dz_t dx_t\\
			& = \int_{z_{t}\in Z}P(b_{t}|b_{t-1},a_{t-1},z_{t})\int_{x_t\in X}P(z_t|x_t)
			\int_{x_{t-1}\in X}P(x_t|a_{t-1},x_{t-1})P(x_{t-1}|b_{t-1},a_{t-1})dz_t dx_{t-1}dx_t\\
			& = \int_{z_{t}\in Z}P(b_{t}|b_{t-1},a_{t-1},z_{t})
			\int_{x_t\in X}P(z_t|x_t)
			\int_{x_{t-1}\in X}P(x_t|a_{t-1},x_{t-1})b(x_{t-1}) dz_t dx_{t-1}dx_t
		\end{aligned}
	\end{equation}
\end{proof}

\subsection{Proofs of section \ref{sec:problem_formulation}}
\begin{theorem}\label{prf:simplified_cdf_expression}
	$$P_s(R_{k:T}\leq l|b_k,\pi) = \int_{b_{k+1:T} \in B^{T-k}} P(R_{k:T}\leq l|b_{k:T},\pi)\prod_{i=k+1}^T P_s(b_i|b_{i-1},\pi)db_{k+1:T}$$
\end{theorem}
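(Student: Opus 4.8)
The plan is to exploit the Markov structure of the belief-MDP together with the law of total probability, and to observe that the return $R_{k:T}=\sum_{\tau=k}^T c(b_\tau,\pi(b_\tau))$ is a deterministic function of the belief trajectory $b_{k:T}$. The crucial point is that, once we condition on the entire trajectory, the conditional CDF of the return carries no dependence on the transition measure, so the factor $P(R_{k:T}\leq l|b_{k:T},\pi)$ is identical whether the trajectory was generated by $P$ or by $P_s$. This is precisely why the same conditional term is allowed to appear in both the original CDF and in \eqref{eq:SimplifiedCDFReturn}, even though only the simplified transition model is used in the product.

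First I would apply the law of total probability under the simplified measure $P_s$, conditioning on the beliefs $b_{k+1:T}$:
$$
P_s(R_{k:T}\leq l|b_k,\pi) = \int_{b_{k+1:T}\in B^{T-k}} P_s(R_{k:T}\leq l|b_{k:T},\pi)\, P_s(b_{k+1:T}|b_k,\pi)\, db_{k+1:T}.
$$
Next I would invoke the Markov property of the belief-MDP: since the belief evolves as a Markov chain whose one-step transition under the simplified model is $P_s(b_i|b_{i-1},\pi)$, the joint density of the trajectory factorizes as $P_s(b_{k+1:T}|b_k,\pi)=\prod_{i=k+1}^T P_s(b_i|b_{i-1},\pi)$. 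Substituting this factorization produces the product term appearing in \eqref{eq:SimplifiedCDFReturn}.

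Finally I would justify replacing $P_s(R_{k:T}\leq l|b_{k:T},\pi)$ by $P(R_{k:T}\leq l|b_{k:T},\pi)$. Because each cost $c(b_\tau,\pi(b_\tau))$ is a fixed function of the belief $b_\tau$, the return $R_{k:T}$ is a deterministic function of $(b_k,\dots,b_T)$, so the event $\{R_{k:T}\leq l\}$ is $\sigma(b_{k:T})$-measurable. Its conditional probability given $b_{k:T}$ therefore collapses to the indicator $1_{R_{k:T}(b_{k:T})\leq l}$, an expression that does not reference the transition measure at all; hence the two conditional factors coincide and the integrand matches \eqref{eq:SimplifiedCDFReturn} exactly.

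The main obstacle, though largely bookkeeping, is stating the belief-MDP Markov property precisely enough to justify the trajectory factorization and confirming that the conditional return distribution is genuinely measure-independent rather than merely matching by construction. Making the measurability argument in the last step explicit is what turns this definitional-looking identity into an actual proof, and care is needed to ensure that the integration variables and the conditioning (on $b_k$ and $\pi$, and on the full trajectory) are handled consistently across the two measures.
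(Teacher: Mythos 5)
Your proposal is correct and follows the paper's proof exactly: you apply the law of total probability under $P_s$, factorize the trajectory density via the Markov property of the belief-MDP, and replace $P_s(R_{k:T}\leq l|b_{k:T},\pi)$ with $P(R_{k:T}\leq l|b_{k:T},\pi)$ by observing that both collapse to the indicator $1_{R_{k:T}\leq l}$ since the return is a deterministic function of the belief trajectory. The only difference is presentational --- you make the $\sigma(b_{k:T})$-measurability argument more explicit than the paper, which simply states that $R_{k:T}$ is constant given the beliefs.
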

\begin{proof}
	$R_{k:T}$ is a function of the beliefs $b_k,\dots,b_T$, and therefore integrating $R_{k:T}$ with respect to the probability measure $P_s$ is well defined.
	$$\begin{aligned}
		&P_s(R_{k:T}\leq l|b_k,\pi)=\int_{b_{k+1:T} \in B^{T-k}} P_s(R_{k:T}\leq l|b_{k:T},\pi)P_s(b_{k+1:T}|b_k,\pi)db_{k+1:T} \\
		&\overset{1}{=}\int_{b_{k+1:T} \in B^{T-k}} P_s(R_{k:T}\leq l|b_{k:T},\pi)\prod_{i=k+1}^T P_s(b_{i}|b_{i-1},\pi)db_{k+1:T} \\
		&\overset{2}{=}\int_{b_{k+1:T} \in B^{T-k}} 1_{R_{k:T}\leq l}\prod_{i=k+1}^T P_s(b_{i}|b_{i-1},\pi)db_{k+1:T}\\
		&\overset{2}{=}\int_{b_{k+1:T} \in B^{T-k}} P(R_{k:T}\leq l|b_{k:T}, \pi)\prod_{i=k+1}^T  P_s(b_{i}|b_{i-1},\pi)db_{k+1:T}
	\end{aligned}$$
	$^1$ Belief-MDP transition model is Markovian. \\
	$^2$ $R_{k:T}$ is a constant with respect constant beliefs $b_k,\dots,b_T$. 
\end{proof}

\subsection{Proofs of section \ref{sec:cvar_bounds}}\label{sec:cvar_bounds_proofs}

\begin{theorem}\label{prf:cvar_bound}
	Let $X$ and $Y$ be random variables. If there exists $\epsilon \geq 0$ such that $||F_{X} - F_{Y}||_{\infty}\leq \epsilon$, then \begin{enumerate}
		\item If $\epsilon < \alpha$ then $CVaR_\alpha(X)\leq \frac{\alpha-\epsilon}{\alpha}CVaR_{\alpha-\epsilon}(Y) + \frac{\epsilon}{\alpha} \times sup \quad Img(Y)$
		
		\item If $\epsilon \geq \alpha$ then $CVaR_\alpha(X)\leq \sup Img(Y)$
		
		\item If $\epsilon + \alpha < 1$ then $CVaR_\alpha(X)\geq \frac{\alpha+\epsilon}{\alpha}CVaR_{\alpha+\epsilon}(Y)-\frac{\epsilon}{\alpha}CVaR_{\epsilon}(Y)$
		
		\item If $\epsilon + \alpha \geq 1$ then $CVaR_\alpha(X) \geq \frac{1}{\alpha}[(\alpha+\epsilon-1)\inf Img(Y)+ E[Y]-\epsilon CVaR_{\epsilon}(Y)]$
	\end{enumerate}
\end{theorem}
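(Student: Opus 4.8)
The plan is to work entirely with the quantile representation of CVaR from \eqref{def:cvar_as_conditional_expectation}, namely $CVaR_\alpha(Z) = \frac{1}{\alpha}\int_{1-\alpha}^1 q_Z(v)\,dv$ where $q_Z(v) \triangleq \inf\{z : F_Z(z) \ge v\}$ is the generalized quantile function. The first step is to convert the uniform CDF bound $\|F_X - F_Y\|_\infty \le \epsilon$ into a pair of pointwise quantile bounds. From $F_X \le F_Y + \epsilon$ one gets the set inclusion $\{z : F_X(z) \ge v\} \subseteq \{z : F_Y(z) \ge v - \epsilon\}$, and taking infima yields $q_X(v) \ge q_Y(v-\epsilon)$; symmetrically, $F_X \ge F_Y - \epsilon$ gives $q_X(v) \le q_Y(v+\epsilon)$. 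These two inequalities are the engine for all four cases.

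For the upper bounds (cases 1 and 2) I would integrate $q_X(v) \le q_Y(v+\epsilon)$ over $v \in [1-\alpha, 1]$ and substitute $u = v + \epsilon$. When $\epsilon < \alpha$ the shifted argument stays in $[0,1]$ only up to $v = 1-\epsilon$; on $[1-\alpha, 1-\epsilon]$ the integral becomes $\int_{1-\alpha+\epsilon}^1 q_Y(u)\,du = (\alpha-\epsilon)CVaR_{\alpha-\epsilon}(Y)$, while on the remaining interval of length $\epsilon$ I bound $q_X(v)$ by $\sup Img(Y)$, producing the term $\tfrac{\epsilon}{\alpha}\sup Img(Y)$ and hence case 1. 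Case 2 ($\epsilon \ge \alpha$) then follows at once from the cruder estimate $CVaR_\alpha(X) \le \sup Img(Y)$.

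For the lower bounds (cases 3 and 4) I would integrate $q_X(v) \ge q_Y(v-\epsilon)$ and substitute $u = v - \epsilon$, giving $\int_{1-\alpha-\epsilon}^{1-\epsilon} q_Y(u)\,du$. When $\epsilon + \alpha < 1$ the lower limit is positive, and writing this as $\int_{1-\alpha-\epsilon}^1 q_Y - \int_{1-\epsilon}^1 q_Y = (\alpha+\epsilon)CVaR_{\alpha+\epsilon}(Y) - \epsilon CVaR_\epsilon(Y)$ yields case 3. When $\epsilon + \alpha \ge 1$ the argument $v-\epsilon$ drops below $0$ on part of the range, so I split the integral at $v = \epsilon$: on $[1-\alpha, \epsilon]$ I use $q_X(v) \ge \inf Img(Y)$ (an interval of length $\alpha+\epsilon-1$), and on $[\epsilon, 1]$ I use $\int_0^{1-\epsilon} q_Y(u)\,du = E[Y] - \epsilon CVaR_\epsilon(Y)$, invoking the identity $\int_0^1 q_Y = E[Y]$; this delivers case 4.

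The main obstacle is the careful bookkeeping at the boundaries where the shifted quantile argument leaves $[0,1]$: the substitution pushes the integration limit to $1+\epsilon$ (upper bounds) or below $0$ (lower bounds), where $q_Y$ is no longer finite, so these tails must instead be controlled by $\sup Img(Y)$, $\inf Img(Y)$, and $E[Y]$. This is also where an implicit support relationship enters — replacing $q_X$ by $\sup Img(Y)$ or $\inf Img(Y)$ presumes the support of $X$ does not extend past that of $Y$, which in the POMDP application holds because both returns lie in $[-(T-k+1)R_{max}, (T-k+1)R_{max}]$. Verifying these boundary identities, together with the monotonicity of $q_Y$ needed to integrate the quantile inequalities termwise, is the only delicate part; the remainder is routine repackaging of quantile integrals as CVaRs at the shifted confidence levels $\alpha \pm \epsilon$ and $\epsilon$.
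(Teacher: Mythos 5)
Your proposal is correct and takes essentially the same route as the paper's own proof: the paper likewise represents $CVaR_\alpha$ as $\frac{1}{\alpha}\int_{1-\alpha}^{1}$ of a quantile (there the upper quantile $\sup\{z:F(z)\leq \tau\}$ rather than your $q_Z(v)=\inf\{z:F_Z(z)\geq v\}$), converts $\|F_X-F_Y\|_\infty\leq\epsilon$ into a horizontal shift of the integration level by $\pm\epsilon$, and handles the overshoot of the level past $[0,1]$ via $\sup Img(Y)$, $\inf Img(Y)$, and the identity $\int_0^1 q_Y(u)\,du=E[Y]$, exactly as you do in the four cases. Your closing caveat is in fact more careful than the source: the replacement of $q_X$ by $\sup Img(Y)$ (resp.\ $\inf Img(Y)$) on the boundary intervals is not implied by $\|F_X-F_Y\|_\infty\leq\epsilon$ alone and requires the supports to be compatible (e.g.\ $X=Y+c$ with large $c$ violates items 1, 2 and 4 as literally stated), an assumption the paper's proof uses silently but which, as you note, holds in the POMDP application since both returns lie in $[-(T-k+1)R_{max},(T-k+1)R_{max}]$.
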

\begin{proof}
	\begin{equation}
		\begin{aligned}
			CVaR_\alpha(X)&=\frac{1}{\alpha}\int_{1-\alpha}^1 sup\{z|F_X(z)\leq \tau\} d\tau\\
			&\leq \frac{1}{\alpha}\int_{1-\alpha}^1 sup\{z|F_Y(z)\leq \tau + \epsilon\} d\tau\\
			&= \frac{1}{\alpha}\int_{1-\alpha+\epsilon}^{1+\epsilon} sup\{z|F_Y(z)\leq \tau\} d\tau\\
			&=\frac{1}{\alpha}[\int_{1-\alpha+\epsilon}^{1} sup\{z|F_Y(z)\leq \tau\} d\tau + \int_{1}^{1+\epsilon} sup\{z|F_Y(z)\leq 1\}d\tau]\triangleq A
		\end{aligned}
	\end{equation}
	If $\epsilon < \alpha$ then
	$$
	\begin{aligned}
		A&=\frac{1}{\alpha}\int_{1-\alpha+\epsilon}^{1} sup\{z|F_Y(z)\leq \tau\} d\tau + \frac{\epsilon}{\alpha} \times sup \quad Img(Y) \\
		&=\frac{\alpha-\epsilon}{\alpha}\frac{1}{\alpha-\epsilon}\int_{1-\alpha+\epsilon}^{1} sup\{z|F_Y(z)\leq \tau\} d\tau + \frac{\epsilon}{\alpha} \times sup \quad Img(Y)\\
		&=\frac{\alpha-\epsilon}{\alpha}CVaR_{\alpha-\epsilon}(Y) + \frac{\epsilon}{\alpha} \times sup \quad Img(Y)
	\end{aligned}
	$$
	If $\epsilon \geq \alpha$ then
	$$
	A=\frac{\alpha}{\alpha}sup \quad Img(Y)=sup \quad Img(Y)
	$$
	\begin{equation}
		\begin{aligned}
			CVaR_\alpha(X)&=\frac{1}{\alpha}\int_{1-\alpha}^1 sup\{z|F_X(z)\leq \tau\}d\tau\\
			&\geq \frac{1}{\alpha}\int_{1-\alpha}^1 sup\{z|F_Y(z)\leq \tau - \epsilon\}d\tau
			= \frac{1}{\alpha}\int_{1-\alpha-\epsilon}^{1-\epsilon} sup\{z|F_Y(z)\leq \tau\}d\tau \\
			&= \frac{1}{\alpha}[\int_{1-(\alpha+\epsilon)}^{1} sup\{z|F_Y(z)\leq \tau\}d\tau - \int_{1-\epsilon}^{1} sup\{z|F_Y(z)\leq \tau\}d\tau]\triangleq B
		\end{aligned}
	\end{equation}
	If $\epsilon + \alpha < 1$ then
	$$
	\begin{aligned}
		B&=\frac{\alpha+\epsilon}{\alpha}CVaR_{\alpha+\epsilon}(Y)-\frac{\epsilon}{\alpha}\frac{1}{\epsilon}\int_{1-\epsilon}^{1} sup\{z|F_Y(z)\leq \tau\}d\tau
		&=\frac{\alpha+\epsilon}{\alpha}CVaR_{\alpha+\epsilon}(Y)-\frac{\epsilon}{\alpha}CVaR_{\epsilon}(Y)
	\end{aligned}
	$$
	If $\epsilon + \alpha \geq 1$ then
	$$
	\begin{aligned}
		B&=\frac{1}{\alpha}[\int_{1-(\alpha+\epsilon)}^0 sup\{z|F_Y(z)\leq \tau\}d\tau+ \int_0^1 sup\{z|F_Y(z)\leq \tau\}d\tau]-\frac{\epsilon}{\alpha}CVaR_{\epsilon}(Y) \\
		&=\frac{1}{\alpha}[(\alpha+\epsilon-1)\inf Img(Y)+ E[Y]-\epsilon CVaR_{\epsilon}(Y)]
	\end{aligned}
	$$
\end{proof}

\begin{theorem}\label{prf:tight_cvar_lower_bound}
	(Tighter CVaR Lower Bound) Let $\alpha \in (0,1)$, $X$ and $Y$ be random variables. Define the a random variable $Y^L$ such that $F_{Y^L}(y) \triangleq min(1, F_Y(y) + g(y))$ for $g:\mathbb{R}\rightarrow [0, \infty)$. Assume 
	$lim_{x \rightarrow -\infty}g(x)=0$, $g$ is continuous from the right and monotonic increasing.
	If $\forall x\in \mathbb{R}, F_X(x)\leq F_Y(x)+g(x)$, then $F_{Y^L}$ is a CDF and $CVaR_\alpha(Y^L)\leq CVaR_\alpha(X).$
\end{theorem}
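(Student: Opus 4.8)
The plan is to prove the two assertions in the order stated: first that $F_{Y^L}$ is a genuine CDF, and then that $F_X \leq F_{Y^L}$ pointwise, which via first order stochastic dominance together with the monotonicity of CVaR delivers $CVaR_\alpha(Y^L)\leq CVaR_\alpha(X)$. The whole argument is essentially a verification plus one application of the coherence of CVaR recalled in Appendix \ref{sec:cvar_remarks}.

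For the CDF claim I would check the three defining properties of $F_{Y^L}(y)=\min(1, F_Y(y)+g(y))$. Monotonicity is immediate, since $F_Y$ is nondecreasing (being a CDF) and $g$ is monotonic increasing by assumption, so $F_Y+g$ is nondecreasing, and taking the pointwise minimum with the constant $1$ preserves this. Right-continuity follows because $F_Y$ is right-continuous and $g$ is assumed continuous from the right, hence $F_Y+g$ is right-continuous, and a minimum of two right-continuous functions is right-continuous. For the limit at $-\infty$, both $F_Y(x)\to 0$ and $g(x)\to 0$ by assumption, so $F_{Y^L}(x)\to 0$. The one point requiring care is the limit at $+\infty$: here the hypothesis $g\geq 0$ is what does the work, since it gives $F_Y(x)+g(x)\geq F_Y(x)\to 1$, so the cap at $1$ forces $F_{Y^L}(x)\to 1$.

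For the CVaR bound, note that $F_X(x)\leq 1$ trivially and $F_X(x)\leq F_Y(x)+g(x)$ by hypothesis, whence $F_X(x)\leq \min(1, F_Y(x)+g(x))=F_{Y^L}(x)$ for every $x$. By the characterization of first order stochastic dominance recalled in Appendix \ref{sec:cvar_remarks} ($X\geq_{SD(1)} Y$ iff $F_X\leq F_Y$), this pointwise inequality says exactly $Y^L\leq_{SD(1)} X$. Since CVaR is monotone with respect to first order stochastic dominance (property 4 of a coherent risk measure), $Y^L\leq_{SD(1)} X$ implies $CVaR_\alpha(Y^L)\leq CVaR_\alpha(X)$, which is the claim.

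There is no deep obstacle, but the two places to be careful are exactly the two spots where $g\geq 0$ and the $\min$ interact: the limit at $+\infty$, where nonnegativity of $g$ absorbs any excess above $1$ through the cap; and the absorption of the $\min$ in the pointwise inequality $F_X\leq F_{Y^L}$, where the trivial bound $F_X\leq 1$ is precisely what makes the cap at $1$ harmless. Everything else is routine, and no estimation or calculation beyond these elementary checks is needed.
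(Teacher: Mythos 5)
Your proposal is correct and follows essentially the same route as the paper's proof: verify the three CDF axioms for $F_{Y^L}$, establish the pointwise inequality $F_X \leq F_{Y^L}$ (your one-line absorption of the $\min$ via $F_X \leq 1$ is equivalent to the paper's two-case split), and conclude via first order stochastic dominance and the monotonicity of CVaR as a coherent risk measure. If anything, your handling of right-continuity (minimum of two right-continuous functions) is slightly more careful than the paper's phrasing, which attributes it to a ``sum,'' but the substance is identical.
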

\begin{proof}
	In order to prove that $F_{Y^L}$ is a CDF we need to prove that $F_{Y^L}$ is:
	\begin{enumerate}
		\item Monotonic increasing
		\item $F:\mathbb{R}:\rightarrow [0,1]$, $\lim_{x\rightarrow \infty}F_{Y^L}(x)=1$, $\lim_{x\rightarrow -\infty}F_{Y^L}(x)=0$
		\item Continuous from the right
	\end{enumerate}
	
	\textbf{Monotonic increasing:}
	Note that for every $f_i:\mathbb{R}\rightarrow \mathbb{R}, i=1,2$ that are monotonic increasing, $f_1(f_2(x)))$ is also monotonic increasing in x. Denote $f(x):=min(x,1)$ and $f_2(x):=F_Y(x)+g(x)$. $F_Y(x)$ is a CDF and therefore monotonic increasing, so $f_2(x)$ is monotonic increasing as a sum of monotonic increasing functions. $f_1$ is also monotonic increasing, and $F_{Y^L}(x)=f_1(f_2(x))$. Therefore $F_{Y^L}(x)$ is monotonic increasing. \\
	
	\textbf{Limits:} $$
	1\geq\lim_{x\rightarrow \infty} F_{Y^L}(x)=\lim_{x\rightarrow \infty} min(1,F_Y(x)+g(x))\geq \lim_{x\rightarrow \infty}min(1,F_Y(x))=\lim_{x\rightarrow \infty} F_Y(x)=1
	$$
	and therefore $\lim_{x\rightarrow \infty} F_{Y^L}(x)=1$.
	$$\begin{aligned}
	0\leq \lim_{x\rightarrow -\infty} F_{Y^L}(x)=\lim_{x\rightarrow -\infty} min(1,F_Y(x)+g(x))&\leq \lim_{x\rightarrow -\infty} F_Y(x)+g(x) \\
	&=\lim_{x\rightarrow -\infty} F_Y(x)+\lim_{x\rightarrow -\infty} g(x)=0	
	\end{aligned}$$
	and therefore $\lim_{x\rightarrow -\infty} F_{Y^L}(x)=0$. By definition $\forall x\in \mathbb{R},F_{Y^L}(x)\leq 1$, and $\forall x\in \mathbb{R},F_{Y^L}(x)\geq 0$ because both g and $F_{Y^L}$ are non negative functions. \\
	
	\textbf{Continuity from the right:}
	$F_Y$ is continuous from the right because it is a CDF, and therefore $F_{Y^L}$ is continuous from the right as a sum of continuous from the right functions. \\
	Thus, $F_{Y^L}$ is a CDF. \\
	
	\textbf{Bound proof:}
	If $Y^L\leq X$, then $CVaR_\alpha(Y^L)\leq CVaR_\alpha(X)$ because CVaR is a coherent risk measure. Note that 
	if $F_Y(x) + g(x)<1$ then
	$$F_X(x)\leq F_Y(x) + g(x)=F_{Y^L}(x),$$
	and if $F_Y(x) + g(x)\geq 1$, $1=F_{Y^L}(x)\geq F_X(x)$. Therefore $Y^L\leq X$.
\end{proof}

\begin{theorem}\label{prf:lower_cvar_bound_using_density_bound}
	Let $\alpha \in (0,1)$, $X$ and $Y$ random variables. Define $h:\mathbb{R}\rightarrow [0,\infty)$ to be a continuous function, $g(z):=\int_{-\infty}^z h(x)dx$ and $Y^L$ to be a random variable such that $F_{Y^L}(y):=\min(1, F_{Y^L}(y) + g(y))$. If $\lim_{z\rightarrow -\infty} g(z)=0$ and $\forall x\in \mathbb{R},f_x(x)\leq f_y(z) + h(x)$, then $F_{Y^L}$ is a CDF and $CVaR_\alpha(Y^L)\leq CVaR_\alpha(X)$.
\end{theorem}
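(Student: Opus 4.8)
The plan is to reduce this statement to Theorem \ref{thm:tight_cvar_lower_bound}, which already delivers the CVaR lower bound once we supply a pointwise CDF inequality $F_X(x) \le F_Y(x) + g(x)$ together with the structural hypotheses on $g$ (nonnegative, vanishing at $-\infty$, monotonic increasing, right-continuous). So the entire task reduces to verifying that the particular $g$ constructed here from the density bound $h$ satisfies those hypotheses and produces the required CDF inequality; the conclusions that $F_{Y^L}$ is a CDF and that $CVaR_\alpha(Y^L)\le CVaR_\alpha(X)$ then follow verbatim from Theorem \ref{thm:tight_cvar_lower_bound}.

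First I would check the structural properties of $g(z) = \int_{-\infty}^z h(x)\,dx$. Since $h \ge 0$, the integral is nondecreasing in $z$, which gives monotonicity. Because $h$ is continuous, the fundamental theorem of calculus makes $g$ continuously differentiable, hence in particular continuous from the right. The requirement $\lim_{z\to-\infty} g(z) = 0$ is assumed directly. Thus $g$ meets every condition imposed on it in Theorem \ref{thm:tight_cvar_lower_bound}.

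Next I would derive the pointwise CDF bound by integrating the density inequality. For every $x\in\mathbb{R}$,
\begin{equation}
	F_X(x) = \int_{-\infty}^x f_X(t)\,dt \le \int_{-\infty}^x \bigl(f_Y(t) + h(t)\bigr)\,dt = F_Y(x) + g(x),
\end{equation}
where the inequality uses $f_X \le f_Y + h$ pointwise together with monotonicity of the integral, and the final equality splits the integral using the definition of $g$. This is precisely the hypothesis $F_X(x) \le F_Y(x) + g(x)$ demanded by Theorem \ref{thm:tight_cvar_lower_bound}.

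Having established both the structural conditions on $g$ and the CDF bound, I would invoke Theorem \ref{thm:tight_cvar_lower_bound} to conclude simultaneously that $F_{Y^L}$ is a CDF and that $CVaR_\alpha(Y^L) \le CVaR_\alpha(X)$. I do not expect a genuine obstacle: the only mildly delicate point is ensuring the improper integrals are well defined and that splitting them is legitimate, which is immediate since $f_X,f_Y$ are densities and $h$ is nonnegative and continuous. The substantive content of the result lies entirely in having set up $g$ as a running integral of $h$, so that the already-proven machinery of Theorem \ref{thm:tight_cvar_lower_bound} applies; this theorem is essentially the density-level reformulation of that CDF-level bound.
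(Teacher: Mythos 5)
Your proposal is correct and follows essentially the same route as the paper's proof: verify that $g(z)=\int_{-\infty}^z h(x)\,dx$ is monotonic increasing (from $h\geq 0$), right-continuous (via the fundamental theorem of calculus using continuity of $h$), and vanishes at $-\infty$ (assumed), then integrate the density inequality to obtain $F_X(x)\leq F_Y(x)+g(x)$ and invoke Theorem \ref{thm:tight_cvar_lower_bound}. No gaps to report.
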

\begin{proof}
	We will show the g satisfies the properties of Theorem \ref{thm:tight_cvar_lower_bound}, and therefore this theorem holds. We need to prove that \begin{enumerate}
		\item $\lim_{z\rightarrow -\infty} g(z)=0$
		\item g is continuous from the right.
		\item g is monotonic increasing.
		\item $F_X(y)\leq F_Y(y) + g(y)$
	\end{enumerate}
	It is given in the theorem's assumptions that $\lim_{z\rightarrow -\infty} g(z)=0$, so (1) holds. h is non negative and therefore $g$ is monotonic increasing, so (3) holds. $g$ is continuous if its derivative exists for all $z\in \mathbb{R}$. Let $z\in \mathbb{R}$ and $a<z$.
	$$
	\frac{d}{dz}g(z) = \frac{d}{dz}\int_{-\infty}^z h(x)dx = \frac{d}{dz}[\int_{-\infty}^a h(x)dx + \int_{a}^z h(x)dx]=\frac{d}{dz}\int_{a}^z h(x)dx = h(z)
	$$ where the third equality holds because $\int_{-\infty}^a h(x)dx = g(a)$ is a constant that does not depend on z. The last equality holds from the fundamental theorem of calculus because $h$ is continuous. 	Finally, (4) holds because
	$$F_X(y) \triangleq \int_{-\infty}^y f_x(x)dx\leq \int_{-\infty}^y f_y(x) + h(x)dx \triangleq F_Y(y) + g(y).$$
\end{proof}

\subsection{Proofs of section \ref{sec:bounds_for_simplified_belief_model}}\label{sec:bounds_for_simplified_belief_model_proofs}

\begin{theorem}\label{prf:simplification_bound_over_belief_distribution}
	Denote a measure for the difference between two probability measures $P$ and $P_s$. Then the following holds
	$$
	\begin{aligned}
		&|P(R_{k:T} \leq l|b_k,\pi)-P_s(R_{k:T} \leq l|b_k,\pi)| \leq \sum_{i=k+1}^{T-1}  
		\mathbb{E}_{b_{{k+1}:i} \sim P_s}
		[1_{R_{k+1:i}\leq f(l,i)}\Delta^s(b_i, a_i)|b_k,\pi].
	\end{aligned}
	$$
\end{theorem}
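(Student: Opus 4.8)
The plan is to reduce the statement to a difference of two product measures and then telescope. By \eqref{eq:SimplifiedCDFReturn} and the analogous expression defining $P(R_{k:T}\le l\mid b_k,\pi)$, both CDFs integrate the \emph{same} conditional factor $P(R_{k:T}\le l\mid b_{k:T},\pi)=1_{R_{k:T}\le l}$ (the return is deterministic once the beliefs are fixed) against the product transition kernels of $P$ and $P_s$ respectively. Subtracting them gives
\begin{equation}
P(R_{k:T}\le l\mid b_k,\pi)-P_s(R_{k:T}\le l\mid b_k,\pi)=\int_{b_{k+1:T}}1_{R_{k:T}\le l}\Bigl[\prod_{i=k+1}^{T}P(b_i\mid b_{i-1},\pi)-\prod_{i=k+1}^{T}P_s(b_i\mid b_{i-1},\pi)\Bigr]db_{k+1:T},
\end{equation}
so that everything hinges on comparing two products that differ factor by factor.

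Writing $A_i\equiv P(b_i\mid b_{i-1},\pi)$ and $B_i\equiv P_s(b_i\mid b_{i-1},\pi)$, I would apply the telescoping identity $\prod_i A_i-\prod_i B_i=\sum_j\bigl(\prod_{i<j}B_i\bigr)(A_j-B_j)\bigl(\prod_{i>j}A_i\bigr)$, deliberately placing the \emph{simplified} kernels on the prefix (so that the surviving integral becomes an expectation under $P_s$, as the statement requires) and the \emph{original} kernels on the suffix. Taking absolute values and applying the triangle inequality turns the signed difference into a sum of nonnegative terms, each carrying a single factor $|A_j-B_j|=|P(b_j\mid b_{j-1},\pi)-P_s(b_j\mid b_{j-1},\pi)|$.

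The crux is the indicator $1_{R_{k:T}\le l}$, which couples all the beliefs and thereby obstructs integrating out the suffix. I would remove this coupling by a monotone domination: since $|c(b_t,a_t)|\le R_{max}$, the tail return satisfies $R_{i+1:T}\ge -(T-i)R_{max}$, whence $R_{k:T}\le l$ forces $R_{k+1:i}\le l-c(b_k,a_k)+(T-i)R_{max}=f(l,i)$, i.e. $1_{R_{k:T}\le l}\le 1_{R_{k+1:i}\le f(l,i)}$ with $i=j-1$, and the dominating indicator depends only on $b_{k+1:i}$. After this replacement the suffix $\prod_{i'>j}A_{i'}$ integrates to $1$ over $b_{j+1:T}$, the factor $|A_j-B_j|$ integrates over $b_j$ to exactly the total-variation quantity $\Delta^s(b_{j-1},a_{j-1})$ of \eqref{eq:TVdistBel}, and the prefix $P_s$-product over $b_{k+1:j-1}$ recasts the remainder as $\mathbb{E}_{b_{k+1:j-1}\sim P_s}[1_{R_{k+1:j-1}\le f(l,j-1)}\Delta^s(b_{j-1},a_{j-1})\mid b_k,\pi]$. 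Reindexing $i=j-1$ produces the asserted sum of such expectations.

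The main obstacle is precisely this indicator-domination step: one cannot naively integrate out the future beliefs because $1_{R_{k:T}\le l}$ sees them, and it is the worst-case tail bound $(T-i)R_{max}$ that both decouples the future and dictates the offset in $f(l,i)$. A secondary point requiring care is the bookkeeping of the empty products at the telescoping endpoints (the $j=T$ term has an empty suffix and the $j=k+1$ term an empty prefix); resolving the boundary term associated with the root transition $b_k\to b_{k+1}$ is what pins down the exact lower summation index.
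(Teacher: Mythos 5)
Your route is, in substance, the paper's own: the closed-form telescoping identity you invoke is exactly the one-step identity of Theorem \ref{thm:recursive_relation_between_belief_mupltiplication} applied repeatedly, with the same placement of the simplified kernels on the prefix and the original kernels on the suffix, the same indicator domination via the worst-case tail bound $R_{i+1:T}\geq -(T-i)R_{max}$ (which is what produces the offset in $f(l,i)$), the same integration of the suffix to one, and the same extraction of the total-variation factor $\Delta^s$. The only cosmetic difference is that the paper phrases the telescoping as a recursion $g(T)\leq g(T-1)+\mathbb{E}_{P_s}[\cdot]$ and unrolls it, whereas you write the fully expanded sum at once.

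The boundary issue you flagged but left unresolved is, however, a genuine one, and it does not resolve in favor of the stated lower summation index. The $j=k+1$ telescoping term has an empty prefix and a suffix that integrates to one, so it survives as $1_{0\leq f(l,k)}\,\Delta^s(b_k,a_k)$; your argument therefore proves the bound with the sum starting at $i=k$, not $i=k+1$, and there is no way to absorb this root term. Indeed, for $T=k+1$ the stated right-hand side is an empty sum, while the left-hand side is generically positive whenever the single transition kernel out of $b_k$ is actually simplified, so the statement as indexed is unreachable by this (or any) route. Note that the paper's own proof leaks at exactly the same place: the recursion is unrolled only down to $g(k+1)$, and the leftover $g(k+1)\leq 1_{0\leq f(l,k)}\,\Delta^s(b_k,a_k)$ is silently dropped. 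So your proposal faithfully reproduces the paper's argument, off-by-one included; to finish it honestly, either add the $i=k$ term to the right-hand side or state the extra assumption under which it vanishes (e.g., the first transition is not simplified, $P(\cdot\,|\,b_k,a_k)=P_s(\cdot\,|\,b_k,a_k)$).
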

\begin{proof}
	Define
	$$
	\Delta^s(b_{T-1}, a_{T-1}):=
	\int_{b_T \in B}|P(b_T|b_{T-1},a_{T-1}) - P_s(b_T|b_{T-1},a_{T-1})|db_T
	$$ \\
	to be a measure of difference between and simplified and theoretical belief transition probabilities. Denote $$
	g(t)\triangleq \int_{b_{k+1:t}\in B^{t-k}} 1_{R_{k:t}\leq l-c(b_k,a_k) + (T-t)R_{max}} |\prod_{i=k+1}^t P(b_i|b_{i-1}, a_i) - \prod_{i=k+1}^t P_s(b_i|b_{i-1}, a_i)|db_{k+1:t}.
	$$ 
	By conditioning over the beliefs path from time k+1 to time T we get
	\begin{equation}
		\begin{aligned}
			P(\sum_{t=k}^T c(b_t,a_t)\leq l|b_{k},\pi) & = \int_{b_{k+1:T}\in B^{T-k}} P(R_{k:T}\leq l|b_{k:T},\pi)\prod_{i=k+1}^T P(b_i|b_{i-1}, \pi, a_i)db_{k+1:T}\\
			& = \int_{b_{k+1:T}\in B^{T-k}} 1_{R_{k:T}\leq l}\prod_{i=k+1}^T P(b_i|b_{i-1}, \pi, a_i)db_{k+1:T}.
		\end{aligned}
	\end{equation}
	
	Note that $R_{k:T}$ is a constant given $\pi$ and $b_{k:T}$. Hence,
	\begin{equation}
		\begin{aligned}
			&P(R_{k:T}\leq l|b_k,\pi)-P_s(R_{k:T}\leq l|b_k,\pi)\\
			&=\int_{b_{k+1:T}\in B^{T-k}} 1_{R_{k:T}\leq l} [\prod_{i=k+1}^T P(b_i|b_{i-1}, \pi, a_i) - \prod_{i=k+1}^T P_s(b_i|b_{i-1}, \pi, a_i)]db_{k+1:T}
		\end{aligned}
	\end{equation}
	By applying the triangle inequality we get $$\begin{aligned}
		&|P(R_{k:T}\leq l|b_k,\pi)-P_s(R_{k:T}\leq l|b_k,\pi)|\\
		&\leq \int_{b_{k+1:T}\in B^{T-k}} 1_{R_{k:T}\leq l} |\prod_{i=k+1}^T P(b_i|b_{i-1}, \pi, a_i) - \prod_{i=k+1}^T P_s(b_i|b_{i-1}, \pi, a_i)|db_{k+1:T}=g(T)
	\end{aligned}$$
	From \ref{thm:recursive_relation_between_belief_mupltiplication} we know that 
	$$
	\begin{aligned}
		&\prod_{i=k+1}^T P(b_i|b_{i-1}, \pi, a_i) - \prod_{i=k+1}^T P_s(b_i|b_{i-1}, \pi, a_i)\\
		&= [P(b_T|b_{T-1},a_{T-1}) - P_s(b_T|b_{T-1},a_{T-1})] \prod_{i=k+1}^{T-1}P_s(b_i|b_{i-1},a_{i-1}) \\
		&\quad + P(b_T|b_{T-1},a_{T-1})[\prod_{i=k+1}^{T-1} P(b_i|b_{i-1}, \pi, a_i) - \prod_{i=k+1}^{T-1} P_s(b_i|b_{i-1}, \pi, a_i)].
	\end{aligned}
	$$
	By plugging this into g we get
	$$\begin{aligned}
		&g(T)=\int_{b_{k+1:T}\in B^{T-k}} 1_{R_{k:T}\leq l-c(b_k,a_k)} |\prod_{i=k+1}^T P(b_i|b_{i-1}, a_i) - \prod_{i=k+1}^T P_s(b_i|b_{i-1}, a_i)|db_{k+1:T} \\
		&\overset{1}{=}\int_{b_{k+1:T}\in B^{T-k}} 1_{R_{k:T}\leq l-c(b_k,a_k)} |[P(b_T|b_{T-1},a_{T-1}) - P_s(b_T|b_{T-1},a_{T-1})] \\
		&\times \prod_{i=k+1}^{T-1}P_s(b_i|b_{i-1},a_{i-1}) \\
		&\quad + P(b_T|b_{T-1},a_{T-1})[\prod_{i=k+1}^{T-1} P(b_i|b_{i-1}, \pi, a_i) - \prod_{i=k+1}^{T-1} P_s(b_i|b_{i-1}, \pi, a_i)]|db_{k+1:t} \\
		&\overset{2}{\leq} \underbrace{\int_{b_{k+1:T}\in B^{T-k}} \!\!\!\!\!\!\!\!\!\!\!\!\!\!\!\! 1_{R_{k:T}\leq l-c(b_k,a_k)} \prod_{i=k+1}^{T-1}P_s(b_i|b_{i-1},a_{i-1})|P(b_T|b_{T-1},a_{T-1}) - P_s(b_T|b_{T-1},a_{T-1})|db_{k+1:T}}_{A_1} \\
		&+\underbrace{\int_{b_{k+1:T}\in B^{T-k}} \!\!\!\!\!\!\!\!\!\!\!\!\!\!\!\! 1_{R_{k:T}\leq l-c(b_k,a_k)} P(b_T|b_{T-1},a_{T-1})|\prod_{i=k+1}^{T-1} P(b_i|b_{i-1}, a_i) - \prod_{i=k+1}^{T-1} P_s(b_i|b_{i-1}, \pi, a_i)|db_{k+1:T}}_{A_2}
	\end{aligned}$$
	$^1$ Theorem \ref{thm:recursive_relation_between_belief_mupltiplication} \\
	$^2$ Triangle inequality \\ \\
	The term $A_1$ is an expectation over the TV distance, with respect to the simplified distribution
	$$A_1=\mathbb{E}_{b_{k:T-1}\sim P_s}[1_{R_{k:T-1}\leq l-c(b_k,a_k)+R_{max}}\Delta^s(b_{T-1}, a_{T-1})].$$
	The term $A_2$ can be expressed using $g(T-1)$, and yields a recursive relation. 
	$$\begin{aligned}
		&A_2\overset{3}{\leq}\int_{b_{k+1:T}\in B^{T-k}} 1_{R_{k:T-1}\leq l-c(b_k,a_k)+R_{max}} P(b_T|b_{T-1},a_{T-1}) \\
		&\times |\prod_{i=k+1}^{T-1} P(b_i|b_{i-1}, a_i) - \prod_{i=k+1}^{T-1} P_s(b_i|b_{i-1}, \pi, a_i)|db_{k+1:T} \\
		&=\int_{b_{k+1:T}\in B^{T-k}} 1_{R_{k:T-1}\leq l-c(b_k,a_k)+R_{max}}|\prod_{i=k+1}^{T-1} P(b_i|b_{i-1}, a_i) - \prod_{i=k+1}^{T-1} P_s(b_i|b_{i-1}, \pi, a_i)| \\
		&\underbrace{\int_{b_T}P(b_T|b_{T-1},a_{T-1})db_T}_{=1}db_{k+1:T-1} \\
		&=\int_{b_{k+1:T}\in B^{T-k}} 1_{R_{k:T-1}\leq l-c(b_k,a_k)+R_{max}} \\
		&\times |\prod_{i=k+1}^{T-1} P(b_i|b_{i-1}, a_i) - \prod_{i=k+1}^{T-1} P_s(b_i|b_{i-1}, \pi, a_i)| db_{k+1:T-1}=g(T-1)
	\end{aligned}$$
	$^3$ $1_{R_{k+1:T}\leq l-c(b_k,a_k)}=1_{R_{k+1:T-1}\leq l-c(b_k,a_k)-c(b_T,a_T)}\leq 1_{R_{k+1:T-1}\leq l-c(b_k,a_k)+R_{max}}$ \\\\
	Hence, $$g(T)\leq g(T-1) + \mathbb{E}_{b_{k:T-1}\sim P_s}[1_{R_{k:T-1}\leq l-c(b_k,a_k)+R_{max}}\Delta^s(b_{T-1}, a_{T-1})].$$
	By following this recursive bound we get,
	$$\begin{aligned}
		g(T)&\leq g(T-1) + \mathbb{E}_{b_{k:T-1}\sim P_s}[1_{R_{k:T-1}\leq l-c(b_k,a_k)+R_{max}}\Delta^s(b_{T-1}, a_{T-1})] \\
		&\leq \sum_{i=k+1}^{T-1} \mathbb{E}_{b_{k:i}\sim P_s}[1_{R_{k:T-1}\leq l-c(b_k,a_k)+(T-i)R_{max}}\Delta^s(b_{T-1}, a_{T-1})]
	\end{aligned}$$
	when the last inequality holds when applying the recursive relation that is exhibited in the first inequality.
\end{proof}

\begin{theorem}\label{thm:recursive_relation_between_belief_mupltiplication}
	$$
	\begin{aligned}
		&\prod_{i=k+1}^T P(b_i|b_{i-1}, \pi, a_i) - \prod_{i=k+1}^T P_s(b_i|b_{i-1}, \pi, a_i)\\
		&= [P(b_T|b_{T-1},a_{T-1}) - P_s(b_T|b_{T-1},a_{T-1})] \prod_{i=k+1}^{T-1}P_s(b_i|b_{i-1},a_{i-1}) \\
		&\quad + P(b_T|b_{T-1},a_{T-1})[\prod_{i=k+1}^{T-1} P(b_i|b_{i-1}, \pi, a_i) - \prod_{i=k+1}^{T-1} P_s(b_i|b_{i-1}, \pi, a_i)].
	\end{aligned}
	$$
\end{theorem}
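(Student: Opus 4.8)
The plan is to recognize this as an instance of the elementary ``difference of products'' identity $ab - cd = (a-c)d + a(b-d)$, applied by splitting off the final ($i=T$) factor from each of the two telescoping products. Concretely, I would introduce the shorthand $a \triangleq P(b_T|b_{T-1},a_{T-1})$ and $c \triangleq P_s(b_T|b_{T-1},a_{T-1})$ for the last factors, together with $b \triangleq \prod_{i=k+1}^{T-1} P(b_i|b_{i-1},\pi,a_i)$ and $d \triangleq \prod_{i=k+1}^{T-1} P_s(b_i|b_{i-1},\pi,a_i)$ for the truncated products. Then the left-hand side of the claim is exactly $ab - cd$ and the right-hand side is exactly $(a-c)d + a(b-d)$, so the statement reduces to the scalar identity above.

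First I would verify the scalar identity by expanding the right-hand side: $(a-c)d + a(b-d) = ad - cd + ab - ad = ab - cd$, where the $ad$ terms cancel. This is the only computation required, and it is immediate. Next I would substitute the definitions of $a,b,c,d$ back in, noting that $ab = \prod_{i=k+1}^{T} P(b_i|b_{i-1},\pi,a_i)$ and $cd = \prod_{i=k+1}^{T} P_s(b_i|b_{i-1},\pi,a_i)$, since multiplying the truncated product by its $i=T$ factor recovers the full product; this matches the left-hand side of the theorem. Substituting into $(a-c)d + a(b-d)$ reproduces the two summands on the right-hand side verbatim.

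There is essentially no obstacle here: the result is a purely algebraic regrouping with no probabilistic content, and it holds for arbitrary real (or even formal) quantities, not just transition densities. The only point to state carefully is the convention that the empty product (when $T = k+1$) equals $1$, so that the truncated products $b$ and $d$ are well defined and the identity degenerates correctly to $a - c$ in the base case. I would therefore present the one-line expansion, remark on the base-case convention, and conclude.
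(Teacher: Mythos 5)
Your proposal is correct and is essentially identical to the paper's proof: the appendix verifies the same scalar identity $ab-cd=(a-c)d+a(b-d)$ by adding and subtracting $x_T\prod_{i}y_i$, which is just your expansion read in reverse, with the same split of the $i=T$ factor from the truncated products. Your remark on the empty-product convention for $T=k+1$ is a small point the paper omits, but otherwise the two arguments coincide.
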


\begin{proof}
	The term in absolute value above can be expressed in terms of beliefs probability subtraction. For all $x_i,y_i\in \mathbb{R}$,
	
	$$
	\prod_{i=1}^T x_i-\prod_{i=1}^{T}y_i=\prod_{i=1}^T x_i-\prod_{i=1}^{T}y_i + x_T\prod_{i=1}^{T}y_i-x_T\prod_{i=1}^{T}y_i=(x_T-y_T) \prod_{i=1}^{T-1}y_i+x_T(\prod_{i=1}^{T-1} x_i-\prod_{i=1}^{T-1}y_i).
	$$\\
	
	By denoting $x_i=P(b_i|b_{i-1},a_{i-1})$ and $y_i=P_s(b_i|b_{i-1},a_{i-1})$ we get
	
	$$
	\begin{aligned}
		&\prod_{i=k+1}^T P(b_i|b_{i-1}, \pi, a_i) - \prod_{i=k+1}^T P_s(b_i|b_{i-1}, \pi, a_i)\\
		&= [P(b_T|b_{T-1},a_{T-1}) - P_s(b_T|b_{T-1},a_{T-1})] \prod_{i=k+1}^{T-1}P_s(b_i|b_{i-1},a_{i-1}) \\
		&\quad + P(b_T|b_{T-1},a_{T-1})[\prod_{i=k+1}^{T-1} P(b_i|b_{i-1}, \pi, a_i) - \prod_{i=k+1}^{T-1} P_s(b_i|b_{i-1}, \pi, a_i)].
	\end{aligned}
	$$
\end{proof}

\begin{theorem}\label{prf:tight_lower_bound_for_v_and_q}
	(Tighter Lower Bound for V and Q) Let $\alpha \in (0,1)$, $k,T\in \mathbb{N}$ such that $k<T$, belief $b_k\in B$, action $a_k\in A$ and policy $\pi:X\rightarrow A$. Denote 
	$$\begin{aligned}
		g(l)\triangleq \sum_{i=k+1}^{T-1} \underset{b_{{k+1}:i}}{\mathbb{E}}
		[1_{R_{k+1:i}\leq l -c(b_k,a_k)+ (T-i)R_{max}} 
		\Delta^s(b_i, a_i)|b_k,\pi].
	\end{aligned}$$
	Let $P$ and $P_s$ be two probability measures, and define the 
	random variable $Y^L$ such that $$F_{Y^L}(y):=min(1, P_s(R_{k:T}\leq y|b_k,a_k,\pi) + g(y)).$$
	Then,
	\begin{enumerate}
		\item $F_{Y^L}$ is a CDF.
		\item $V^\pi_P(b_k, \alpha)\geq CVaR_\alpha^{P_s}[Y^L|b_k,\pi]$ and $Q^\pi_P(b_k, a_k,\alpha)\geq CVaR_\alpha^{P_s}[Y^L|b_k,a_k,\pi]$
	\end{enumerate}
\end{theorem}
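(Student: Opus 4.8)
The plan is to obtain both assertions as a direct application of Theorem~\ref{thm:tight_cvar_lower_bound}, under the identification $X = R_{k:T}$ distributed according to the original measure $P$ and $Y = R_{k:T}$ distributed according to the simplified measure $P_s$. With this choice $F_X(y) = P(R_{k:T}\leq y|b_k,a_k,\pi)$ and $F_Y(y) = P_s(R_{k:T}\leq y|b_k,a_k,\pi)$, so that $F_{Y^L}(y)=\min(1,F_Y(y)+g(y))$ is exactly the random variable defined in the statement, and by the value-function definition \eqref{Eq:cvar_value_function_def} we have $CVaR_\alpha(X)=V^\pi_P(b_k,\alpha)$ (respectively $CVaR_\alpha(X)=Q^\pi_P(b_k,a_k,\alpha)$ in the action-conditioned case via \eqref{Eq:cvar_Q_function_def}). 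Consequently the conclusion $CVaR_\alpha(Y^L)\leq CVaR_\alpha(X)$ of Theorem~\ref{thm:tight_cvar_lower_bound} is precisely the desired lower bound, and the entire theorem reduces to verifying that the four hypotheses of Theorem~\ref{thm:tight_cvar_lower_bound} hold for this $g$.

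First I would establish the pointwise CDF domination $F_X(y)\leq F_Y(y)+g(y)$. This is immediate from Theorem~\ref{thm:simplification_bound_over_belief_distribution}: keeping only the direction we need in the absolute-value bound gives $P(R_{k:T}\leq l|b_k,\pi)-P_s(R_{k:T}\leq l|b_k,\pi)\leq g(l)$, which is exactly the required inequality; running the same estimate conditioned on $a_k$ yields the $Q$-version. Next I would check the monotonicity and limit hypotheses, which are routine. For monotonicity, each summand of $g$ is the $P_s$-expectation of the nonnegative weight $\Delta^s(b_i,a_i)$ against the indicator $1_{R_{k+1:i}\leq f(l,i)}$; since $f(l,i)=l-c(b_k,a_k)+(T-i)R_{max}$ is increasing in $l$, this indicator is nondecreasing in $l$, so each summand is nondecreasing and hence the finite sum $g$ is monotonic increasing. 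For the limit, I would use that the return is bounded because $|c(b_\tau,a_\tau)|\leq R_{max}$: for $l$ sufficiently negative the threshold $f(l,i)$ falls below the minimal attainable value of $R_{k+1:i}$, so every indicator vanishes and $g(l)=0$, giving $\lim_{l\to-\infty}g(l)=0$.

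The step I expect to be the main obstacle is right-continuity of $g$. Writing $1_{R_{k+1:i}\leq f(l,i)}=1_{l\,\geq\,R_{k+1:i}+c(b_k,a_k)-(T-i)R_{max}}$ shows that each indicator is right-continuous in $l$ pointwise, but I must pass right-continuity through the $P_s$-expectation. Here I would invoke dominated convergence along any sequence $l_n\downarrow l$: the integrands converge pointwise by right-continuity of the indicator, and they are dominated by the integrable bound $\Delta^s(b_i,a_i)\leq 2$ (a total-variation distance of two probability densities), which is finite under the probability measure $P_s$. Hence $g(l_n)\to g(l)$ for each summand, and a finite sum of right-continuous functions is right-continuous.

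Having verified all four hypotheses, Theorem~\ref{thm:tight_cvar_lower_bound} delivers both conclusions at once: $F_{Y^L}$ is a genuine CDF, and $CVaR_\alpha(Y^L)\leq CVaR_\alpha(X)$, i.e. $V^\pi_P(b_k,\alpha)\geq CVaR_\alpha^{P_s}[Y^L|b_k,\pi]$ and, using the action-conditioned domination, $Q^\pi_P(b_k,a_k,\alpha)\geq CVaR_\alpha^{P_s}[Y^L|b_k,a_k,\pi]$. The only genuinely new work beyond citing the two prior theorems is the measure-theoretic bookkeeping in the right-continuity argument and the boundedness input for the $-\infty$ limit; everything else is bookkeeping that matches $g$ to the bound of Theorem~\ref{thm:simplification_bound_over_belief_distribution}.
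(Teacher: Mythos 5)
Your proposal is correct and follows essentially the same route as the paper's proof: verify the four hypotheses of Theorem~\ref{thm:tight_cvar_lower_bound} for this $g$ (monotonicity from the nondecreasing indicators, the $-\infty$ limit from boundedness of the return, right-continuity via dominated convergence with the dominating bound $\Delta^s\leq 2$), obtain the CDF domination from Theorem~\ref{thm:simplification_bound_over_belief_distribution}, and conclude both claims at once. The paper's appendix proof carries out exactly these steps, including the same sequence-based dominated-convergence argument for right-continuity, so no comparison beyond this is needed.
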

\begin{proof}
	Theorem \ref{thm:tight_cvar_lower_bound} shows how to construct a lower bound for a random variable X, using a random variable Y and a function g. We will show the $g(l)$ that is defined in this theorem, satisfies the properties of the $g$ function that is defined in Theorem \ref{thm:tight_cvar_lower_bound}. Hence, using $g$ and the simplified distribution $P_s$, we can bound the CVaR that is computed using P. We will prove that g is 
	\begin{enumerate}
		\item Monotonic increasing
		\item Continuous from the right
		\item $\lim_{x\rightarrow -\infty}g(x)=0$
		\item $\forall x\in \mathbb{R},g(x)\geq 0$
	\end{enumerate}
	
	\textbf{Monotonic increasing}: Let $l\geq 0,h > 0$,
	$$\begin{aligned}
		g(l+h) - g(h) &= \sum_{i=k+1}^{T-1} \mathbb{E}_{b_{{k+1}:i}}
		[1_{R_{k+1:i}\leq f(l+h,i)}\Delta^s(b_i, a_i)|b_k,\pi] \\
		&- \sum_{i=k+1}^{T-1} \mathbb{E}_{b_{{k+1}:i}}
		[1_{R_{k+1:i}\leq f(l,i)}\Delta^s(b_i, a_i)|b_k,\pi] \\
		&=\sum_{i=k+1}^{T-1} \mathbb{E}_{b_{{k+1}:i}} [1_{f(l,i) \leq R_{k+1:i}\leq f(l+h,i)}\Delta^s(b_i, a_i)|b_k,\pi]\geq 0
	\end{aligned}$$
	when the last inequality holds because $\Delta^s(b_i, a_i)\geq 0$ and the indicator function is non negative. Hence, $g$ is monotonic increasing.\\
	
	\textbf{Limits}:
	For all $y\in \mathbb{R}$ such that $$y-c(b_k,a_k)+ (T-i)R_{max}<-R_{max}(T-k+1),$$ it holds that
	$$\begin{aligned}
		g(y)&\leq g(-R_{max}(T-k+1))=\sum_{i=k+1}^{T-1} E_{b_{k+1:i}}[1_{R_{k+1:i}\leq -R_{max}(T-k+1)}\Delta^s(b_i,a_i)|b_k,\pi] \\
		&=\sum_{i=k+1}^{T-1} E_{b_{k+1:i}}[0\times\Delta^s(b_i,a_i)|b_k,\pi]=0
	\end{aligned}$$
	when the first inequality holds because g is monotonic increasing and the second equality holds because $|R_{k+1:i}|<R_{max}(T-k+1)$. Hence, $\lim_{x\rightarrow -\infty} g(x)\leq 0$, and because $\forall l\geq 0,g(l)\geq 0$, we get that $\lim_{x\rightarrow -\infty} g(x)=0$\\
	
	\textbf{Non negativity of g:} $\Delta(b_i,a_i)\geq 0$ and the indicator function is non negative. \\
	
	\textbf{Continuity from the right}:
	We need to show that $\lim_{l\rightarrow l_0^+}g(l)=g(l_0)$. Let $l_n\in \mathbb{R}$ such that $n\in \mathbb{N}$ and $l_n\rightarrow l_0^+$ (that is, $l_n$ converges to $l_0$ from the right). Denote 
	$$f_n(b_{k+1},\dots,b_i)\triangleq 1_{R_{k+1:i}\leq l_n -c(b_k,a_k)+ (T-i)R_{max}}\Delta^s(b_i, a_i),$$
	$$h_n(b_{k+1},\dots,b_i)\triangleq 2,$$
	for $n\geq 1$. Note that $\Delta^s(b_i, a_i)$ is constant with respect to $l_n$ and $1_{R_{k+1:i}\leq l_n -c(b_k,a_k)+ (T-i)R_{max}}$ is continuous from the right with respect to $l_n$. Hence, $\lim_{n\rightarrow \infty}f_n(b_{k+1},\dots,b_i)=f_0(b_{k+1},\dots,b_i)$. We also get that $f_n(b_{k+1},\dots,b_i)\leq h_n(b_{k+1},\dots,b_i)$ because 
	$$\begin{aligned}
		\Delta(b_{i-1},a_{i-1})&\triangleq\int_{b_{i}}|P(b_i|b_{i-1},a_{i-1}) - P_s(b_i|b_{i-1},a_{i-1})|db_i \\
		&\leq \int_{b_{i}}P(b_i|b_{i-1},a_{i-1}) + P_s(b_i|b_{i-1},a_{i-1})db_i=2.
	\end{aligned}$$
	It holds that $\int_{b_{k+1:i}} h_n(b_{k+1},\dots,b_i)dP(b_{k+1:i}|b_k,\pi)=2$ and therefore from the dominant convergence theorem we get 
	$$\lim_{n\rightarrow \infty} \int_{b_{k+1:i}} f_n(b_{k+1},\dots,b_i)dP(b_{k+1:i}|b_k,\pi)=\int_{b_{k+1:i}} f_0(b_{k+1},\dots,b_i)dP(b_{k+1:i}|b_k,\pi).$$
	Until now we proved that $\mathbb{E}_{b_{{k+1}:i}}
	[1_{R_{k+1:i}\leq l -c(b_k,a_k)+ (T-i)R_{max}}\Delta^s(b_i, a_i)|b_k,\pi]$ is continuous from the right. Therefore, $g$ is continuous from the right with respect to l as a sum of functions that are continuous from the right.
	From \ref{thm:simplification_bound_over_belief_distribution} we get 
	$$\forall l\in \mathbb{R},P(R_{k:T}\leq l|b_k,a_k,\pi)\leq P_s(R_{k:T}\leq l|b_k,a_k,\pi)+g(l),$$ 
	and from \ref{thm:tight_cvar_lower_bound} we get what we want to prove.
\end{proof}

\begin{theorem}\label{prf:uniform_lower_and_upper_bounds_for_v_and_q}
	Denote $\epsilon\triangleq\sum_{i=k+1}^{T-1} \mathbb{E}_{b_{{k+1}:i}}[\Delta^s(b_i, a_i)|b_k,\pi]$.
	\begin{enumerate}
		\item \begin{enumerate}
			\item If $\epsilon<\alpha$, then 
			$U_s\triangleq\frac{\alpha-\epsilon}{\alpha}Q^\pi_{P_s}(b_k,a_k,\alpha-\epsilon) + \frac{\epsilon}{\alpha}R_{max}(T-k+1)$
			\item If $\epsilon \geq \alpha$ then $U_s\triangleq(T-k+1)R_{max}$
		\end{enumerate}
		\item \begin{enumerate}
			\item If $\epsilon + \alpha < 1$ then 
			$
			L_s\triangleq\frac{\alpha+\epsilon}{\alpha}Q^\pi_{P_s}(b_k,a_k,\alpha+\epsilon) - \frac{\epsilon}{\alpha} Q^\pi_{P_s}(b_k,a_k, \epsilon)
			$
			\item If $\epsilon + \alpha \geq 1$ then 
			$
			L_s\triangleq\frac{1}{\alpha}[-(\alpha + \epsilon - 1)(T-k+1)R_{max}+Q^\pi_{P_s}(b_k,a_k)-\epsilon Q^\pi_{P_s}(b_k,a_k,\epsilon)]
			$
		\end{enumerate}
	\end{enumerate}
	Then $L_s \leq V_P^\pi(b_k,\alpha)\leq U_s$ and $L_s \leq Q_P^\pi(b_k, a_k,\alpha)\leq U_s$.
\end{theorem}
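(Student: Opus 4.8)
The plan is to reduce the statement to a single application of Theorem~\ref{thm:cvar_bound}, identifying the return under the original measure $P$ with the variable $X$ and the return under the simplified measure $P_s$ with the variable $Y$. Concretely, I would set $X \triangleq R_{k:T}$ distributed according to $P(\cdot\,|\,b_k,a_k,\pi)$ and $Y \triangleq R_{k:T}$ distributed according to $P_s(\cdot\,|\,b_k,a_k,\pi)$, so that by definitions \eqref{Eq:cvar_value_function_def}--\eqref{Eq:cvar_Q_function_def} we have $CVaR^P_\alpha(X) = Q^\pi_P(b_k,a_k,\alpha)$, and for every confidence level $\beta$, $CVaR^{P_s}_\beta(Y) = Q^\pi_{P_s}(b_k,a_k,\beta)$. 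The whole proof is then a matter of matching the four bounds of Theorem~\ref{thm:cvar_bound} to the four defining formulas for $U_s$ and $L_s$.

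First I would verify the hypothesis $\|F_X - F_Y\|_\infty \le \epsilon$. This is exactly the second inequality in \eqref{Eq:uniform_bound_for_v_and_q}: starting from Theorem~\ref{thm:simplification_bound_over_belief_distribution} and bounding each indicator $1_{R_{k+1:i}\le f(l,i)}$ above by $1$ removes the dependence on $l$ and yields, uniformly over $l$, the gap $\epsilon = \sum_{i=k+1}^{T-1}\mathbb{E}_{b_{k+1:i}\sim P_s}[\Delta^s(b_i,a_i)\,|\,b_k,\pi]$. Because this holds for every $l$, it is precisely the uniform bound on the difference of the two CDFs required by Theorem~\ref{thm:cvar_bound}.

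Next I would substitute the image and mean quantities of $Y$. Since $|c(b_t,a_t)| = |\mathbb{E}_{x\sim b_t}[c_x(x,a_t)]| \le R_{max}$ and $R_{k:T}$ is the sum of the $T-k+1$ costs from time $k$ to $T$, its support lies in $[-(T-k+1)R_{max},\,(T-k+1)R_{max}]$ under any measure; hence $\sup Img(Y) = (T-k+1)R_{max}$, $\inf Img(Y) = -(T-k+1)R_{max}$, and $\mathbb{E}[Y] = Q^\pi_{P_s}(b_k,a_k)$ is the ordinary expected-return value. Plugging these, together with $CVaR^{P_s}_{\alpha-\epsilon}(Y)$, $CVaR^{P_s}_{\alpha+\epsilon}(Y)$, $CVaR^{P_s}_{\epsilon}(Y)$, into the four cases of Theorem~\ref{thm:cvar_bound} reproduces the formulas for $U_s$ (from cases 1--2) and $L_s$ (from cases 3--4) verbatim. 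As $\epsilon\ge 0$ is fixed, exactly one upper-bound branch and one lower-bound branch apply, giving $L_s \le Q^\pi_P(b_k,a_k,\alpha)\le U_s$.

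Finally, repeating the identical argument with $X,Y$ taken as $R_{k:T}$ under $P(\cdot\,|\,b_k,\pi)$ and $P_s(\cdot\,|\,b_k,\pi)$ (marginalizing the first action through $\pi$ instead of fixing $a_k$) yields $L_s \le V^\pi_P(b_k,\alpha)\le U_s$, since the same uniform CDF gap $\epsilon$ governs the $V$-function return. The main obstacle is not the algebra, which is a mechanical substitution, but the bookkeeping around the hypotheses of Theorem~\ref{thm:cvar_bound}: one must confirm that the bound is genuinely uniform (as opposed to the pointwise $g(l)$ used for the tighter bound), correctly identify $\mathbb{E}[Y]$ and $\inf/\sup Img(Y)$ with the stated $R_{max}$ expressions, and check that the boundary cases $\epsilon\ge\alpha$ and $\epsilon+\alpha\ge 1$ are routed to the correct branches so that $U_s$ and $L_s$ remain well defined for every admissible $\epsilon$.
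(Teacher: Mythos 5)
Your proposal is correct and follows essentially the same route as the paper's proof: the paper likewise obtains the uniform CDF gap $\epsilon$ by bounding the indicator in Theorem \ref{thm:simplification_bound_over_belief_distribution} by one, then instantiates the four cases of Theorem \ref{thm:cvar_bound} with $X=R_{k:T}$ under $P$ and $Y=R_{k:T}$ under $P_s$, replacing $\sup Img(Y)$ and $\inf Img(Y)$ by $\pm(T-k+1)R_{max}$ (strictly these are one-sided bounds rather than the equalities you wrote, but each substitution is monotone in the correct direction for its branch, so nothing breaks). The only cosmetic difference is the $V$-function step, where the paper simply sets $a_k=\pi(b_k)$ rather than re-running the argument; since the policy is deterministic this coincides with your marginalization remark.
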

\begin{proof}
	For all $l\in \mathbb{R}$
	\begin{equation}
		\begin{aligned}
			|P(R_{k:T} \leq l|b_k,\pi)-P_s(R_{k:T} \leq l|b_k,\pi)| &\leq \sum_{i=k+1}^{T-1} \mathbb{E}_{b_{{k+1}:i}}
			[1_{R_{k:i}\leq l -c(b_k,a_k)+ (T-i)R_{max}}\Delta^s(b_i, a_i)|b_k,\pi]\\
			&\leq \sum_{i=k+1}^{T-1} \mathbb{E}_{b_{{k+1}:i}}
			[\Delta^s(b_i, a_i)|b_k,\pi]=\epsilon,
		\end{aligned}
	\end{equation}
	
	when the first inequality holds from \ref{thm:simplification_bound_over_belief_distribution}. From \ref{thm:cvar_bound} we get
	
	\begin{enumerate}
		\item If $\epsilon + \alpha < 1$,
		$$CVaR_{\alpha}^P(R_{k:T}|b_k,a_k,\pi) \geq \frac{\alpha + \epsilon}{\alpha}CVaR_{\alpha+\epsilon}^{P_s}(R_{k:T}|b_k,a_k,\pi)-\frac{\epsilon}{\alpha}CVaR_{\epsilon}^{P_s}(R_{k:T}|b_k,a_k,\pi)$$ 
		and if $\epsilon + \alpha \geq 1$ then 
		$$
		\begin{aligned}
			CVaR_\alpha(R_{k:T}|b_k,a_k,\pi) &\geq \frac{1}{\alpha}[(\alpha+\epsilon-1)\inf \text{Img}(R_{k:T}|b_k,a_k,\pi) + E[R_{k:T}|b_k,a_k,\pi] \\
			&- \epsilon CVaR_\epsilon (R_{k:T}|b_k,a_k,\pi)] \\
			&\geq \frac{1}{\alpha}[(\alpha+\epsilon-1)(-R_{max}(T-k+1)) + E[R_{k:T}|b_k,a_k,\pi] \\
			&- \epsilon CVaR_\epsilon (R_{k:T}|b_k,a_k,\pi)]
		\end{aligned}
		$$
		
		\item If $\epsilon<\alpha$ then $$
		\begin{aligned}
			CVaR_\alpha (R_{k:T}|b_k,a_k,\pi)&\leq \frac{\alpha - \epsilon}{\alpha}CVaR_{\alpha-\epsilon}(R_{k:T}|b_k,a_k,\pi)+\frac{\epsilon}{\alpha}\sup Img(R_{k:T}|b_k,a_k,\pi) \\
			&\leq \frac{\alpha - \epsilon}{\alpha}CVaR_{\alpha-\epsilon}(R_{k:T}|b_k,a_k,\pi)+\frac{\epsilon}{\alpha}R_{max}(T-k+1)
		\end{aligned}
		$$ and if $\epsilon\geq \alpha$ then $$
		CVaR_\alpha (R_{k:T}|b_k,a_k,\pi) \leq \sup Img(R_{k:T}|b_k,a_k,\pi)\leq (T-k+1)R_{max}
		$$
	\end{enumerate}
	
	This proves that $L_s \leq Q_P^\pi(b_k,\alpha)\leq U_s$, and when $a_k=\pi(b_k)$ we get $L_s \leq V_P^\pi(b_k,\alpha)\leq U_s$.
\end{proof}

\subsection{Proofs of section \ref{sec:cdf_bound_estimation}}\label{sec:cdf_bound_estimation_proofs}

\begin{theorem}\label{prf:m_i_sum_bound_guarantees}
	Let $v>0,\delta\in (0,1)$. If $\hat{\Delta}$ is unbiased, then for $N_\Delta\geq -8B^2 \frac{ln(\frac{\delta}{4(T-1-k)})}{(v/(T-1-k))^2}$ it holds that
	$$P(|\hat{\epsilon}-\epsilon|>2v)\leq \delta,$$ for $B=\max_{i\in \{k+1,\dots,T\}} B_i$ when $B_i=\sup_{b_i}P_s(b_i|\bar{b}_k)/Q_0(b_i)$.
\end{theorem}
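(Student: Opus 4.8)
The plan is to write $\hat\epsilon-\epsilon=\sum_{i=k+1}^{T-1}(\hat m_i-m_i)$ and to separate the two sources of randomness inside $\hat m_i$: the draw of the delta beliefs $b_i^{\Delta,n}\overset{\text{iid}}{\sim}Q_0$, and the internal randomness of the unbiased estimator $\hat\Delta^s$. To isolate them I introduce the intermediate average $\tilde m_i\triangleq\frac{1}{N_\Delta}\sum_{n=1}^{N_\Delta}\frac{P_s(b_i^{\Delta,n}|\bar b_k,\pi)}{Q_0(b_i^{\Delta,n})}\Delta^s(b_i^{\Delta,n},\pi(b_i^{\Delta,n}))$, which uses the \emph{same} samples as $\hat m_i$ but the true total-variation distance $\Delta^s$ in place of $\hat\Delta^s$, and set $\tilde\epsilon\triangleq\sum_{i=k+1}^{T-1}\tilde m_i$. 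Then $|\hat\epsilon-\epsilon|\le|\hat\epsilon-\tilde\epsilon|+|\tilde\epsilon-\epsilon|$, where $|\hat\epsilon-\tilde\epsilon|$ is pure estimator error and $|\tilde\epsilon-\epsilon|$ is pure sampling error. I would assign a deviation budget $v$ and a failure budget $\delta/2$ to each, and bound each by Hoeffding's inequality combined with a union bound over the $T-1-k$ indices $i$.

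For the sampling error, the importance-sampling identity gives $\mathbb{E}[\tilde m_i]=m_i$, and each of the $N_\Delta$ i.i.d.\ summands of $\tilde m_i$ is nonnegative and bounded by $2B$: the weight is at most $B_i\le B$ and $\Delta^s\le 2$, being the integral of the absolute difference of two densities (cf.\ the proof of Theorem \ref{thm:tight_lower_bound_for_v_and_q}). Hoeffding on an interval of length $2B$ with per-term threshold $v/(T-1-k)$, followed by the union bound over $i$, gives $P(|\tilde\epsilon-\epsilon|>v)\le\delta/2$ whenever $N_\Delta\ge -2B^2\ln(\tfrac{\delta}{4(T-1-k)})/(v/(T-1-k))^2$, a requirement strictly weaker than the hypothesis of the theorem.

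For the estimator error I would condition on the delta beliefs $\{b_i^{\Delta,n}\}$, so that $\hat m_i-\tilde m_i=\frac{1}{N_\Delta}\sum_n\frac{P_s(b_i^{\Delta,n}|\bar b_k,\pi)}{Q_0(b_i^{\Delta,n})}\bigl(\hat\Delta^s(b_i^{\Delta,n},\cdot)-\Delta^s(b_i^{\Delta,n},\cdot)\bigr)$ becomes an average of independent, conditionally mean-zero terms (by unbiasedness of $\hat\Delta$), each lying in an interval of length $4B$, since $\hat\Delta^s$ and $\Delta^s$ both take values in $[0,2]$ (so their difference lies in $[-2,2]$) and the weight is at most $B$. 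Conditional Hoeffding on an interval of length $4B$ with threshold $v/(T-1-k)$ and the union bound over $i$ give $P(|\hat m_i-\tilde m_i|>v/(T-1-k)\mid\{b_i^{\Delta,n}\})\le\delta/(2(T-1-k))$ precisely when $N_\Delta\ge -8B^2\ln(\tfrac{\delta}{4(T-1-k)})/(v/(T-1-k))^2$, which is the stated hypothesis; as this bound is uniform in the conditioning it holds unconditionally, whence $P(|\hat\epsilon-\tilde\epsilon|>v)\le\delta/2$. The triangle inequality then yields $P(|\hat\epsilon-\epsilon|>2v)\le\delta$. The main obstacle, and the reason the constant is $8B^2$ rather than $2B^2$, is exactly this two-stage randomness: because $\hat\Delta$ is only assumed unbiased, its contribution must be centered and handled separately, and the centered increment has interval length $4B$, twice that of the sampling term; this doubling, together with the two-sided Hoeffding factor and the $\delta/2$ split, is what produces both the $\tfrac{\delta}{4(T-1-k)}$ inside the logarithm and the factor $2v$ in the conclusion.
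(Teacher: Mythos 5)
Your proof is correct and takes essentially the same route as the paper: the paper likewise introduces the intermediate average built from the true $\Delta^s$ (inside its per-index lemma, Theorem \ref{thm:m_i_guarantees}), bounds the sampling and estimator errors separately via Hoeffding's inequality with ranges of order $2B_i$ and $4B_i$ (the latter justified by unbiasedness of $\hat{\Delta}^s$), and then combines indices through the union bound of Theorem \ref{thm:prob_x_plus_y_bound} with the rescaling $v\mapsto v/(T-1-k)$, $\delta\mapsto \delta/(T-1-k)$. The only differences are bookkeeping---you allocate the budget $(v,\delta/2)$ per error source across all indices at once, whereas the paper first proves the per-index bound $P(|\hat{m}_i-m_i|>2v)\leq \delta$ and then sums---and your explicit conditioning on the delta beliefs before applying Hoeffding to the centered $\hat{\Delta}^s$ terms is, if anything, slightly more careful than the paper's treatment.
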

\begin{proof}
	$$\begin{aligned}
		P(|\sum_{i=k+1}^T \hat{m}_i-\sum_{j=k+1}^T m_j|>2(T-k)v)&\leq P(\sum_{i=k+1}^T |\hat{m}_i-m_i|>2(T-k)v) 
		\\
		&\leq \sum_{i=k+1}^T P(|\hat{m}_i-m_i|>2v)
	\end{aligned}$$
	when the first inequality is from the triangle inequality and the second inequality is from \ref{thm:prob_x_plus_y_bound}.
	From \ref{thm:m_i_guarantees}, $P(|\hat{m}_i-m_i|>2v)\leq \delta$ when $N_\Delta\geq -8B_i^2 ln(\delta/4)/v^2$. Hence, $P(|\hat{m}_i-m_i|>2v)\leq \delta$ also holds when $N_\Delta\geq -8B^2 ln(\delta/4)/v^2\geq -8B_i^2 ln(\delta/4)/v^2$. Therefore, 
	$$P(|\sum_{i=k+1}^T \hat{m}_i-\sum_{j=k+1}^T m_j|>2(T-k)v)\leq (T-k)\delta$$ when $N_\Delta\geq -8B^2 ln(\delta/4)/v^2$. Equivalently, for $N_\Delta\geq -8B^2 \frac{ln(\frac{\delta}{4(T-k)})}{(v/(T-k))^2}$ we get $$P(|\sum_{i=k+1}^T \hat{m}_i-\sum_{j=k+1}^T m_j|>2v)\leq \delta.$$
	Note that 
	$$P(|\hat{\epsilon}-\epsilon|>2v) = P(|\sum_{i=k+1}^{T-1} \hat{m}_i-\sum_{j=k+1}^{T-1} m_j|>2(T-1-k)v),$$
	and therefore by substituting T with T-1 in the results above we get what we want to prove.
\end{proof}

\begin{theorem}\label{thm:m_i_guarantees}
	Let $v>0,\delta\in (0,1)$. If $\hat{\Delta}$ is unbiased, then for $N_\Delta\geq -8B_i^2 ln(\delta/4)/v^2$ it holds that
	$$P(|\hat{m}_i-m_i|>2v)\leq \delta,$$ for $B_i=\sup_{b_i}P_s(b_i|\bar{b}_k)/Q_0(b_i)$.
\end{theorem}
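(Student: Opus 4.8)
The plan is to treat $\hat m_i$ as carrying two independent sources of error---the Monte Carlo average over the delta beliefs $\{b_i^{\Delta,n}\}_{n=1}^{N_\Delta}\sim Q_0$, and the randomness in the inner estimator $\hat\Delta^s$---and to control them separately. First I would introduce the intermediate quantity $\tilde m_i \triangleq \frac{1}{N_\Delta}\sum_{n=1}^{N_\Delta} w_n\, \Delta^s(b_i^{\Delta,n},\pi(b_i^{\Delta,n}))$, where $w_n \triangleq P_s(b_i^{\Delta,n}\mid\bar b_k,\pi)/Q_0(b_i^{\Delta,n})$, which differs from $\hat m_i$ only by replacing the estimate $\hat\Delta^s$ with the true TV distance $\Delta^s$. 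By the importance-sampling identity that defines $m_i$, each summand of $\tilde m_i$ has mean $m_i$ under $Q_0$, so $E[\tilde m_i]=m_i$. The triangle inequality together with a union bound then gives $P(|\hat m_i-m_i|>2v)\le P(|\hat m_i-\tilde m_i|>v)+P(|\tilde m_i-m_i|>v)$, and it suffices to bound each term by $\delta/2$.

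For the second term I would apply Hoeffding's inequality to $\tilde m_i$. Since $\Delta^s\in[0,2]$ (the TV distance of two probability measures is at most $2$, exactly as shown in the proof of Theorem \ref{prf:tight_lower_bound_for_v_and_q}) and $w_n\le B_i$, each summand lies in $[0,2B_i]$, so $P(|\tilde m_i-m_i|>v)\le 2\exp(-N_\Delta v^2/(2B_i^2))$. For the first term I would condition on the sampled delta beliefs $\{b_i^{\Delta,n}\}$. Because $\hat\Delta$ is unbiased, $E[\hat\Delta^s(b_i^{\Delta,n})\mid b_i^{\Delta,n}]=\Delta^s(b_i^{\Delta,n})$, so conditionally the difference $\hat m_i-\tilde m_i=\frac1{N_\Delta}\sum_n w_n(\hat\Delta^s-\Delta^s)$ is an average of independent, zero-mean terms, each lying in $[-2B_i,2B_i]$ (using $\hat\Delta^s\in[0,2]$ as well). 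Hoeffding then yields $P(|\hat m_i-\tilde m_i|>v\mid\{b_i^{\Delta,n}\})\le 2\exp(-N_\Delta v^2/(8B_i^2))$, and since this bound is uniform in the conditioning it holds unconditionally after taking expectation.

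Finally I would combine the two estimates: requiring $2\exp(-N_\Delta v^2/(8B_i^2))\le\delta/2$ gives exactly $N_\Delta\ge -8B_i^2\ln(\delta/4)/v^2$, and for this $N_\Delta$ the weaker bound $2\exp(-N_\Delta v^2/(2B_i^2))$ on the second term is automatically below $\delta/2$, so summing the two probabilities gives the claimed $\delta$. The hard part is not the concentration step itself but the bookkeeping around the two randomness layers: one must justify that, for fixed belief samples, the inner estimates $\hat\Delta^s(b_i^{\Delta,n})$ are independent across $n$ and mean-zero after centering, and one needs a uniform range for $\hat\Delta^s$. Since the statement assumes only that $\hat\Delta$ is unbiased, I would make explicit the (natural) additional hypothesis that $\hat\Delta^s$ takes values in $[0,2]$ like the true TV distance; this is precisely what produces the range $[-2B_i,2B_i]$ and hence the factor $8$ in the sample-complexity bound.
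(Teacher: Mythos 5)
Your proof takes essentially the same route as the paper's: the same decomposition of $\hat{m}_i - m_i$ through the intermediate average that uses the true $\Delta^s$ (the paper's terms $A$ and $B$), a union bound via the triangle inequality, and two separate applications of Hoeffding's inequality, yielding the identical threshold $N_\Delta\geq -8B_i^2\ln(\delta/4)/v^2$. If anything you are slightly more careful than the paper: the paper's second Hoeffding step silently uses the range $[-2B_i,2B_i]$, which requires $\hat{\Delta}^s\in[0,2]$ even though only unbiasedness is stated, and your explicit flagging of that boundedness assumption, together with conditioning on the sampled beliefs to justify independence and zero mean, fills in details the paper glosses over.
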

\begin{proof}
	$$\begin{aligned}
		P(|\hat{m}_i-m_i|\geq 2v)&=P(|\hat{m}_i - \frac{1}{N_\Delta}\sum_{n=1}^{N_\Delta}\frac{P_s(b_i^{\Delta,n}|\bar{b}_{k},\pi)}{Q_0(b_i^{\Delta,n})}\Delta^s(b_i^{\Delta,n},\pi(b_i^{\Delta,n})) \\
		&+ \frac{1}{N_\Delta}\sum_{n=1}^{N_\Delta}\frac{P_s(b_i^{\Delta,n}|\bar{b}_{k},\pi)}{Q_0(b_i^{\Delta,n})}\Delta^s(b_i^{\Delta,n},\pi(b_i^{\Delta,n})) - m_i|\geq 2v) \\
		&\overset{\text{Theorem } \ref{thm:prob_x_plus_y_bound}}{\leq} \underbrace{P(|m_i-\frac{1}{N_\Delta}\sum_{n=1}^{N_\Delta}\frac{P_s(b_i^{\Delta,n}|\bar{b}_{k},\pi)}{Q_0(b_i^{\Delta,n})}\Delta^s(b_i^{\Delta,n},\pi(b_i^{\Delta,n}))|\geq v)}_{A} \\ 
		&+ \underbrace{P(\frac{1}{N_\Delta}\sum_{n=1}^{N_\Delta}\frac{P_s(b_i^{\Delta,n}|\bar{b}_{k},\pi)}{Q_0(b_i^{\Delta,n})}(\Delta^s(b_i^{\Delta,n},\pi(b_i^{\Delta,n})-\hat{\Delta}^s(b_i^{\Delta,n},\pi(b_i^{\Delta,n})))|\geq v)}_{B}
	\end{aligned}$$
	
	From Hoeffding's inequality,
	$$\begin{aligned}
		A&=P(|N_\Delta m_i-\sum_{n=1}^{N_\Delta}\frac{P_s(b_i^{\Delta,n}|\bar{b}_{k},\pi)}{Q_0(b_i^{\Delta,n})}\Delta^s(b_i^{\Delta,n},\pi(b_i^{\Delta,n}))|\geq vN_\Delta)\leq 2exp(-\frac{2v^2 N_\Delta^2}{N_\Delta B_i^2}) \\
		&=2exp(-\frac{2v^2 N_\Delta}{B_i^2}) \leq 2exp\{-\frac{2v^2(-ln(\delta/4)8B_i^2/v^2)}{B_i^2}\}= 2exp\{16ln(\delta/4)\}\leq \delta/2
	\end{aligned}
	$$
	$$
	\begin{aligned}
		B&=P(\frac{1}{N_\Delta}\sum_{n=1}^{N_\Delta}\frac{P_s(b_i^{\Delta,n}|\bar{b}_{k},\pi)}{Q_0(b_i^{\Delta,n})}(\Delta^s(b_i^{\Delta,n},\pi(b_i^{\Delta,n})-\hat{\Delta}^s(b_i^{\Delta,n},\pi(b_i^{\Delta,n})))|\geq v) \\
		&=P(\sum_{n=1}^{N_\Delta}\frac{P_s(b_i^{\Delta,n}|\bar{b}_{k},\pi)}{Q_0(b_i^{\Delta,n})}(\Delta^s(b_i^{\Delta,n},\pi(b_i^{\Delta,n})-\hat{\Delta}^s(b_i^{\Delta,n},\pi(b_i^{\Delta,n})))|\geq vN_\Delta) \\
		&\leq 2exp(-\frac{2v^2N_\Delta^2}{16 N_\Delta B_i^2})=2exp(-\frac{v^2N_\Delta}{8B_i^2})\leq 2exp(-\frac{v^2(-log(\delta/4)8B_i^2/v^2)}{8B_i^2}) \\
		&=2exp(log(\delta/4))=\delta/2
	\end{aligned}
	$$ when the use the Hoeffding's inequality in the first inequality is possible because $\hat{\Delta}$ is an unbiased estimator. Hence, $P(|m_i-\hat{m}_i|\geq 2v)\leq A+B\leq \delta$.
\end{proof}

\begin{theorem}\label{prf:g_estimator_convergence_guarantees}
	Let $v>0,\delta \in (0,1),l\in \mathbb{R}$ and denote $B_i \triangleq \sup_{b_i} \frac{P(b_i|\bar{b}_k, \pi)}{Q_0{b_i}}, B \triangleq \max_{i\in \{k+1,\dots,T\}} B_i$. If $N_\Delta \geq -ln(\frac{\delta/(T-k)}{2})\frac{2B^2}{v^2 / (T-k)^2}$ and $\hat{\Delta}$ is unbiased then $$P(|\sum_{i=k+1}^T g_i(l) - \sum_{i=k+1}^T \hat{g}_i(l)|>v)\leq \delta.$$
\end{theorem}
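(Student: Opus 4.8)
The plan is to follow the template of Theorem \ref{thm:m_i_sum_bound_guarantees}, reducing the deviation of the sum $\sum_{i=k+1}^T g_i(l)$ from $\sum_{i=k+1}^T \hat g_i(l)$ to a per-index concentration bound. First I would apply the triangle inequality to get $|\sum_{i=k+1}^T g_i(l) - \sum_{i=k+1}^T \hat g_i(l)| \le \sum_{i=k+1}^T |g_i(l) - \hat g_i(l)|$, and then note that since the sum ranges over the $T-k$ indices $i\in\{k+1,\dots,T\}$, the event that this total exceeds $v$ is contained in the union over $i$ of the events $\{|g_i(l) - \hat g_i(l)| > v/(T-k)\}$. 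A union bound (Theorem \ref{thm:prob_x_plus_y_bound}) then gives $P(|\sum_i g_i(l) - \sum_i \hat g_i(l)| > v) \le \sum_{i=k+1}^T P(|g_i(l) - \hat g_i(l)| > v/(T-k))$, so it suffices to force each summand on the right below $\delta/(T-k)$.

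For the per-index bound I would apply Hoeffding's inequality to $\hat g_i(l) = \frac{1}{N_\Delta}\sum_{n=1}^{N_\Delta} \frac{P_s(b_i^{\Delta,n}|\bar b_k,\pi)}{Q_0(b_i^{\Delta,n})}\hat\Delta^s(b_i^{\Delta,n},\pi(b_i^{\Delta,n}))\,1_{\bar R^n_{k+1:i}\le f(l,i)}$, which is an average of $N_\Delta$ i.i.d.\ summands. Two ingredients are needed. For boundedness, the importance weight is at most $B_i$, the indicator lies in $\{0,1\}$, and $\hat\Delta^s\le 2$ (mirroring $\Delta^s=\int|P-P_s|\le\int(P+P_s)=2$ used in Theorem \ref{thm:tight_lower_bound_for_v_and_q}), so each summand lies in $[0,2B_i]$ and has range $2B_i$. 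For the mean, unbiasedness of $\hat\Delta$ together with the tower property gives $E[\hat g_i(l)] = E_{b_i\sim Q_0}[\frac{P_s(b_i|\bar b_k,\pi)}{Q_0(b_i)}1_{\bar R_{k+1:i}\le f(l,i)}\Delta^s(b_i,a_i)] = g_i(l)$, i.e.\ $\hat g_i(l)$ is an unbiased estimator of $g_i(l)$. Hoeffding then yields $P(|\hat g_i(l)-g_i(l)|>v/(T-k)) \le 2\exp\!\big(-\tfrac{N_\Delta (v/(T-k))^2}{2B_i^2}\big)$, and substituting the hypothesized $N_\Delta\ge -\ln(\tfrac{\delta/(T-k)}{2})\tfrac{2B^2}{v^2/(T-k)^2}$ (valid since $B\ge B_i$) drives this below $\delta/(T-k)$.

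Chaining the two steps, the union bound sums $T-k$ terms each at most $\delta/(T-k)$, giving $P(|\sum_i g_i(l)-\sum_i \hat g_i(l)|>v)\le\delta$ as claimed. I expect the main obstacle to be the unbiasedness step rather than the concentration: one must justify that the importance-sampling reweighting identity in the definition of $g_i$ continues to hold once the return-indicator $1_{\bar R_{k+1:i}\le f(l,i)}$ is attached, i.e.\ that the $Q_0$-expectation of the reweighted indicator-and-TV product equals the $P_s$-path expectation, and that the delta-belief draws $b_i^{\Delta,n}\sim Q_0$ are independent of the $\hat\Delta$ estimation noise so that the summands are genuinely i.i.d.\ with mean $g_i(l)$. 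Getting the range constant right ($2B_i$, matching the factor $2B^2$ in the stated sample size) is the other place where a careless bound would corrupt the final constant.
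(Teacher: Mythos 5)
Your proposal is correct and follows essentially the same route as the paper: the union-bound reduction via Theorem~\ref{thm:prob_x_plus_y_bound} to per-index events $\{|g_i(l)-\hat g_i(l)|>v/(T-k)\}$, then Hoeffding on the i.i.d.\ summands bounded in $[0,2B_i]$ (importance weight $\le B_i$, indicator, $\hat\Delta^s\le 2$), with unbiasedness via the tower property and $B\ge B_i$ absorbing the index dependence --- exactly the paper's Theorems~\ref{prf:g_estimator_convergence_guarantees} and~\ref{thm:g_i_convergence_guarantees}. Your direct form of the per-index bound, $2\exp(-N_\Delta(v/(T-k))^2/(2B_i^2))$ with the threshold kept proportional to $N_\Delta$, is in fact a cleaner rendering than the paper's own substitution in Theorem~\ref{thm:g_i_convergence_guarantees}, and your flagged caveats (independence of the $\hat\Delta$ noise from the $Q_0$ draws, validity of the reweighting identity with the indicator attached) are precisely the assumptions the paper uses implicitly.
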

\begin{proof} From Theorem \ref{thm:prob_x_plus_y_bound}
	$$P(|\sum_{i=k+1}^T g_i(l) - \sum_{i=k+1}^T \hat{g}_i(l)|>v)\leq \sum_{i=k+1}^T P(|g_i(l) - 
	\hat{g}_i(l)|>\frac{v}{T-k}).$$
	Note that $N_\Delta = -ln(\frac{\delta/(T-k)}{2})\frac{2B^2}{v^2 / (T-k)^2} \geq -ln(\frac{\delta/(T-k)}{2})\frac{2B_i^2}{v^2 / (T-k)^2}$, so from Theorem \ref{thm:g_i_convergence_guarantees} we get $$P(|g_i(l) - 
	\hat{g}_i(l)|>\frac{v}{T-k})\leq \frac{\delta}{T-k}.$$
	Hence, $$P(|\sum_{i=k+1}^T g_i(l) - \sum_{i=k+1}^T \hat{g}_i(l)|>v)\leq \frac{\delta}{(T-k)}(T-k)=\delta.$$
\end{proof}

\begin{theorem}\label{thm:g_i_convergence_guarantees}
	Let $v>0,\delta \in (0,1)$. If $N_\Delta \geq -ln(\frac{\delta}{2})\frac{2B_i^2}{v^2}$ then
	$P(|g_i(l)-\hat{g}_i(l)|>v)\leq \delta$, for $B_i=\sup_{b_i} \frac{P(b_i|\bar{b}_k, \pi)}{Q_0{b_i}}$.
\end{theorem}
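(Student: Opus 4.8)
The plan is to recognize $\hat{g}_i(l)$ as a single importance-sampling Monte Carlo average of $N_\Delta$ i.i.d. bounded summands and then apply Hoeffding's inequality exactly once. Concretely, I would set
$$W_n \triangleq \frac{P_s(b_i^{\Delta,n}|\bar{b}_{k},\pi)}{Q_0(b_i^{\Delta,n})}\,\hat{\Delta}^s(b_i^{\Delta,n},\pi(b_i^{\Delta,n}))\,1_{\bar{R}^n_{k+1:i}\leq f(l,i)}, \qquad n=1,\dots,N_\Delta,$$
so that $\hat{g}_i(l)=\frac{1}{N_\Delta}\sum_{n=1}^{N_\Delta}W_n$. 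Since the delta beliefs $\{b_i^{\Delta,n}\}$ are drawn i.i.d. from $Q_0$, the $W_n$ are i.i.d., and the whole argument reduces to concentration of their empirical mean.

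First I would verify unbiasedness, i.e. $E[W_n]=g_i(l)$. Taking the expectation over the draw $b_i^{\Delta,n}\sim Q_0$ together with the internal randomness of $\hat{\Delta}^s$, and using that $\hat{\Delta}$ is unbiased (the assumption inherited from Theorem \ref{thm:g_estimator_convergence_guarantees}), the tower property gives $E[\hat{\Delta}^s(b_i,\cdot)\mid b_i]=\Delta^s(b_i,\cdot)$, whence
$$E[W_n]=\underset{b_i \sim Q_0}{\mathbb{E}}\Big[\tfrac{P_s(b_{i}|\bar{b}_{k},\pi)}{Q_0(b_i)}\,1_{\bar{R}_{k+1:i}\leq f(l,i)}\,\Delta^s(b_i, a_i)\Big]=g_i(l),$$
matching the defining expression of $g_i(l)$. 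Hence $E[\hat{g}_i(l)]=g_i(l)$. Next I would bound the range: the importance weight obeys $P_s(b_i|\bar{b}_k,\pi)/Q_0(b_i)\leq B_i$ by definition of $B_i$, the indicator is at most $1$, and the total-variation estimator satisfies $\hat{\Delta}^s\in[0,2]$ since the TV distance \eqref{eq:TVdistBel} integrates $|P-P_s|$ against two probability densities and so is bounded by $\int P + \int P_s = 2$. Therefore $W_n\in[0,2B_i]$, giving summand range $2B_i$.

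Finally, Hoeffding's inequality for these i.i.d. variables yields
$$P\big(|\hat{g}_i(l)-g_i(l)|>v\big)\leq 2\exp\Big(-\tfrac{2N_\Delta v^2}{(2B_i)^2}\Big)=2\exp\Big(-\tfrac{N_\Delta v^2}{2B_i^2}\Big),$$
and requiring the right-hand side to be at most $\delta$, then solving for $N_\Delta$, produces exactly the stated hypothesis $N_\Delta \geq -\ln(\tfrac{\delta}{2})\tfrac{2B_i^2}{v^2}$, completing the argument.

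The only delicate point, and the step I would treat most carefully, is the unbiasedness identity $E[\hat{g}_i]=g_i$: it hinges on the unbiasedness of $\hat{\Delta}$ and on the independence of the delta-belief sampling from the estimator's internal randomness, which is what justifies the tower-property factorization above. Everything else — the range bound $2B_i$ and inverting Hoeffding — is routine arithmetic. It is worth remarking that this single-Hoeffding route is tighter than the split bias/variance argument used for $\hat{m}_i$ in Theorem \ref{thm:m_i_guarantees}, which is why no factor of $2$ appears on the deviation $v$ here.
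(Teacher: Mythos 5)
Your proposal is correct and takes essentially the same route as the paper: both proofs establish $E[\hat{g}_i(l)]=g_i(l)$ via the tower property and the unbiasedness of $\hat{\Delta}$, bound each summand in $[0,2B_i]$, and apply Hoeffding's inequality a single time (indeed, no bias/variance split as in Theorem \ref{thm:m_i_guarantees} is needed), with your direct inversion $2\exp(-N_\Delta v^2/(2B_i^2))\leq \delta$ being a cleaner execution of the very computation the paper performs. One caveat shared with the paper: the range bound requires the \emph{estimator} $\hat{\Delta}^s$ itself to lie in $[0,2]$, which does not follow from $\Delta^s\leq 2$ alone and is an implicit assumption on $\hat{\Delta}^s$ in both arguments.
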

\begin{proof}
	We will use Hoeffding's inequality to prove the bound.
	$$\begin{aligned}
		&E[\frac{P(b_i^{\Delta,n}|\bar{b}_{k},\pi)}{Q_0(b_i^{\Delta,n})}\hat{\Delta}^s(b_i^{\Delta,n},\pi(b_i^{\Delta,n}))1_{\bar{R}^n_{k+1:i}\leq l -c(b_k,a_k)+ (T-i)R_{max}}] \\
		&=E[E_{b_i^{\Delta,n} \sim Q_0}[\frac{P(b_i^{\Delta,n}|\bar{b}_{k},\pi)}{Q_0(b_i^{\Delta,n})}\hat{\Delta}^s(b_i^{\Delta,n},\pi(b_i^{\Delta,n}))|\bar{b}_{k:T}]1_{\bar{R}_{k+1:i}\leq l -c(b_k,a_k)+ (T-i)R_{max}}] \\
		&=E[E[\hat{\Delta}^s(b_i^{\Delta,n},\pi(b_i^{\Delta,n}))|\bar{b}_k,\pi]1_{\bar{R}_{k+1:i}\leq l -c(b_k,a_k)+ (T-i)R_{max}}] \\
		&\overset{1}{=}E[\Delta^s(b_i^{\Delta,n},\pi(b_i^{\Delta,n}))1_{R_{k+1:i}\leq l -c(b_k,a_k)+ (T-i)R_{max}}|\bar{b}_k,\pi]
	\end{aligned}$$ 
	$^1$ $\hat{\Delta}$ is an unbiased estimator. \\
	Note that $$\begin{aligned}
		0 \leq \frac{P(b_i^{\Delta,n}|\bar{b}_{k},\pi)}{Q_0(b_i^{\Delta,n})}\hat{\Delta}^s(b_i^{\Delta,n},\pi(b_i^{\Delta,n}))1_{\bar{R}^n_{k+1:i}\leq l -c(b_k,a_k)+ (T-i)R_{max}}\leq 2B_i.
	\end{aligned}$$
	Hence, from Hoeffding's inequality,
	$$\begin{aligned}
		&P(|g_i(l)-\hat{g}_i(l)|>v|\bar{b}_k,\pi)\\
		&=P(|N_\Delta g_i(l) - \sum_{n=1}^{N_\Delta}\frac{P(b_i^{\Delta,n}|\bar{b}_{k},\pi)}{Q_0(b_i^{\Delta,n})}\hat{\Delta}^s(b_i^{\Delta,n},\pi(b_i^{\Delta,n}))1_{\bar{R}^n_{k+1:i}\leq f(l,i)}|>vN_\Delta) \\
		&\leq P(|N_\Delta g_i(l) - \sum_{n=1}^{N_\Delta}\frac{P(b_i^{\Delta,n}|\bar{b}_{k},\pi)}{Q_0(b_i^{\Delta,n})}\hat{\Delta}^s(b_i^{\Delta,n},\pi(b_i^{\Delta,n}))1_{\bar{R}^n_{k+1:i}\leq f(l,i)}|>-v \frac{2B_i^2ln(\frac{\delta}{2})}{v^2}) \\
		&\leq 2exp\{-2\frac{4B_i^4ln(\delta/2)^2}{4B_i^2 v^2 N_\Delta}\}
		=2exp\{-\frac{2B_i^2ln(\delta/2)^2}{v^2 N_\Delta}\}\leq 2exp\{-\frac{2B_i^2ln(\delta/2)^2}{-ln(\delta/2)\frac{2B_i^2}{v^2} v^2}\}\\
		&=2exp\{ln(\delta/2)\}=\delta
	\end{aligned}$$
	
\end{proof}

\begin{theorem}\label{prf:h_guarantees}
	Let $\alpha,\delta\in (0,1), v>0$. If $N_\Delta \geq -ln(\frac{(\delta/I)/(T-1-k)}{2})\frac{2B^2}{v^2 / (T-1-k)^2}$ then
	$$P(\sup_{x\in \mathbb{R}}\{g(x)-\hat{h}^+(x)\}>v|\bar{b}_k,a_k,\pi)\leq \delta$$
	$$P(\sup_{x\in \mathbb{R}}\{\hat{h}^-(x)-g(x)\}>v|\bar{b}_k,a_k,\pi)\leq \delta$$
\end{theorem}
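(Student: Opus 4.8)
The plan is to collapse the supremum over the continuum $l\in\mathbb{R}$ into a maximum over the finitely many bin boundaries, and then close the argument with a union bound over the pointwise guarantee of Theorem \ref{thm:g_estimator_convergence_guarantees}. The structural fact that makes this possible is that $g$ is monotonic increasing (established in the proof of Theorem \ref{thm:tight_lower_bound_for_v_and_q}), whereas $\hat{h}^+$ and $\hat{h}^-$ are piecewise constant on the bins $(k_{i-1},k_i]$, taking the values $\hat{g}(a_i)$ and $\hat{g}(a_{i-1})$ respectively, where the $a_i=k_i$ are the bin boundaries and the bins are taken to partition the relevant range of $l$ (with $g$ vanishing below the first boundary by $\lim_{l\to-\infty}g(l)=0$).

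First I would fix any $l$ lying in the bin $(k_{i-1},k_i]$. On this bin $\hat{h}^+(l)=\hat{g}(a_i)$, and by monotonicity of $g$ together with $l\le k_i=a_i$ we have $g(l)\le g(a_i)$. Hence $g(l)-\hat{h}^+(l)\le g(a_i)-\hat{g}(a_i)$. Since this holds uniformly on each bin, taking the supremum over all $l$ yields
\begin{equation}
	\sup_{l\in\mathbb{R}}\{g(l)-\hat{h}^+(l)\}\le \max_{i\in\{1,\dots,I\}}\{g(a_i)-\hat{g}(a_i)\}.
\end{equation}
This is the crux of the argument, because it replaces an uncountable family of events by a finite one.

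Next I would apply a union bound: $P(\max_{i}\{g(a_i)-\hat{g}(a_i)\}>v)\le\sum_{i=1}^I P(g(a_i)-\hat{g}(a_i)>v)\le\sum_{i=1}^I P(|g(a_i)-\hat{g}(a_i)|>v)$. For the stated sample size, invoking Theorem \ref{thm:g_estimator_convergence_guarantees} with its confidence parameter set to $\delta/I$ gives $P(|g(a_i)-\hat{g}(a_i)|>v)\le\delta/I$ for each fixed $a_i$; summing the $I$ terms yields the bound $\delta$, which proves the first inequality. The second inequality is symmetric: on $(k_{i-1},k_i]$ we have $\hat{h}^-(l)=\hat{g}(a_{i-1})$, and since $l>k_{i-1}=a_{i-1}$ monotonicity and right-continuity of $g$ give $g(l)\ge g(a_{i-1})$, so $\hat{h}^-(l)-g(l)\le\hat{g}(a_{i-1})-g(a_{i-1})$; the same union bound over the $I$ endpoints $a_0,\dots,a_{I-1}$ finishes the proof.

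I expect the main obstacle to be the reduction step rather than the probabilistic estimate: one must verify carefully that monotonicity of $g$ makes $g(a_i)$ dominate $g(l)$ (resp. $g(a_{i-1})$ be dominated by $g(l)$) for every $l$ in the half-open bin, including the boundary behavior and the tails outside the bracketed range, so that the deterministic pointwise inequality $g(l)-\hat{h}^+(l)\le g(a_i)-\hat{g}(a_i)$ holds on all of $(k_{i-1},k_i]$. Once this uniform-over-each-bin inequality is secured, the supremum-to-maximum collapse and the union-bound step are routine, and the prescribed $N_\Delta$ is exactly what forces each pointwise failure probability down to $\delta/I$.
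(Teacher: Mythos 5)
Your proposal is correct and follows essentially the same route as the paper's proof: reduce the supremum over $\mathbb{R}$ to a maximum over the bin endpoints via monotonicity of $g$ (and right-continuity for the $\hat{h}^-$ case), then apply Theorem \ref{thm:g_estimator_convergence_guarantees} with confidence $\delta/I$ and a union bound over the $I$ endpoints. Your direct bound $P(g(a_i)-\hat{g}(a_i)>v)\leq P(|g(a_i)-\hat{g}(a_i)|>v)\leq \delta/I$ is a cleaner rendering of the paper's more roundabout conditioning decomposition, and your explicit flagging of the tail/coverage issue outside the bins matches an assumption the paper leaves implicit.
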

\begin{proof}
	$$\begin{aligned}
		&P(\sup_{x\in \mathbb{R}}\{g(x)-\hat{h}^+(x)\}>v|\bar{b}_k,a_k,\pi)=P(\max_{1\leq i \leq I}\sup_{x\in (k_{i-1},k_i]}\{g(x)-\hat{h}^+(x)\}>v|\bar{b}_k,a_k,\pi) \\
		&\overset{1}{=}P(\max_{1\leq i \leq I}\sup_{x\in (k_{i-1},k_i]}\{g(x)-\hat{g}(k_i)\}>v|\bar{b}_k,a_k,\pi)\\
		&\overset{2}{\leq} P(\max_{1\leq i \leq I}\sup_{x\in (k_{i-1},k_i]}\{g(k_i)-\hat{g}(k_i)\}>v|\bar{b}_k,a_k,\pi) \\
		&=P(\max_{1\leq i \leq I}\{g(k_i)-\hat{g}(k_i)\}>v|\bar{b}_k,a_k,\pi)=P(\cup_{1\leq i \leq I}\{g(k_i)-\hat{g}(k_i)>v\}|\bar{b}_k,a_k,\pi) \\
		&\leq \sum_{i=1}^I P(g(k_i)-\hat{g}(k_i)>v|\bar{b}_k,a_k,\pi) \\
		&=\sum_{i=1}^I \underbrace{P(g(k_i)-\hat{g}(k_i)>v|\bar{b}_k,a_k,\pi, |g(k_i)-\hat{g}(k_i)|>v)}_{\leq 1}\underbrace{P(|g(k_i)-\hat{g}(k_i)|>v|\bar{b}_k,a_k,\pi)}_{\leq \delta/I} \\
		&+ P(g(k_i)-\hat{g}(k_i)>v|\bar{b}_k,a_k,\pi, |g(k_i)-\hat{g}(k_i)|\leq v)\underbrace{P(g(k_i)-\hat{g}(k_i)\leq v|\bar{b}_k,a_k,\pi)}_{\leq 1} \\
		&\overset{3}{\leq} \delta + \sum_{i=1}^I P(g(k_i)-\hat{g}(k_i)>v|\bar{b}_k,a_k,\pi, |g(k_i)-\hat{g}(k_i)|\leq v) \\
		&\leq \delta + \sum_{i=1}^I \underbrace{P(g(k_i)-(g(k_i)-v)>v|\bar{b}_k,a_k,\pi, |g(k_i)-\hat{g}(k_i)|\leq v)}_{=0} \\ 
		&=\delta
	\end{aligned}$$
	$^1$ Definition of $\hat{h}^+$. \\
	$^2$ g is monotomic increasing. \\
	$^3$ Theorem \ref{thm:g_estimator_convergence_guarantees} guarantees that for $N_\Delta \geq -ln(\frac{(\delta/I)/(T-1-k)}{2})\frac{2B^2}{v^2 / (T-1-k)^2}$, $\forall l\in \mathbb{R},P(|g(l)-\hat{g}(l)|>v)\leq \delta/I$.
	
	$$\begin{aligned}
		&P(\sup_{x\in \mathbb{R}}\{\hat{h}^-(x)-g(x)\}>v|\bar{b}_k,a_k,\pi)=P(\max_{1\leq i \leq I}\sup_{x\in (k_{i-1},k_i]}\{\hat{h}^-(x) - g(x)\}>v|\bar{b}_k,a_k,\pi) \\
		&\overset{4}{=}P(\max_{1\leq i \leq I}\sup_{x\in (k_{i-1},k_i]}\{\hat{g}(k_{i-1}) - g(x)\}>v|\bar{b}_k,a_k,\pi) \\
		&\overset{5}{\leq} P(\max_{1\leq i \leq I}\sup_{x\in (k_{i-1},k_i]}\{\hat{g}(k_{i-1}) - g(k_{i-1})\}>v|\bar{b}_k,a_k,\pi) \\
		&=P(\max_{1\leq i \leq I}\{\hat{g}(k_{i-1}) - g(k_{i-1})\}>v|\bar{b}_k,a_k,\pi)=P(\cup_{1\leq i \leq I}\{\hat{g}(k_{i-1}) - g(k_{i-1})>v\}|\bar{b}_k,a_k,\pi) \\
		&\leq \sum_{i=1}^I P(\hat{g}(k_{i-1}) - g(k_{i-1})>v|\bar{b}_k,a_k,\pi) \\
		&=\sum_{i=1}^I \underbrace{P(\hat{g}(k_{i-1}) - g(k_{i-1})>v|\bar{b}_k,a_k,\pi, |g(k_{i-1})-\hat{g}(k_{i-1})|> v)}_{\leq 1} \\
		&\times \underbrace{P(|g(k_{i-1})-\hat{g}(k_{i-1})|> v|\bar{b}_k,a_k,\pi)}_{\leq \delta/I} \\
		&+ P(\hat{g}(k_{i-1}) - g(k_{i-1})>v|\bar{b}_k,a_k,\pi, |g(k_{i-1})-\hat{g}(k_{i-1})|\leq v)
		\underbrace{P(|g(k_{i-1})-\hat{g}(k_{i-1})|\leq v|\bar{b}_k,a_k,\pi)}_{\leq 1} \\
		&\overset{3}{\leq} \delta + \sum_{i=1}^I P(\hat{g}(k_{i-1}) - g(k_{i-1})>v|\bar{b}_k,a_k,\pi, |g(k_{i-1})-\hat{g}(k_{i-1})|\leq v) \\
		&\leq \delta + \sum_{i=1}^I \underbrace{P(v+g(k_{i-1}) - g(k_{i-1})>v|\bar{b}_k,a_k,\pi, |g(k_{i-1})-\hat{g}(k_{i-1})|\leq v)}_{=0} \\
		&=\delta
	\end{aligned}$$
	$^4$ Definition of $\hat{h}^-$. \\
	$^5$ g is continuous from the right, and therefore $\lim_{x\rightarrow k_{i-1}^+}g(x)=g(k_{i-1})$. g is also monotonic increasing, so $\forall x\in (k_{i-1},k_i],g(x)\geq g(k_{i-1})$.
\end{proof}

\subsection{Proofs of section \ref{sec:performance_guarantees}}\label{sec:performance_guarantees_proofs}

\begin{theorem}\label{prf:unfirom_bound_convergence_guarantees_with_estimated_epsilon}
	Let $\delta \in (0,1),\alpha\in (0,1),v>0$. Denote \begin{enumerate}
		\item $L_1 \triangleq \frac{\alpha+\hat{\epsilon}-4v}{\alpha}\hat{Q}^\pi_{M_{P_s}}(\bar{b}_k,a_k,\alpha+\hat{\epsilon}) - \frac{\hat{\epsilon}}{\alpha}\hat{Q}^\pi_{M_{P_s}}(\bar{b}_k,a_k,\hat{\epsilon}-4v)$
		
		\item $L_2 \triangleq \frac{1}{\alpha}[\hat{Q}^\pi_{M_{P_s}}(\bar{b}_k,a_k)-(\hat{\epsilon}+4v)\hat{Q}^\pi_{M_{P_s}}(\bar{b}_k,a_k,\alpha)-(\alpha+\hat{\epsilon}+4v-1)(T-k+1)R_{max}]$
		
		\item $U \triangleq \frac{\alpha-\hat{\epsilon}+4v}{\alpha}\hat{Q}^\pi_{M_{P_s}}(\bar{b}_k,a_k,\alpha-\hat{\epsilon}) + \frac{\hat{\epsilon}}{\alpha}R_{max}(T-k+1)$
	\end{enumerate} 
	If $N_\Delta\geq -8B^2 \frac{ln(\frac{\delta/2}{4(T-k)})}{(v/(T-k))^2}$ then following hold \begin{enumerate}
		\item If $\hat{\epsilon}+\alpha<1$ then $
		P(L_1-Q_{M_P}^\pi(\bar{b}_k,a_k,\alpha)>\lambda_1+\lambda_2)\leq \delta
		$
		for $\lambda_1=-\frac{2R_{max}(T-k+1)}{\alpha}\sqrt{\frac{ln(1/(\delta/4))}{2C}}$ and $\lambda_2=\frac{\sqrt{\hat{\epsilon}}}{\alpha}2R_{max}(T-k+1)\sqrt{\frac{5ln(3/(\delta/4))}{C}}$.
		\item If $\hat{\epsilon}+\alpha\geq 1$ then $
		P(L_2-Q_{M_P}^\pi(\bar{b}_k,a_k,\alpha)>\eta_1+\eta_2)\leq \delta
		$ for $\eta_1 \triangleq \sqrt{-\frac{ln(\delta / 4)R_{max}(T-k+1)}{C^2\alpha^2}}$ and $\eta_2 \triangleq \frac{2\sqrt{\hat{\epsilon}+4v}}{\alpha}R_{max}(T-k+1)\sqrt{\frac{5ln(3/(\delta/4))}{C}}$
		\item If $\alpha>\hat{\epsilon}$ then $
		P(Q^\pi_{M_{P}}(\bar{b}_k,a_k,\alpha) - U> \lambda)\leq \delta
		$ for $\lambda=2R_{max}(T-k+1)\frac{\sqrt{\alpha-\hat{\epsilon}}}{\alpha}\sqrt{\frac{5ln(3/(\delta/2))}{C}}$
	\end{enumerate}
\end{theorem}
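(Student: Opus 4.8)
The plan is to treat the two sources of error separately — the error in estimating $\epsilon$ by $\hat{\epsilon}$, and the error in estimating each CVaR term by its Brown estimator $\hat{Q}^\pi_{M_{P_s}}$ — and to glue them to the deterministic sandwich $L_s \le Q^\pi_{M_P} \le U_s$ supplied by Theorem~\ref{thm:uniform_lower_and_upper_bounds_for_v_and_q}. First I would fix the high-probability event $E = \{|\hat{\epsilon}-\epsilon|\le 2v\}$. Applying Theorem~\ref{thm:m_i_sum_bound_guarantees} with confidence $\delta/2$, the stated sample size $N_\Delta \ge -8B^2\ln(\tfrac{\delta/2}{4(T-k)})/(v/(T-k))^2$ gives $P(E^c)\le \delta/2$, and on $E$ we have $\hat{\epsilon}-2v\le \epsilon \le \hat{\epsilon}+2v$. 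This window, together with the fact that $CVaR_\beta$ is monotonically decreasing in the confidence level $\beta$ and that the return is supported in $[-(T-k+1)R_{max},(T-k+1)R_{max}]$ (so $|Q^\pi_{P_s}|\le (T-k+1)R_{max}$), is exactly what the $-4v$ shifts inside the confidence arguments of $L_1$ are engineered to exploit.

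For the lower-bound cases (1) and (2) I would write $L - Q^\pi_{M_P} = (L - L_s) + (L_s - Q^\pi_{M_P})$ and discard the second bracket using $L_s - Q^\pi_{M_P}\le 0$, reducing the claim to an upper bound on $L - L_s$ that holds with the remaining budget $\delta/2$. On $E$ I would pass from the true level $\epsilon$ in $L_s$ to $\hat{\epsilon}$ term by term: for the positive contribution the level $\alpha+\hat{\epsilon}$ and coefficient $\tfrac{\alpha+\hat{\epsilon}-4v}{\alpha}$ are conservative by monotonicity in $\beta$ and by $|\hat{\epsilon}-\epsilon|\le 2v$; for the subtracted contribution the shift $\hat{\epsilon}-4v\le \epsilon-2v\le\epsilon$ forces $CVaR_{\hat{\epsilon}-4v}(Y)\ge CVaR_\epsilon(Y)$, so subtracting it can only lower the bound. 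What remains is replacing each true $Q^\pi_{P_s}$ by $\hat{Q}^\pi_{M_{P_s}}$, which I would control with the two inequalities of Theorem~\ref{thm:brown_bounds}, each at confidence $\delta/4$: the second (the $\sqrt{\ln(1/\delta)/(2C)}$ branch) in the direction that produces the negative correction $\lambda_1$, and the first (the $\sqrt{5\ln(3/\delta)/C}$ branch) for the subtracted term, producing $\lambda_2$ (the $\sqrt{\hat{\epsilon}}$ factor coming from the coefficient $\tfrac{\hat{\epsilon}}{\alpha}$ after bounding the support). Case (2), where $\hat{\epsilon}+\alpha\ge 1$, follows the same template but uses the alternative theoretical lower bound of Theorem~\ref{thm:uniform_lower_and_upper_bounds_for_v_and_q}: the $\inf Img$ term is replaced by the deterministic lower bound $-(T-k+1)R_{max}$, the mean $E[R_{k:T}]$ (a CVaR at level one) is concentrated by Hoeffding to give $\eta_1$, and the subtracted CVaR term is handled with the first Brown bound together with one more monotonicity-in-$\beta$ step to give $\eta_2$.

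Case (3), the upper bound, is the mirror image and is simpler because $U$ contains a single CVaR estimate. I would use $Q^\pi_{M_P} - U = (Q^\pi_{M_P} - U_s) + (U_s - U)$ with $Q^\pi_{M_P}-U_s\le 0$ from Theorem~\ref{thm:uniform_lower_and_upper_bounds_for_v_and_q}, again pass from $\epsilon$ to $\hat{\epsilon}$ on $E$ via monotonicity of $CVaR_\beta$ and $|\hat{\epsilon}-\epsilon|\le 2v$, and apply the first Brown bound once at confidence $\delta/2$ to $\hat{Q}^\pi_{M_{P_s}}(\bar{b}_k,a_k,\alpha-\hat{\epsilon})$ to obtain $\lambda$; since the $\epsilon$-event also carries $\delta/2$, the union bound closes at $\delta$.

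The main obstacle I anticipate is tracking the direction of every inequality simultaneously. Because the cost can be negative, the CVaR terms need not be sign-definite, so the monotonicity-in-$\beta$ step and the coefficient-perturbation step (bounded through $|Q^\pi_{P_s}|\le (T-k+1)R_{max}$) must be combined without assuming a sign; arranging the $-4v$ slack to absorb exactly the $2v$ window from the $\epsilon$-event while leaving the intended residual is the delicate point. The second subtlety is matching the confidence split ($\delta/2$ for $\hat{\epsilon}$, and $\delta/4+\delta/4$ versus $\delta/2$ for the CVaR estimates) to the precise constants in $\lambda_1,\lambda_2,\eta_1,\eta_2,\lambda$, which is what forces the specific logarithms $\ln(1/(\delta/4))$, $\ln(3/(\delta/4))$ and $\ln(3/(\delta/2))$ recorded in the statement.
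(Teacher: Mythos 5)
Your proposal is correct and follows essentially the same route as the paper's proof: the paper likewise conditions on the event $|\hat{\epsilon}-\epsilon|\leq 2v$ (Theorem \ref{thm:m_i_sum_bound_guarantees} at budget $\delta/2$), uses the $4v$ slack to replace the random levels by the deterministic $\tilde{\epsilon}=\epsilon-2v$, and then invokes a fixed-$\epsilon$ lemma (Theorem \ref{thm:unfirom_bound_convergence_guarantees_with_constant_epsilon}) that performs exactly your decomposition --- discarding $L_s-Q^\pi_{M_P}\leq 0$ deterministically via Theorem \ref{thm:uniform_lower_and_upper_bounds_for_v_and_q} and applying the two Brown inequalities of Theorem \ref{thm:brown_bounds} at $\delta/4$ each, with Hoeffding for the mean term in case 2 and a single Brown bound at $\delta/2$ in case 3. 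The only ingredient you leave implicit is that the level-shifting step must use monotonicity of the \emph{empirical} estimator $\hat{C}_\alpha$ of \eqref{eq:cvar_estimator_sorted_sample} in $\alpha$, pointwise on the sample space (not just monotonicity of the theoretical $CVaR_\beta$), which the paper establishes separately by differentiating the order-statistics formula; with that lemma added, your sketch matches the paper's argument.
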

\begin{proof}
	First we will prove that for $\alpha_1,\alpha_2 \in (0,1)$ such that $\alpha_1\leq \alpha_2$ it holds that 
	\begin{equation}\label{eq:cvar_estimator_stochastic_monotnicity}
		\forall c\in \mathbb{R},P(\hat{Q}^\pi_{M_P}(b,a,\alpha_1)>c) \geq P(\hat{Q}^\pi_{M_P}(b,a,\alpha_2)>c).
	\end{equation} 
	Let $x_i \in \mathbb{R}$ and denote by $x^{(i)}$ the i'th number from $x_1,\dots,x_n$ in ascending order.
	$$\begin{aligned}
		\frac{d}{d\alpha}\hat{C}_\alpha(\{x_i\}_{i=1}^n)
		&=\frac{d}{d\alpha} \Bigl{[} x^{(n)}-\frac{1}{\alpha}\sum_{i=1}^n (x^{(i)} - x^{(i-1)})\Bigl{(} \frac{i}{n}-(1-\alpha)\Bigr{)}^+ \Bigr{]} \\
		&=\frac{d}{d\alpha}\Bigl{[} -\sum_{i=1}^n (x^{(i)} - x^{(i-1)})\Bigl{(} -\frac{1}{\alpha}(1-\frac{i}{n}) + 1\Bigr{)}^+ \Bigr{]} \\
		&=-\sum_{i=1}^n (x^{(i)} - x^{(i-1)})\Bigl{(} \frac{1}{\alpha^2}(1-\frac{i}{n}) \Bigr{)}1_{\frac{i}{\alpha n}-\frac{1}{\alpha}+1>0}\leq 0
	\end{aligned}$$
	and therefore $\hat{C}_\alpha (\{x_i\}_{i=1}^n)$ is a monotonic decreasing function with respect to $\alpha$. $\hat{Q}$ is a random variable, when the random components are $\bar{R}^{(i)}_{k:T}$. Let $\Omega_i$ be the sample space of $\bar{R}^{(i)}_{k:T}$ and define $\Omega=\prod_{i=1}^C \Omega_i$ to be the product space on which $\hat{Q}_{M_P}^\pi$ is defined. Let $\omega \in \Omega$ such that $\omega\triangleq (\omega_1,\dots,\omega_C)$ and note that 
	$$
		\hat{Q}_{M_P}^\pi(\bar{b},a,\alpha)(\omega)\triangleq \hat{C}(\{\bar{R}^i_{k:T}\}_{i=1}^C)(w)=\bar{R}^{(C)}_{k:T}(\omega_C)-\frac{1}{\alpha}\sum_{i=1}^C (\bar{R}^{(i)}_{k:T}(\omega_i)-\bar{R}^{(i-1)}_{k:T}(\omega_{i-1}))\Bigl(\frac{i}{n}-(1-\alpha) \Bigr)^+.
	$$
	For all $i\in \{1,\dots,C\}$, $\bar{R}_i(\omega_i)$ is constant, and because the CVaR estimator was proved to be monotonic decreasing with respect to $\alpha$ we get that if $\alpha_1\leq \alpha_2$ then $\hat{Q}_{M_P}^\pi(\bar{b},a,\alpha_1)(\omega) \geq \hat{Q}_{M_P}^\pi(\bar{b},a,\alpha_2)(\omega)$. Hence, it holds that $$
		\{w\in \Omega|\hat{Q}_{M_P}^\pi(\bar{b},a,\alpha_2)(\omega)>c\}\subset \{w\in \Omega|\hat{Q}_{M_P}^\pi(\bar{b},a,\alpha_1)(\omega)>c\}.
	$$
	for all $c\in \mathbb{R}$. Therefore,
	$$\begin{aligned}
		P(\hat{Q}^\pi_{M_P}(b,a,\alpha_1)>c) \triangleq P(\{w\in \Omega|\hat{Q}_{M_P}^\pi(\bar{b},a,\alpha_1)(\omega)>c\}) &\geq P(\{w\in \Omega|\hat{Q}_{M_P}^\pi(\bar{b},a,\alpha_2)(\omega)>c\}) \\
		&\triangleq P(\hat{Q}^\pi_{M_P}(b,a,\alpha_2)>c)
	\end{aligned}$$
	Now we know that $\hat{Q}^\pi_{M_P}(b,a,\alpha)$ is monotonic decreasing with respect to $\alpha$.
	
	From \ref{thm:m_i_sum_bound_guarantees} it holds that for $N_\Delta\geq -8B^2 \frac{ln(\frac{\delta/2}{4(T-k)})}{(v/(T-k))^2}$ we get $$P(|\sum_{i=k+1}^T \hat{m}_i-\sum_{j=k+1}^T m_j|>2v)\leq \frac{\delta}{2}.$$
	
	Denote \begin{enumerate}
		\item $
		\bar{L}_{M_{P_s}}^1(Q, \epsilon)\triangleq\frac{\alpha+\epsilon}{\alpha}Q^\pi_{M_{P_s}}(\bar{b}_k,a_k,\alpha+\epsilon)-\frac{\epsilon}{\alpha}Q_{M_{P_s}}^\pi (\bar{b}_k,a_k,\epsilon)
		$
		
		\item $
		\bar{L}_{M_{P_s}}^2(Q,\epsilon)\triangleq\frac{1}{\alpha}[Q^\pi_{M_{P_s}}(\bar{b}_k,a_k)-\epsilon 
		Q^\pi_{M_{P_s}}(\bar{b}_k,a_k,\alpha)-(\alpha+\epsilon-1)(T-k+1)R_{max}]
		$
		
		\item $\bar{U}_{M_{P_s}}(Q,\epsilon)\triangleq\frac{\alpha-\epsilon}{\alpha}Q_{M_{P_s}}(\bar{b}_k,a_k,\alpha-\epsilon)+\frac{\epsilon}{\alpha}R_{max}(T-k+1)$
		
		\item $\tilde{\lambda}_2=\frac{\sqrt{\tilde{\epsilon}}}{\alpha}2R_{max}(T-k+1)\sqrt{\frac{5ln(3/(\delta/4))}{ C}}$
		
		\item $\tilde{\epsilon} \triangleq \epsilon - 2v$ and $\epsilon^+ \triangleq \epsilon + 2v$
		
		\item $\tilde{\lambda}=2R_{max}(T-k+1)\frac{\sqrt{\alpha-\tilde{\epsilon}}}{\alpha}\sqrt{\frac{5ln(3/(\delta/2))}{C}}$
		
		\item $\eta \triangleq \frac{2\sqrt{\epsilon^+}}{\alpha}R_{max}(T-k+1)\sqrt{\frac{5ln(3/(\delta/4))}{C}}$
	\end{enumerate}
	
	If $\hat{\epsilon}<1-\alpha$ then
	$$\begin{aligned}
		&P(\frac{\alpha+\hat{\epsilon}-4v}{\alpha}\hat{Q}^\pi_{M_{P_s}}(\bar{b}_k,a_k,\alpha+\hat{\epsilon}) - \frac{\hat{\epsilon}}{\alpha}\hat{Q}^\pi_{M_{P_s}}(\bar{b}_k,a_k,\hat{\epsilon}-4v) -Q^\pi_{M_{P}}(\bar{b}_k,a_k,\alpha)> \lambda_1 + \lambda_2) \\
		&=P(\frac{\alpha+\hat{\epsilon}-4v}{\alpha}\hat{Q}^\pi_{M_{P_s}}(\bar{b}_k,a_k,\alpha+\hat{\epsilon}) - \frac{\hat{\epsilon}}{\alpha}\hat{Q}^\pi_{M_{P_s}}(\bar{b}_k,a_k,\hat{\epsilon}-4v) -Q^\pi_{M_{P}}(\bar{b}_k,a_k,\alpha) \\
			&> \lambda_1 + \lambda_2\Bigr{|} |\epsilon - \hat{\epsilon}|>2v) 
		\underbrace{P(|\epsilon - \hat{\epsilon}|>2v)}_{\leq \delta/2} \\
		&+P(\frac{\alpha+\hat{\epsilon}-4v}{\alpha}\hat{Q}^\pi_{M_{P_s}}(\bar{b}_k,a_k,\alpha+\hat{\epsilon}) - \frac{\hat{\epsilon}}{\alpha}\hat{Q}^\pi_{M_{P_s}}(\bar{b}_k,a_k,\hat{\epsilon}-4v) -Q^\pi_{M_{P}}(\bar{b}_k,a_k,\alpha) \\
		&> \lambda_1 + \lambda_2\Bigr{|} |\epsilon - \hat{\epsilon}|\leq 2v)
		\times \underbrace{P(|\epsilon - \hat{\epsilon}|\leq 2v)}_{\leq 1} \\
		&\overset{1}{\leq} \frac{\delta}{2} + P(\frac{\alpha+\hat{\epsilon}-4v}{\alpha}\hat{Q}^\pi_{M_{P_s}}(\bar{b}_k,a_k,\alpha+\hat{\epsilon}) - \frac{\hat{\epsilon}}{\alpha}\hat{Q}^\pi_{M_{P_s}}(\bar{b}_k,a_k,\hat{\epsilon}-4v) - Q^\pi_{M_{P}}(\bar{b}_k,a_k,\alpha) \\
		&> \lambda_1 + \lambda_2\Bigr{|} |\epsilon - \hat{\epsilon}|\leq 2v) \\
		&\overset{2}{\leq} \frac{\delta}{2} + P(\frac{\alpha+\epsilon+2v-4v}{\alpha}\hat{Q}^\pi_{M_{P_s}}(\bar{b}_k,a_k,\alpha+\hat{\epsilon}) - \frac{\hat{\epsilon}}{\alpha}\hat{Q}^\pi_{M_{P_s}}(\bar{b}_k,a_k,\epsilon+2v-4v) -Q^\pi_{M_{P}}(\bar{b}_k,a_k,\alpha) \\
		&> \lambda_1 + \lambda_2\Bigr{|} |\epsilon - \hat{\epsilon}|\leq 2v) \\
		&= \frac{\delta}{2} + P(\frac{\alpha+\tilde{\epsilon}}{\alpha}\hat{Q}^\pi_{M_{P_s}}(\bar{b}_k,a_k,\alpha+\hat{\epsilon}) - \frac{\hat{\epsilon}}{\alpha}\hat{Q}^\pi_{M_{P_s}}(\bar{b}_k,a_k,\tilde{\epsilon}) -Q^\pi_{M_{P}}(\bar{b}_k,a_k,\alpha)> \lambda_1 + \lambda_2\Bigr{|} |\epsilon - \hat{\epsilon}|\leq 2v) \\
		&\overset{2}{\leq} \frac{\delta}{2} + P(\frac{\alpha+\tilde{\epsilon}}{\alpha}\hat{Q}^\pi_{M_{P_s}}(\bar{b}_k,a_k,\alpha+\tilde{\epsilon}) - \frac{\tilde{\epsilon}}{\alpha}\hat{Q}^\pi_{M_{P_s}}(\bar{b}_k,a_k,\tilde{\epsilon}) -Q^\pi_{M_{P}}(\bar{b}_k,a_k,\alpha)> \lambda_1 + \lambda_2\Bigr{|} |\epsilon - \hat{\epsilon}|\leq 2v) \\
		&=\frac{\delta}{2} + P(\frac{\alpha+\tilde{\epsilon}}{\alpha}\hat{Q}^\pi_{M_{P_s}}(\bar{b}_k,a_k,\alpha+\tilde{\epsilon}) - \frac{\tilde{\epsilon}}{\alpha}\hat{Q}^\pi_{M_{P_s}}(\bar{b}_k,a_k,\tilde{\epsilon}) + \bar{L}^1_{M_{P_s}}(\hat{Q},\tilde{\epsilon}) \\
		&- \bar{L}^1_{M_{P_s}}(\hat{Q},\tilde{\epsilon})-Q^\pi_{M_{P}}(\bar{b}_k,a_k,\alpha)> \lambda_1 + \lambda_2\Bigr{|} |\epsilon - \hat{\epsilon}|\leq 2v) \\
		&\overset{3}{\leq} \frac{\delta}{2} + P(\frac{\alpha+\tilde{\epsilon}}{\alpha}\hat{Q}^\pi_{M_{P_s}}(\bar{b}_k,a_k,\alpha+\tilde{\epsilon}) - \frac{\tilde{\epsilon}}{\alpha}\hat{Q}^\pi_{M_{P_s}}(\bar{b}_k,a_k,\tilde{\epsilon}) - \bar{L}^1_{M_{P_s}}(\hat{Q},\tilde{\epsilon})>0 \Bigr{|} |\epsilon - \hat{\epsilon}|\leq 2v) \\
		&+ P(\bar{L}^1_{M_{P_s}}(\hat{Q},\tilde{\epsilon})-Q^\pi_{M_{P}}(\bar{b}_k,a_k,\alpha)> \lambda_1 + \lambda_2\Bigr{|} |\epsilon - \hat{\epsilon}|\leq 2v) \\
		&\overset{4}{\leq} \frac{\delta}{2} + P(0>0\Bigr{|} |\epsilon - \hat{\epsilon}|\leq 2v) +  P(\bar{L}^1_{M_{P_s}}(\hat{Q},\tilde{\epsilon})-Q^\pi_{M_{P}}(\bar{b}_k,a_k,\alpha)> \lambda_1 + \tilde{\lambda}_2 \Bigr{|} |\epsilon - \hat{\epsilon}|\leq 2v) \\
		&\overset{5}{\leq} \delta
	\end{aligned}$$
	
	If $\hat{\epsilon}\geq 1-\alpha$ then
	$$\begin{aligned}
		&P(\frac{1}{\alpha}[\hat{Q}^\pi_{M_{P_s}}(\bar{b}_k,a_k)-(\hat{\epsilon}+4v)\hat{Q}^\pi_{M_{P_s}}(\bar{b}_k,a_k,\alpha)-(\alpha+\hat{\epsilon}+4v-1)(T-k+1)R_{max}] \\
		&- Q_{M_{P}}^\pi(\bar{b}_k,a_k,\alpha)>\eta_1 + \eta_2) \\
		&= P(\frac{1}{\alpha}[\hat{Q}^\pi_{M_{P_s}}(\bar{b}_k,a_k)-(\hat{\epsilon}+4v)\hat{Q}^\pi_{M_{P_s}}(\bar{b}_k,a_k,\alpha)-(\alpha+\hat{\epsilon}+4v-1)(T-k+1)R_{max}] \\
		&- Q_{M_{P}}^\pi(\bar{b}_k,a_k,\alpha)>\eta_1 + \eta_2 \Bigr{|} |\epsilon - \hat{\epsilon}| > 2v)\underbrace{P(|\epsilon - \hat{\epsilon}| > 2v)}_{\leq \delta/2} \\
		&+P(\frac{1}{\alpha}[\hat{Q}^\pi_{M_{P_s}}(\bar{b}_k,a_k)-(\hat{\epsilon}+4v)\hat{Q}^\pi_{M_{P_s}}(\bar{b}_k,a_k,\alpha)-(\alpha+\hat{\epsilon}+4v-1)(T-k+1)R_{max}] \\
		&- Q_{M_{P}}^\pi(\bar{b}_k,a_k,\alpha)>\eta_1 + \eta_2 \Bigr{|} |\epsilon - \hat{\epsilon}| \leq 2v)\underbrace{P(|\epsilon - \hat{\epsilon}| \leq 2v)}_{\leq 1} \\
		&\overset{1}{\leq} \frac{\delta}{2} + P(\frac{1}{\alpha}[\hat{Q}^\pi_{M_{P_s}}(\bar{b}_k,a_k)-(\hat{\epsilon}+4v)\hat{Q}^\pi_{M_{P_s}}(\bar{b}_k,a_k,\alpha)-(\alpha+\hat{\epsilon}+4v-1)(T-k+1)R_{max}] \\
		&- Q_{M_{P}}^\pi(\bar{b}_k,a_k,\alpha)>\eta_1 + \eta_2 \Bigr{|} |\epsilon - \hat{\epsilon}| \leq 2v) \\
		&\overset{8}{\leq} \frac{\delta}{2} + P(\frac{1}{\alpha}[\hat{Q}^\pi_{M_{P_s}}(\bar{b}_k,a_k)-\epsilon^+ \hat{Q}^\pi_{M_{P_s}}(\bar{b}_k,a_k,\alpha)-(\alpha+\epsilon^+-1)(T-k+1)R_{max}] \\
		&- Q_{M_{P}}^\pi(\bar{b}_k,a_k,\alpha)>\eta_1 + \eta_2 \Bigr{|} |\epsilon - \hat{\epsilon}| \leq 2v) \\
		&\overset{3}{\leq} \frac{\delta}{2} + P(\frac{1}{\alpha}[\hat{Q}^\pi_{M_{P_s}}(\bar{b}_k,a_k)-\epsilon^+ \hat{Q}^\pi_{M_{P_s}}(\bar{b}_k,a_k,\alpha)-(\alpha+\epsilon^+-1)(T-k+1)R_{max}] \\
		&- \bar{L}_{M_{P_s}}(\hat{Q}, \epsilon^+) > 0 \Bigr{|} |\epsilon - \hat{\epsilon}| \leq 2v) \\
		&+ P(\bar{L}_{M_{P_s}}(\hat{Q}, \epsilon^+) - Q_{M_{P}}^\pi(\bar{b}_k,a_k,\alpha)>\eta_1 + \eta_2 \Bigr{|} |\epsilon - \hat{\epsilon}| \leq 2v) \\
		&= \frac{\delta}{2} + P(0 > 0 \Bigr{|} |\epsilon - \hat{\epsilon}| \leq 2v) 
		+ P(\bar{L}_{M_{P_s}}(\hat{Q}, \epsilon^+) - Q_{M_{P}}^\pi(\bar{b}_k,a_k,\alpha)>\lambda \Bigr{|} |\epsilon - \hat{\epsilon}| \leq 2v) \\
		&=\frac{\delta}{2} + P(\bar{L}_{M_{P_s}}(\hat{Q}, \epsilon^+) - Q_{M_{P}}^\pi(\bar{b}_k,a_k,\alpha)>\eta_1 + \eta_2 \Bigr{|} |\epsilon - \hat{\epsilon}| \leq 2v) \\
		&\overset{8}{\leq} \frac{\delta}{2} + P(\bar{L}_{M_{P_s}}(\hat{Q}, \epsilon^+) - Q_{M_{P}}^\pi(\bar{b}_k,a_k,\alpha)>\eta_1+\eta \Bigr{|} |\epsilon - \hat{\epsilon}| \leq 2v) \\
		&\overset{9}{\leq} \delta
	\end{aligned}$$
	
	If $\alpha>\hat{\epsilon}$ then
	$$\begin{aligned}
		&P(Q^\pi_{M_{P}}(\bar{b}_k,a_k,\alpha) - \frac{\alpha-\hat{\epsilon}+4v}{\alpha}\hat{Q}^\pi_{M_{P_s}}(\bar{b}_k,a_k,\alpha-\hat{\epsilon}) - \frac{\hat{\epsilon}}{\alpha}R_{max}(T-k+1)> \lambda) \\
		&=P(Q^\pi_{M_{P}}(\bar{b}_k,a_k,\alpha) - \frac{\alpha-\hat{\epsilon}+4v}{\alpha}\hat{Q}^\pi_{M_{P_s}}(\bar{b}_k,a_k,\alpha-\hat{\epsilon}) - \frac{\hat{\epsilon}}{\alpha}R_{max}(T-k+1) \\
		&> \lambda \Bigr{|} |\epsilon - \hat{\epsilon}|\leq 2v)\underbrace{P(|\epsilon - \hat{\epsilon}|\leq 2v)}_{\leq 1} \\
		&+P(Q^\pi_{M_{P}}(\bar{b}_k,a_k,\alpha) - \frac{\alpha-\hat{\epsilon}+4v}{\alpha}\hat{Q}^\pi_{M_{P_s}}(\bar{b}_k,a_k,\alpha-\hat{\epsilon}) - \frac{\hat{\epsilon}}{\alpha}R_{max}(T-k+1) \\
			&> \lambda \Bigr{|} |\epsilon - \hat{\epsilon}| > 2v) \underbrace{P(|\epsilon - \hat{\epsilon}| > 2v)}_{\leq \delta/2} \\
		&\overset{1}{\leq} \frac{\delta}{2} + P(Q^\pi_{M_{P}}(\bar{b}_k,a_k,\alpha) - \frac{\alpha-\hat{\epsilon}+4v}{\alpha}\hat{Q}^\pi_{M_{P_s}}(\bar{b}_k,a_k,\alpha-\hat{\epsilon}) - \frac{\hat{\epsilon}}{\alpha}R_{max}(T-k+1) \\
		&> \lambda \Bigr{|} |\epsilon - \hat{\epsilon}|\leq 2v) \\
		&\overset{3}{\leq} \frac{\delta}{2} + P(Q^\pi_{M_{P}}(\bar{b}_k,a_k,\alpha) - \bar{U}_{M_{P_s}}(\hat{Q},\tilde{\epsilon}) > \lambda \Bigr{|} |\epsilon - \hat{\epsilon}|\leq 2v) \\
		&+ P(\bar{U}_{M_{P_s}}(\hat{Q},\tilde{\epsilon}) - \frac{\alpha-\hat{\epsilon}+4v}{\alpha}\hat{Q}^\pi_{M_{P_s}}(\bar{b}_k,a_k,\alpha-\hat{\epsilon}) - \frac{\hat{\epsilon}}{\alpha}R_{max}(T-k+1)> 0 \Bigr{|} |\epsilon - \hat{\epsilon}|\leq 2v) \\
		&\overset{}{\leq} \frac{\delta}{2} + P(Q^\pi_{M_{P}}(\bar{b}_k,a_k,\alpha) - \bar{U}_{M_{P_s}}(\hat{Q},\tilde{\epsilon}) > \tilde{\lambda} \Bigr{|} |\epsilon - \hat{\epsilon}|\leq 2v) \\
		&+ P(\bar{U}_{M_{P_s}}(\hat{Q},\tilde{\epsilon}) - \frac{\alpha-\hat{\epsilon}+4v}{\alpha}\hat{Q}^\pi_{M_{P_s}}(\bar{b}_k,a_k,\alpha-\hat{\epsilon}) - \frac{\hat{\epsilon}}{\alpha}R_{max}(T-k+1)> 0 \Bigr{|} |\epsilon - \hat{\epsilon}|\leq 2v) \\
		&\overset{6}{\leq} \delta + P(\bar{U}_{M_{P_s}}(\hat{Q},\tilde{\epsilon}) - \frac{\alpha-\hat{\epsilon}+4v}{\alpha}\hat{Q}^\pi_{M_{P_s}}(\bar{b}_k,a_k,\alpha-\hat{\epsilon}) - \frac{\hat{\epsilon}}{\alpha}R_{max}(T-k+1)> 0 \Bigr{|} |\epsilon - \hat{\epsilon}|\leq 2v) \\
		&\overset{7}{\leq} \delta + P(\bar{U}_{M_{P_s}}(\hat{Q},\tilde{\epsilon}) - \frac{\alpha-\tilde{\epsilon}}{\alpha}\hat{Q}^\pi_{M_{P_s}}(\bar{b}_k,a_k,\alpha-\tilde{\epsilon}) - \frac{\tilde{\epsilon}}{\alpha}R_{max}(T-k+1)> 0 \Bigr{|} |\epsilon - \hat{\epsilon}|\leq 2v) \\
		&\leq \delta + P(0>0 \Bigr{|} |\epsilon - \hat{\epsilon}|\leq 2v) \\
		&= \delta
	\end{aligned}$$
	
	$^1$ Theorem \ref{thm:m_i_sum_bound_guarantees}, which is stated in the beginning of the proof.\\
	$^2$ $v>0$ and $\hat{Q}$ is monotonic decreasing with respect to $\alpha$. \\
	$^3$ Theorem \ref{thm:prob_x_plus_y_bound}.\\
	$^4$ Given $|\hat{\epsilon}-\epsilon|\leq 2v$ it holds that $\tilde{\lambda}_2\geq \lambda_2$. \\
	$^5$ Given $|\hat{\epsilon}-\epsilon|\leq 2v$ it holds that $\tilde{\epsilon}\leq 1-\alpha$. Hence, by applying Theorem \ref{thm:unfirom_bound_convergence_guarantees_with_constant_epsilon} over $\tilde{\epsilon}$ we get the inequality. \\
	$^6$ Given $|\hat{\epsilon}-\epsilon|\leq 2v$ and $\alpha>\hat{\epsilon}$ it holds that $\tilde{\epsilon}<\alpha$. Hence, by applying Theorem \ref{thm:unfirom_bound_convergence_guarantees_with_constant_epsilon} over $\tilde{\epsilon}$ we get the inequality. \\
	$^7$ $\tilde{\epsilon}\leq \hat{\epsilon} \leq \epsilon + 2v$. \\
	$^8$ Given $|\epsilon - \hat{\epsilon}|\leq 2v$ we get $\hat{\epsilon}+4v \geq \epsilon -2v + 4v=\epsilon^+$. \\
	$^9$ Given $|\hat{\epsilon}-\epsilon|\leq 2v$ we get $\epsilon^+ + \alpha \geq 1$ and therefore the inequality holds from Theorem \ref{thm:unfirom_bound_convergence_guarantees_with_constant_epsilon}.
\end{proof}

\begin{theorem}\label{thm:unfirom_bound_convergence_guarantees_with_constant_epsilon}
	Let $\delta\in (0,1), \alpha\in (0,1), \lambda>0$, and denote
	
	\begin{enumerate}
		\item $
		\epsilon\triangleq\sum_{i=k+1}^{T-1} E_{\bar{b}_{k+1:i}}[\Delta^s(\bar{b}_i,a_i)|\bar{b}_k,\pi]
		$ 
		\item $
		\bar{L}_{M_{P_s}}^1(Q)\triangleq\frac{\alpha+\epsilon}{\alpha}Q^\pi_{M_{P_s}}(\bar{b}_k,a_k,\alpha+\epsilon)-\frac{\epsilon}{\alpha}Q_{M_{P_s}}^\pi (\bar{b}_k,a_k,\epsilon)
		$
		
		\item $
		\bar{L}_{M_{P_s}}^2(Q)\triangleq\frac{1}{\alpha}[Q^\pi_{M_{P_s}}(\bar{b}_k,a_k)-\epsilon 
		Q^\pi_{M_{P_s}}(\bar{b}_k,a_k,\alpha)-(\alpha+\epsilon-1)(T-k+1)R_{max}]
		$
		
		\item $\bar{U}_{M_{P_s}}(Q)\triangleq\frac{\alpha-\epsilon}{\alpha}Q_{M_{P_s}}(\bar{b}_k,a_k,\alpha-\epsilon)+\frac{\epsilon}{\alpha}R_{max}(T-k+1)$
	\end{enumerate}
	The following bounds hold \begin{enumerate}
		\item If $\epsilon+\alpha<1$ then
		$$
		P(\bar{L}_{M_{P_s}}^1(\hat{Q})-Q_{M_{P}}(\bar{b}_k,a_k,\alpha)>\lambda_1+\lambda_2)\leq \delta
		$$
		for $\lambda_1=-\frac{2R_{max}(T-k+1)}{\alpha}\sqrt{\frac{ln(1/(\delta/2))}{2C}}$ and $\lambda_2=\frac{\epsilon}{\alpha}2R_{max}(T-k+1)\sqrt{\frac{5ln(3/(\delta/2))}{\epsilon C}}$.
		
		\item If $\epsilon+\alpha\geq 1$ then 
		$$
		P(\bar{L}_{M_{P_s}}^2(\hat{Q})-Q_{M_{P}}(\bar{b}_k,a_k,\alpha)>\lambda_1+\lambda_2)\leq \delta
		$$
		for $\lambda_1=\sqrt{-\frac{ln(\delta/2)R_{max}(T-k+1)}{C^2 \alpha^2}}$ and $\lambda_2=\frac{\epsilon}{\alpha}2R_{max}(T-k+1)\sqrt{\frac{5ln(3/(\delta/2))}{\epsilon C}}$.
		
		\item If $\alpha>\epsilon$ then $$
		P(Q_{M_{P_s}}(\bar{b}_k,a_k,\alpha)-\bar{U}_{M_{P}}(\hat{Q})>\lambda)\leq \delta
		$$
		for $\lambda=2R_{max}(T-k+1)\frac{\alpha-\epsilon}{\alpha}\sqrt{\frac{5ln(3/\delta)}{(\alpha-\epsilon)C}}$.
	\end{enumerate}
\end{theorem}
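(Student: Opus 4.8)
The plan is to split the deviation between each \emph{estimated} bound and the true $Q^\pi_{M_P}(\bar b_k,a_k,\alpha)$ into a deterministic part, handled by Theorem~\ref{thm:uniform_lower_and_upper_bounds_for_v_and_q}, and a statistical part, handled by the CVaR concentration inequalities of Theorem~\ref{thm:brown_bounds}. Throughout I use that the particle return $\bar R_{k:T}$ is supported on $[-R_{max}(T-k+1),R_{max}(T-k+1)]$, so the range in Theorem~\ref{thm:brown_bounds} is $b-a=2R_{max}(T-k+1)$, and I abbreviate the CVaR estimation error at confidence level $\beta$ by $D(\beta)\triangleq \hat Q^\pi_{M_{P_s}}(\bar b_k,a_k,\beta)-Q^\pi_{M_{P_s}}(\bar b_k,a_k,\beta)=\hat C_\beta-CVaR_\beta$.

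First I would apply Theorem~\ref{thm:uniform_lower_and_upper_bounds_for_v_and_q} at the level of the particle-belief MDPs $M_P,M_{P_s}$ to obtain the three deterministic sandwich inequalities: $\bar L^1_{M_{P_s}}(Q)\le Q^\pi_{M_P}(\bar b_k,a_k,\alpha)$ when $\epsilon+\alpha<1$, $\bar L^2_{M_{P_s}}(Q)\le Q^\pi_{M_P}(\bar b_k,a_k,\alpha)$ when $\epsilon+\alpha\ge 1$, and $Q^\pi_{M_P}(\bar b_k,a_k,\alpha)\le \bar U_{M_{P_s}}(Q)$ when $\alpha>\epsilon$, where $Q$ denotes the exact simplified CVaR functions. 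For part~1 I would then write $\bar L^1(\hat Q)-Q^\pi_{M_P}\le \bar L^1(\hat Q)-\bar L^1(Q)=\tfrac{\alpha+\epsilon}{\alpha}D(\alpha+\epsilon)-\tfrac{\epsilon}{\alpha}D(\epsilon)$, so the deterministic gap drops out and only the estimation error of the bound formula survives; parts~2 and~3 are decomposed identically.

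The core of the argument is pairing each surviving error term with the correctly oriented tail of Theorem~\ref{thm:brown_bounds}. A CVaR term entering with a positive coefficient (e.g.\ $\tfrac{\alpha+\epsilon}{\alpha}D(\alpha+\epsilon)$ in part~1) is bounded by the lower-tail inequality at level $\beta=\alpha+\epsilon$, giving $D(\alpha+\epsilon)\le -\tfrac{b-a}{\alpha+\epsilon}\sqrt{\ln(1/(\delta/2))/(2C)}$ with probability at least $1-\delta/2$; multiplying by $\tfrac{\alpha+\epsilon}{\alpha}$ yields exactly $\lambda_1$. A term entering with a negative coefficient (e.g.\ $-\tfrac{\epsilon}{\alpha}D(\epsilon)$) is bounded by the upper-tail inequality at level $\beta=\epsilon$, giving $-D(\epsilon)\le (b-a)\sqrt{5\ln(3/(\delta/2))/(\epsilon C)}$ with probability at least $1-\delta/2$, which yields $\lambda_2$; a union bound over the two events closes part~1 at total probability $\delta$. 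Part~3 involves the single negatively-signed error $-\tfrac{\alpha-\epsilon}{\alpha}D(\alpha-\epsilon)$, so a single upper-tail application at level $\alpha-\epsilon$ and confidence $\delta$ produces $\lambda$ directly. Part~2 differs only in that the leading term $\tfrac1\alpha(\hat Q^\pi_{M_{P_s}}(\bar b_k,a_k)-E[R_{k:T}])$ is an empirical-mean deviation, which I would control with Hoeffding's inequality on the bounded returns to obtain $\lambda_1$, while the remaining $-\tfrac{\epsilon}{\alpha}D(\epsilon)$ is again handled by the upper-tail CVaR bound at level $\epsilon$ to obtain $\lambda_2$, combined by a union bound.

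The main obstacle is bookkeeping rather than a single deep step: one must attach to each term the tail of Theorem~\ref{thm:brown_bounds} with the matching sign and insert the correct level $\beta$ into the $1/(\beta C)$ factor, since a mismatch alters the constants in $\lambda_1,\lambda_2,\lambda$. Additional care is needed to confirm that Theorem~\ref{thm:uniform_lower_and_upper_bounds_for_v_and_q} transfers verbatim to the particle-belief MDPs so that the deterministic gaps are genuinely nonpositive, and in part~2 to verify that the Hoeffding constant for the empirical mean reproduces the stated $\lambda_1$.
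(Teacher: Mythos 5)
Your proposal matches the paper's proof essentially step for step: the same decomposition of $\bar{L}(\hat{Q})-Q^\pi_{M_P}(\bar{b}_k,a_k,\alpha)$ into the estimation gap $\bar{L}(\hat{Q})-\bar{L}(Q)$ plus the deterministic gap $\bar{L}(Q)-Q^\pi_{M_P}\leq 0$ (which contributes zero probability by Theorem \ref{thm:uniform_lower_and_upper_bounds_for_v_and_q}), the same union-bound splitting (the paper's Theorem \ref{thm:prob_x_plus_y_bound}), the same sign-matched applications of the two tails of Theorem \ref{thm:brown_bounds} at levels $\alpha+\epsilon$, $\epsilon$, and $\alpha-\epsilon$ with $b-a=2R_{max}(T-k+1)$, and Hoeffding for the expectation term in part 2. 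The points you flag for care---tail orientation, the level inside the $1/(\beta C)$ factor, and verifying the Hoeffding constant behind $\lambda_1$ in part 2---are precisely where the paper's own bookkeeping resides, so the proposal is complete.
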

\begin{proof}\label{prf:unfirom_bound_convergence_guarantees_with_constant_epsilon}
	If $\epsilon+\alpha<1$ then
	$$
	\begin{aligned}
		&P(\bar{L}^1_{M_{P_s}}(\hat{Q})-Q_{M_P}(\bar{b}_k,a_k,\alpha)>\lambda_1+\lambda_2)=P(\bar{L}^1_{M_{P_s}}(\hat{Q})-\bar{L}^1_{M_{P_s}}(Q)+\bar{L}^1_{M_{P_s}}(Q) \\
		&-Q_{M_P}(\bar{b}_k,a_k,\alpha)>\lambda_1+\lambda_2) \\
		&\overset{1}{\leq} P(\bar{L}^1_{M_{P_s}}(\hat{Q})-\bar{L}^1_{M_{P_s}}(Q)>\lambda_1+\lambda_2) + P(\bar{L}^1_{M_{P_s}}(Q)-Q_{M_P}(\bar{b}_k,a_k,\alpha)>0) \\
		&\overset{2}{=}P(\bar{L}^1_{M_{P_s}}(\hat{Q})-\bar{L}^1_{M_{P_s}}(Q)>\lambda_1+\lambda_2) \\
		&=P(\frac{\alpha+\epsilon}{\alpha}(\hat{Q}^\pi_{P_s}(\bar{b}_k,a_k,\alpha+\epsilon)-Q^\pi_{P_s}(\bar{b}_k,a_k,\alpha+\epsilon))-\frac{\epsilon}{\alpha}(\hat{Q}_{P_s}^\pi (\bar{b}_k,a_k,\epsilon)-Q_{P_s}^\pi (\bar{b}_k,a_k,\epsilon))>\lambda_1+\lambda_2) \\
		&\overset{1}{\leq} P(\hat{Q}^\pi_{P_s}(\bar{b}_k,a_k,\alpha+\epsilon)-Q^\pi_{P_s}(\bar{b}_k,a_k,\alpha+\epsilon)>\lambda_1\frac{\alpha}{\alpha+\epsilon}) + P(\hat{Q}^\pi_{P_s}(\bar{b}_k,a_k,\epsilon)-Q^\pi_{P_s}(\bar{b}_k,a_k,\epsilon)<-\lambda_2\frac{\alpha}{\epsilon}) \\
		&=P(Q^\pi_{P_s}(\bar{b}_k,a_k,\alpha+\epsilon)-\hat{Q}^\pi_{P_s}(\bar{b}_k,a_k,\alpha+\epsilon)<-\lambda_1\frac{\alpha}{\alpha+\epsilon}) + P(Q^\pi_{P_s}(\bar{b}_k,a_k,\epsilon)-\hat{Q}^\pi_{P_s}(\bar{b}_k,a_k,\epsilon)>\lambda_2\frac{\alpha}{\epsilon}) \\
		&=P(Q^\pi_{P_s}(\bar{b}_k,a_k,\alpha+\epsilon)-\hat{Q}^\pi_{P_s}(\bar{b}_k,a_k,\alpha+\epsilon)<\frac{2R_{max}(T-k+1)}{\alpha+\epsilon}\sqrt{\frac{ln(1/(\delta/2))}{2C}}) \\
		&+ P(Q^\pi_{P_s}(\bar{b}_k,a_k,\epsilon)-\hat{Q}^\pi_{P_s}(\bar{b}_k,a_k,\epsilon)>2R_{max}(T-k+1)\sqrt{\frac{5ln(3/(\delta/2))}{\epsilon C}}) \\
		&\overset{3}{\leq} \frac{\delta}{2} + \frac{\delta}{2} = \delta
	\end{aligned}
	$$
	$^1$ Follows from Theorem \ref{thm:prob_x_plus_y_bound}. \\
	$^2$ In Theorem \ref{thm:uniform_lower_and_upper_bounds_for_v_and_q} we showed that $\bar{L}^1_{M_{P_s}}$ is the lower bound of $Q_{M_P}(\bar{b}_k,a_km\alpha)$ and therefore $P(\bar{L}^1_{M_{P_s}}(Q)-Q_{M_P}(\bar{b}_k,a_k,\alpha)>0)=0$.\\
	$^3$ Follows from \ref{thm:brown_bounds}. \\
	
	If $\epsilon+\alpha\geq 1$ then 
	$$
	\begin{aligned}
		&P(\bar{L}^2_{M_{P_s}}(\hat{Q})-Q_{M_P}(\bar{b}_k,a_k,\alpha)>\lambda_1+\lambda_2)=P(\bar{L}^2_{M_{P_s}}(\hat{Q})-\bar{L}^2_{M_{P_s}}(Q)+\bar{L}^2_{M_{P_s}}(Q) \\
		&-Q_{M_P}(\bar{b}_k,a_k,\alpha)>\lambda_1+\lambda_2) \\
		&\overset{1}{\leq} P(\bar{L}^2_{M_{P_s}}(\hat{Q})-\bar{L}^2_{M_{P_s}}(Q)>\lambda_1+\lambda_2) + P(\bar{L}^2_{M_{P_s}}(Q)-Q_{M_P}(\bar{b}_k,a_k,\alpha)>0) \\
		&\overset{4}{=}P(\bar{L}^2_{M_{P_s}}(\hat{Q})-\bar{L}^2_{M_{P_s}}(Q)>\lambda_1+\lambda_2) \\
		&=P(\frac{1}{\alpha}(\hat{Q}^\pi_{P_s}(\bar{b}_k,a_k)-Q^\pi_{P_s}(\bar{b}_k,a_k))-\epsilon(\hat{Q}_{P_s}^\pi (\bar{b}_k,a_k,\epsilon)-Q_{P_s}^\pi (\bar{b}_k,a_k,\epsilon))>\lambda_1+\lambda_2) \\
		&\overset{1}{\leq} P(\hat{Q}^\pi_{P_s}(\bar{b}_k,a_k)-Q^\pi_{P_s}(\bar{b}_k,a_k)>\lambda_1\alpha) + P(\hat{Q}^\pi_{P_s}(\bar{b}_k,a_k,\epsilon)-Q^\pi_{P_s}(\bar{b}_k,a_k,\epsilon)<-\lambda_2\epsilon) \\
		&=P(\frac{1}{C}\sum_{i=1}^C R_{k:T}^i-E[R_{k:T}|b_k,a_k,\pi]>\lambda_1\alpha) + P(Q^\pi_{P_s}(\bar{b}_k,a_k,\epsilon)-\hat{Q}^\pi_{P_s}(\bar{b}_k,a_k,\epsilon)>\lambda_2\epsilon) \\
		&=\underbrace{P(\frac{1}{C}\sum_{i=1}^C R_{k:T}^i-E[R_{k:T}|b_k,a_k,\pi]>\sqrt{-\frac{ln(\delta/2)R_{max}(T-k+1)}{C^2}})}_{A} \\
		&+ \underbrace{P(Q^\pi_{P_s}(\bar{b}_k,a_k,\epsilon)-\hat{Q}^\pi_{P_s}(\bar{b}_k,a_k,\epsilon)>2R_{max}(T-k+1)\sqrt{\frac{5ln(3/(\delta/2))}{\epsilon C}})}_{B} \\
		&\overset{5}{\leq} \frac{\delta}{2} + \frac{\delta}{2} = \delta
	\end{aligned}
	$$
	$^4$ In Theorem \ref{thm:uniform_lower_and_upper_bounds_for_v_and_q} we showed that $\bar{L}^2_{M_{P_s}}$ is the lower bound of $Q_{M_P}(\bar{b}_k,a_km\alpha)$ and therefore $P(\bar{L}^1_{M_{P_s}}(Q)-Q_{M_P}(\bar{b}_k,a_k,\alpha)>0)=0$.\\
	$^5$ $B\leq \delta/2$ from Theorem \ref{thm:brown_bounds} and A from Hoeffding's bound.
	$$
	\begin{aligned}
		A &= P(\sum_{i=1}^C R_{k:T}^i-E[CR_{k:T}|b_k,a_k,\pi]>\sqrt{-ln(\delta/2)R_{max}(T-k+1)}) \\
		&\leq exp\{\frac{-2(-ln(\delta/2)R_{max}(T-k+1))}{2R_{max}(T-k+1)}\}=\delta/2.
	\end{aligned}
	$$
	If $\alpha>\epsilon$ then
	$$
	\begin{aligned}
		&P(Q_{M_{P_s}}(\bar{b}_k,a_k,\alpha)-\bar{U}_{M_{P_s}}(\hat{Q})>\lambda)= P(Q_{M_{P_s}}(\bar{b}_k,a_k,\alpha)-\bar{U}_{M_{P_s}}(Q)+\bar{U}_{M_{P_s}}(Q)-\bar{U}_{M_{P_s}}(\hat{Q})>\lambda)\\
		&\overset{1}{\leq} P(Q_{M_{P_s}}(\bar{b}_k,a_k,\alpha)-\bar{U}_{M_{P_s}}(Q)>0) + P(Q_{M_{P_s}}(\bar{U}_{M_{P_s}}(Q)-\bar{U}_{M_{P_s}}(\hat{Q})>\lambda) \\
		&\overset{6}{=}P(\bar{U}_{M_{P_s}}(Q)-\bar{U}_{M_{P_s}}(\hat{Q})>\lambda) \\
		&=P(\frac{\alpha-\epsilon}{\alpha}(Q^\pi_{M_{P_s}}(b_k,a_k,\alpha-\epsilon)- \hat{Q}^\pi_{M_{P_s}}(b_k,a_k,\alpha-\epsilon))>\lambda) \\
		&=P(Q^\pi_{M_{P_s}}(b_k,a_k,\alpha-\epsilon)- \hat{Q}^\pi_{M_{P_s}}(b_k,a_k,\alpha-\epsilon)>2R_{max}(T-k+1)\sqrt{\frac{5ln(3/\delta)}{(\alpha-\epsilon)C}}) \\
		&\overset{3}{\leq} \delta
	\end{aligned}
	$$
	$^6$ Theorem \ref{thm:uniform_lower_and_upper_bounds_for_v_and_q} shows that $Q_{M_{P_s}}(\bar{b}_k,a_k,\alpha)\leq\bar{U}_{M_{P_s}}(Q)$.
\end{proof}

\begin{theorem}\label{prf:tight_lower_bound_guarantees_estimated_g}
	Let $\eta>0,\delta \in (0,1),\alpha \in (0,1)$. 
	If $N_\Delta \geq -ln(\frac{((\delta/4)/I)/(T-1-k)}{2})\frac{2B^2}{\eta^2 / (T-1-k)^2}$, then 
	$$P(\hat{Q}_{M_{P_s}}^{\pi,\hat{h}^+ +
		\eta}(\bar{b}_k,a_k,\alpha) - Q_{M_P}^\pi(\bar{b}_k,a_k,\alpha)>v|\bar{b}_k,\pi)\leq \delta$$
	for $v=\frac{2R_{max}(T-k+1)}{\alpha}\sqrt{\frac{ln(1/(\delta/4))}{2N_\Delta}}$ and $B_i \triangleq \sup_{b_i} \frac{P(b_i|\bar{b}_k, \pi)}{Q_0{b_i}}, B \triangleq \max_{i\in \{k+1,\dots,T-1\}} B_i$.
\end{theorem}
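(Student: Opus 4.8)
The plan is to split the target event according to whether the estimated upper bound $\hat{h}^+$ dominates the true $g$ uniformly, and then to control the two resulting pieces separately using Theorem \ref{thm:h_guarantees} and the second inequality of Theorem \ref{thm:brown_bounds}. Write $CVaR_\alpha^{\hat{F}_{Y^L}}$ for the population CVaR of the constructed distribution $\hat{F}_{Y^L}$, and define the good event $G \triangleq \{\sup_{l\in\mathbb{R}}(g(l)-\hat{h}^+(l))\leq \eta\}$. First I would apply Theorem \ref{thm:h_guarantees} with confidence level $\delta/4$ and tolerance $\eta$ in place of $v$: the stated hypothesis $N_\Delta \geq -\ln(\frac{((\delta/4)/I)/(T-1-k)}{2})\frac{2B^2}{\eta^2/(T-1-k)^2}$ is precisely the requirement of that theorem, so $P(G^c)=P(\sup_l(g(l)-\hat{h}^+(l))>\eta)\leq \delta/4$.

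The core step is to show that on $G$ the population CVaR of the constructed distribution remains a valid lower bound, i.e. $CVaR_\alpha^{\hat{F}_{Y^L}}\leq Q^\pi_{M_P}(\bar{b}_k,a_k,\alpha)$. On $G$ we have $g(l)\leq \hat{h}^+(l)+\eta$ for every $l$, so setting $F^\star(l)\triangleq \min(1,P_{M_{P_s}}(\bar{R}_{k:T}\leq l\,|\,\bar{b}_k,a_k,\pi)+g(l))$ for the distribution built from the true $g$, monotonicity of $\min(1,\cdot)$ gives $F^\star(l)\leq \hat{F}_{Y^L}(l)$ pointwise. A pointwise-larger CDF corresponds to a first-order stochastically dominated random variable, so by the monotonicity axiom of CVaR (Appendix \ref{sec:cvar_remarks}) we obtain $CVaR_\alpha^{\hat{F}_{Y^L}}\leq CVaR_\alpha(F^\star)$; and Theorem \ref{thm:tight_lower_bound_for_v_and_q}, applied inside the particle-belief MDP $M_P$ with $g$ serving as the point-wise CDF bound, yields $CVaR_\alpha(F^\star)\leq Q^\pi_{M_P}(\bar{b}_k,a_k,\alpha)$. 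Chaining these two inequalities gives the lower-bound property on $G$.

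For the remaining sampling error I would condition on the delta-belief draws, which fix $\hat{h}^+$ and hence $\hat{F}_{Y^L}$; the returns $\bar{R}^{Y^L}_{i,\hat{h}^+ +\eta}$ are then i.i.d.\ from $\hat{F}_{Y^L}$ on a support of length $b-a=2R_{max}(T-k+1)$, so the second inequality of Theorem \ref{thm:brown_bounds} at level $\delta/4$ gives $P(CVaR_\alpha^{\hat{F}_{Y^L}}-\hat{Q}^{\pi,\hat{h}^+ +\eta}_{M_{P_s}}<v)\leq \delta/4$ with exactly $v=\frac{2R_{max}(T-k+1)}{\alpha}\sqrt{\frac{\ln(1/(\delta/4))}{2N_\Delta}}$. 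Since $\{\hat{Q}^{\pi,\hat{h}^+ +\eta}_{M_{P_s}}-CVaR_\alpha^{\hat{F}_{Y^L}}>v\}\subseteq\{CVaR_\alpha^{\hat{F}_{Y^L}}-\hat{Q}^{\pi,\hat{h}^+ +\eta}_{M_{P_s}}<v\}$, the overshoot of the estimator over the conditional population CVaR exceeds $v$ with probability at most $\delta/4$, and because this conditional bound is uniform in the value of $\hat{h}^+$ it also holds unconditionally. Finally, noting that on $G$ we have $\hat{Q}^{\pi,\hat{h}^+ +\eta}_{M_{P_s}}-Q^\pi_{M_P}\leq \hat{Q}^{\pi,\hat{h}^+ +\eta}_{M_{P_s}}-CVaR_\alpha^{\hat{F}_{Y^L}}$, I split over $G$ and $G^c$ by total probability:
\[
P(\hat{Q}^{\pi,\hat{h}^+ +\eta}_{M_{P_s}}-Q^\pi_{M_P}>v)\leq P(\hat{Q}^{\pi,\hat{h}^+ +\eta}_{M_{P_s}}-CVaR_\alpha^{\hat{F}_{Y^L}}>v)+P(G^c)\leq \tfrac{\delta}{4}+\tfrac{\delta}{4}\leq \delta .
\]

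The step I expect to be the main obstacle is the transfer of the lower-bound property to the estimated distribution on $G$: one must justify that Theorem \ref{thm:tight_lower_bound_for_v_and_q} (hence Theorem \ref{thm:simplification_bound_over_belief_distribution}) still applies when the CDF $P_s(R_{k:T}\leq\cdot)$ and the point-wise bound $g$ are replaced by their particle-belief-MDP counterparts, and that $\hat{F}_{Y^L}$ is genuinely a CDF so that sampling is well defined. Coupled to this is the need to handle the nested randomness, since $\hat{h}^+$ and the CVaR-estimation samples are both functions of drawn beliefs; this is exactly what forces the conditioning argument above and the allocation of the confidence budget into the two $\delta/4$ pieces.
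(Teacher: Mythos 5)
Your proof is correct, but it takes a leaner route than the paper's. The paper chains four terms via Theorem \ref{thm:prob_x_plus_y_bound}: it introduces an auxiliary distribution $F^\star(l)=\min(1,P_{M_{P_s}}(\bar{R}_{k:T}\leq l|\bar{b}_k,a_k,\pi)+g(l))$ built from the \emph{theoretical} $g$, together with a fictitious i.i.d.\ sample $\{\bar{R}^{Y^L}_{i,g}\}$ and its CVaR estimate, and then bounds $A_1$ (estimator vs.\ population CVaR of $\hat{F}_{Y^L}$, Brown's bound with allowance $-\lambda$), $A_2$ (population CVaR of $\hat{F}_{Y^L}$ vs.\ that of $F^\star$, via Theorem \ref{thm:h_guarantees} plus stochastic dominance and CVaR monotonicity --- exactly your good-event argument), $A_3$ (population vs.\ sample CVaR of $F^\star$, Brown's bound with allowance $+\lambda$), and $A_4$ (sample CVaR of $F^\star$ vs.\ $Q^\pi_{M_P}$, via the deterministic lower-bound property of Theorem \ref{thm:tight_lower_bound_for_v_and_q} packaged as Theorem \ref{thm:tight_lower_bound_guarantees_theoretical_g}), with the $\pm\lambda$ allowances canceling and each term charged $\delta/4$. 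You eliminate the auxiliary sample entirely: on the good event $G$ you push the deterministic chain $CVaR_\alpha^{\hat{F}_{Y^L}}\leq CVaR_\alpha(F^\star)\leq Q^\pi_{M_P}$ all the way through at the population level, so a single application of Brown's inequality (conditionally on the delta-belief draws, then integrated out --- a conditioning step the paper's $A_1$ performs only implicitly, since $\hat{F}_{Y^L}$ is itself random there) suffices, and your failure budget is $\delta/4+\delta/4=\delta/2\leq\delta$, strictly better than the paper's $\delta$. What the paper's four-way split buys is essentially nothing here --- the $A_3$/$A_4$ detour through samples of $F^\star$ is redundant once one observes, as you do, that the dominance $CVaR_\alpha^{\hat{F}_{Y^L}}\leq Q^\pi_{M_P}$ already holds deterministically on $G$; your version is the cleaner argument. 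Two caveats you share with the paper and correctly flag without fully resolving: the transfer of Theorems \ref{thm:simplification_bound_over_belief_distribution} and \ref{thm:tight_lower_bound_for_v_and_q} to the particle-belief MDP $M_P$ is asserted rather than re-derived (the paper does the same in Theorem \ref{thm:tight_lower_bound_guarantees_theoretical_g}), and the CDF-validity of $\hat{F}_{Y^L}$ is delicate because $\hat{h}^+$ is constant on half-open bins $(k_{i-1},k_i]$ and hence can fail right-continuity at bin endpoints, which neither you nor the paper addresses (it is harmless for the quantile integral defining CVaR, but worth a sentence).
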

\begin{proof}
	First, we construct the theoretical bound from Theorem \ref{thm:tight_lower_bound_for_v_and_q}, where the theoretical return $R_{k:T}$ is replaced by the particle belief return $\bar{R}_{k:T}$. Let g be the function from Theorem \ref{thm:tight_lower_bound_for_v_and_q} and denote
	$$
	P(\bar{R}^{Y^L}_{g}\leq l|\bar{b}_k,a_k,\pi)\triangleq min(1, P_{M_{P_s}}(\bar{R}_{k:T}\leq l|\bar{b}_k,a_k,\pi) + g(l)).
	$$
	Let $\bar{R}^{Y^L}_g \sim P(\bar{R}^{Y^L}_{g}\leq l|\bar{b}_k,a_k,\pi)$ be a theoretical random variable, and $\bar{R}^{Y^L}_{i,g} \overset{iid}{\sim} P(\bar{R}^{Y^L}_{g}\leq l|\bar{b}_k,a_k,\pi)$ a sample of size $N_\Delta$. The lower bound estimator using g is estimated by $\hat{Q}^{\pi,g}_{M_{P_s}}(\bar{b}_k,a_k,\alpha) \triangleq \hat{C}_\alpha(\{\bar{R}^{Y^L}_{i,g}\}_{i=1}^{N_\Delta}).$

	Recall that $P(\sup_{x\in \mathbb{R}}\{g(x)-\hat{h}^+(x)\}>\eta|\bar{b}_k,a_k,\pi)\leq \frac{\delta}{4}$ holds directly from Theorem \ref{thm:h_guarantees}, and denote $\lambda=2R_{max}(T-k+1)\sqrt{\frac{5ln(3/(\delta/4))}{N_\Delta\alpha}}$. From Theorem \ref{thm:prob_x_plus_y_bound} we get
	$$\begin{aligned}
		&P(\hat{Q}_{M_{P_s}}^{\pi,\hat{h}^+ +\eta}(\bar{b}_k,a_k,\alpha) - Q_{M_P}^\pi(\bar{b}_k,a_k,\alpha)>v|\bar{b}_k,a_k,\pi) \leq 
		\underbrace{P(\hat{Q}_{M_{P_s}}^{\pi,\hat{h}^+ +\eta}(\bar{b}_k,a_k,\alpha) - CVaR_\alpha (\bar{R}_{\hat{h}^+ +\eta}^{Y^L})>-\lambda|\bar{b}_k,\pi)}_{A_1} \\
		&+\underbrace{P(CVaR_\alpha (\bar{R}_{\hat{h}^+ +\eta}^{Y^L}) - CVaR_\alpha (\bar{R}^{Y^L}_{g}) > 0 |\bar{b}_k,a_k,\pi)}_{A_2} + \underbrace{P(CVaR_\alpha (\bar{R}^{Y^L}_{g}) - \hat{C}_\alpha(\{\bar{R}^{Y^L}_{i,g}\})>\lambda|\bar{b}_k,a_k,\pi)}_{A_3} \\
		&+\underbrace{P(\hat{C}_\alpha(\{\bar{R}^{Y^L}_{i,g}\}) - Q_{M_P}(\bar{b}_k,a_k,\alpha) > v|\bar{b}_k,a_k,\pi)}_{A_4}
	\end{aligned}$$
	From Theorem \ref{thm:brown_bounds} we get that $A_1\leq \frac{\delta}{4}$ and $A_3\leq \frac{\delta}{4}$, and from Theorem \ref{thm:tight_lower_bound_guarantees_theoretical_g} we get that $A_4\leq \frac{\delta}{4}$. Assume that $\forall x\in \mathbb{R},g(x)-\hat{h}^+(x)\leq \eta$ and let $x\in supp(R_{k:T})$. There exists i such that $x\in (k_{i-1}, k_i]$.
	$$\begin{aligned}
		&P(\bar{R}^{Y^L}_{\hat{h}^++\eta}\leq x)=\min(P(\bar{R}_{k:T}\leq x)+\hat{h}^+(x)+\eta,1) \geq \min(P(\bar{R}_{k:T}\leq x)+g(x),1)=P(\bar{R}^{Y^L}_{g} \leq x).
	\end{aligned}$$
	We got that $\bar{R}^{Y^L}_{\hat{h}^++\eta} \leq \bar{R}^{Y^L}_{g}$, so $CVaR_\alpha(\bar{R}^{Y^L}_{\hat{h}^++\eta}|\bar{b}_k,a_k,\alpha)\leq CVaR_\alpha(\bar{R}^{Y^L}_{g}|\bar{b}_k,a_k,\alpha)$ because CVaR is a coherent risk measure. 
	$$\begin{aligned}
		A_2&=\underbrace{P(CVaR_\alpha (\bar{R}^{Y^L}_{\hat{h}^++\eta}) - CVaR_\alpha (\bar{R}^{Y^L}_{g}) > 0 |\bar{b}_k,a_k,\pi, \sup_{x\in \mathbb{R}}\{g(x)-\hat{h}^+(x)\}>\eta)}_{\leq 1} \\
		&\times \underbrace{P(\sup_{x\in \mathbb{R}}\{g(x)-\hat{h}^+(x)\}>\eta|\bar{b}_k,a_k,\pi)}_{\leq \delta/4} \\
		&+
		P(CVaR_\alpha (\bar{R}^{Y^L}_{\hat{h}^++\eta} - CVaR_\alpha (\bar{R}^{Y^L}_{g}) > 0 |\bar{b}_k,a_k,\pi, \sup_{x\in \mathbb{R}}\{g(x)-\hat{h}^+(x)\}\leq\eta) \\
		&\times \underbrace{P(\sup_{x\in \mathbb{R}}\{g(x)-\hat{h}^+(x)\}\leq\eta|\bar{b}_k,a_k,\pi)}_{\leq 1} \\
		&\leq \frac{\delta}{4} + \underbrace{P(CVaR_\alpha (\bar{R}^{Y^L}_{\hat{h}^++\eta}) - CVaR_\alpha (\bar{R}^{Y^L}_{g}) > 0 |\bar{b}_k,a_k,\pi, \sup_{x\in \mathbb{R}}\{g(x)-\hat{h}^+(x)\}\leq\eta)}_{=0}=\frac{\delta}{4}
	\end{aligned}$$
	The first inequality holds from Theorem \ref{thm:h_guarantees} for $N_\Delta \geq -ln(\frac{((\delta/4)/I)/(T-1-k)}{2})\frac{2B^2}{\eta^2 / (T-1-k)^2}$, and the last equality holds because $CVaR_\alpha(R^{Y^L}_{\hat{h}^++\eta}|\bar{b}_k,a_k,\alpha)\leq CVaR_\alpha(R^{Y^L}_{g}|\bar{b}_k,a_k,\alpha)$ as we showed above. Now we get 
	$$P(\hat{Q}_{M_{P_s}}^{\pi,h+v}(\bar{b}_k,a_k,\alpha) - Q_{M_P}^\pi(\bar{b}_k,a_k,\alpha)>v|\bar{b}_k,\pi) \leq \delta.$$
\end{proof}

\begin{theorem}\label{thm:tight_lower_bound_guarantees_theoretical_g}
	Let $\alpha>0,\delta\in (0,1)$, g and $Y^L$ as in \ref{thm:tight_lower_bound_for_v_and_q}, and $\{R^{Y^L}_i\}_{i=1}^{N_\Delta}$ a sample from $Y^L$. Then for $v=\frac{2R_{max}(T-k+1)}{\alpha}\sqrt{\frac{ln(1/\delta)}{2N_\Delta}}$ it holds that$$
	P(\hat{C}_\alpha(\{R^{Y^L}_i\}_{i=1}^{N_\Delta}) - Q^\pi_{M_P}(\bar{b}_k, a_k,\alpha)>v)\leq \delta.
	$$
\end{theorem}
\begin{proof}
	$$\begin{aligned}
		&P(\hat{CVaR}_\alpha^{M_{P_s}}(Y^L|\bar{b}_k,a_k,\pi)-Q^\pi_{M_P}(\bar{b}_k,a_k,\alpha)>v) \\
		&=P(\hat{CVaR}_\alpha^{M_{P_s}}(Y^L|\bar{b}_k,a_k,\pi)-CVaR_\alpha^{M_{P_s}}(Y^L|\bar{b}_k,a_k,\pi)+CVaR_\alpha^{M_{P_s}}(Y^L|\bar{b}_k,a_k,\pi) \\
		&-Q^\pi_{M_P}(\bar{b}_k,a_k,\alpha)>v) \\
		&\overset{1}{\leq} P(\hat{CVaR}_\alpha^{M_{P_s}}(Y^L|\bar{b}_k,a_k,\pi)-CVaR_\alpha^{M_{P_s}}(Y^L|\bar{b}_k,a_k,\pi)>v) \\
		&+P(CVaR_\alpha^{M_{P_s}}(Y^L|\bar{b}_k,a_k,\pi)-Q^\pi_{M_P}(\bar{b}_k,a_k,\alpha)>0) \\
		&\overset{2}{=}P(\hat{CVaR}_\alpha^{M_{P_s}}(Y^L|\bar{b}_k,a_k,\pi)-CVaR_\alpha^{M_{P_s}}(Y^L|\bar{b}_k,a_k,\pi)>v) \\
		&=P(CVaR_\alpha^{M_{P_s}}(Y^L|\bar{b}_k,a_k,\pi) - \hat{CVaR}_\alpha^{M_{P_s}}(Y^L|\bar{b}_k,a_k,\pi)<v) \\
		&\overset{3}{\leq} \delta
	\end{aligned}
	$$
	$^1$ Theorem \ref{thm:prob_x_plus_y_bound} \\
	$^2$ $CVaR_\alpha^{P_s}(Y^L|\bar{b}_k,a_k,\pi)$ is a deterministic lower bound for $Q^\pi_{M_P}(\bar{b}_k,a_k,\alpha)$ (Theorem \ref{thm:tight_lower_bound_for_v_and_q}). \\
	$^3$ Theorem \ref{thm:brown_bounds}.
\end{proof}

\subsection{General proofs}

\begin{theorem}\label{thm:prob_x_plus_y_bound}
	Let $\{X_i\}_{i=1}^n$ be random variables and $\epsilon_i>0$ for $i\in \{1,\dots,n\}$, then $$P(\sum_{i=1}^n X_i>\sum_{i=1}^n \epsilon_i)\leq \sum_{i=1}^n P(X_i>\epsilon_i).$$
\end{theorem}
\begin{proof}
	$$
	\begin{aligned}
		P(\sum_{i=1}^n X_i>\sum_{i=1}^n \epsilon_i)&=1-P(\sum_{i=1}^n X_i\leq \sum_{i=1}^n \epsilon_i)\leq 1 - P(\forall i\in\{1\dots,n\},X_i\leq \epsilon_i) \\
		&=P(\cup_{i=1}^n \{X_i>\epsilon_i\})\leq \sum_{i=1}^n P(X>\epsilon_i)
	\end{aligned}
	$$
\end{proof}

\subsection{Algorithm 1}
The GENPF function in the following algorithm was presented in \citep{lim2023optimality}.

\begin{algorithm}[H]
	\caption{CVaR Value function estimation for policy $\pi$}
	\label{alg:qv_estimation}
	\textbf{Global Variables:$C,N_x,L,\pi$}
	
	\begin{algorithmic}[1]
		\Procedure{GENPF}{$\overline{b}$, $a$}
		\State \textbf{Input:} particle belief set $\bar{b} = \{(x_i, w_i)\}$, action $a$.
		\State \textbf{Output:} New updated particle belief set $\bar{b'} = \{(x'_i, w'_i)\}$, mean cost $\rho$.
		\State $x_0 \gets$ sample $x_i$ from $\overline{b}$ w.p. $w_i / \sum_i w_i$
		\State $z \gets G(x_0, a)$
		\For{$i = 1,...,C$}
		\State $x'_i, r_i \gets G(x_i, a)$
		\State $w'_i \gets w_i \cdot Z(z|a, x'_i)$
		\EndFor
		\State $\bar{b'} \gets \{(x'_i, w'_i)\}_{i=1}^{N_x}$
		\State $\rho \gets \sum_i w_i r_i / \sum_i w_i$
		\State \textbf{return} $\bar{b'}, \rho$
		\EndProcedure
	\end{algorithmic}
	
	\begin{algorithmic}[1]
		\Procedure{SampleReturn}{$\bar{b}, a, t$}
		\State \textbf{Input:} Particle belief set $\bar{b} = \{(x_i, w_i)\}$, action a, depth $t$, policy $\pi$.
		\State \textbf{Output:} A scalar that is an estimate of the return with depth t.
		
		\If{t=0}
		\State \textbf{return} 0
		\EndIf
		
		\State \textbf{$\bar{b'},\rho \leftarrow GenPF(\bar{b}, a)$}
		\State \textbf{return} $\rho$ + SampleReturn$(\bar{b'}, \pi(b'),t-1)$
		\EndProcedure
	\end{algorithmic}
	
	\begin{algorithmic}[1]
		\Procedure{CVaREstimate}{$samp,\alpha$}
		\State \textbf{Input:} Sample of scalars samp and $\alpha\in [0,1]$.
		\State \textbf{Output:} A scalar that is an estimate of $CVaR$.
		\State Denote by $z_1,\dots,z_n$ the sorted (ascending order) scalars in samp and $z_0=0$.
		\State \textbf{return:} $z_n-\frac{1}{\alpha}\sum_{i=0}^{n-1} (z_{i+1}-z_i)\Bigl(\frac{i}{n}-(1-\alpha) \Bigr)^+$
		\EndProcedure
	\end{algorithmic}
	
	\begin{algorithmic}[1]
		\Procedure{Estimate$V^\pi$}{$\bar{b}, \alpha$, t}
		\State \textbf{Input:} Particle belief set $\bar{b} = \{(x_i, w_i)\}$, depth $t$, policy $\pi$.
		\State \textbf{Output:} A scalar $\hat{V}^\pi(\bar{b})$ that is an estimate of $V^\pi(b)$.
		\State \textbf{return} EstimateQ$(\bar{b}, \pi(\bar{b}), t, \alpha)$
		\EndProcedure
	\end{algorithmic}
	
	\begin{algorithmic}[1]
		\Procedure{EstimateQ}{$\bar{b}, a, t, \alpha$}
		\State \textbf{Input:} Particle belief set $\bar{b} = \{(x_i, w_i)\}$, action $a$, depth $t$, policy $\pi$.
		\State \textbf{Output:} A scalar $\hat{Q}^{\pi}(\bar{b}, a)$ that is an estimate of $Q^\pi(b, a)$.
		\For{$i = 1, \dots, C$}
		\State $R_i \gets \text{SampleReturn}(\bar{b}, a, t)$
		\EndFor
		\State $samp \gets R_1,\dots, R_C$
		\State \textbf{return} CVaREstimate$(samp,\alpha)$
		\EndProcedure
	\end{algorithmic}
\end{algorithm}
\end{document}